\theoremstyle{plain}
\newtheorem{theorem}{Theorem}[section]
\newtheorem{proposition}[theorem]{Proposition}
\newtheorem{lemma}[theorem]{Lemma}
\newtheorem{corollary}[theorem]{Corollary}
\theoremstyle{definition}
\newtheorem{assumption}[theorem]{Assumption}
\theoremstyle{remark}
\newcommand{\Var}{\mathbb{V}}
\newcommand{\Cov}{\mathrm{cov}}
\title{Distillation Policy Optimization}
\author{%
  \name Jianfei Ma \\
  \texttt{matrixfeeney@gmail.com} \\
}
\begin{document}

\maketitle

\begin{abstract}
  While on-policy algorithms are known for their stability, they often demand a substantial number of samples. In contrast, off-policy algorithms, which leverage past experiences, are considered sample-efficient but tend to exhibit instability. Can we develop an algorithm that harnesses the benefits of off-policy data while maintaining stable learning? In this paper, we introduce an actor-critic learning framework that harmonizes two data sources for both evaluation and control, facilitating rapid learning and adaptable integration with on-policy algorithms. This framework incorporates variance reduction mechanisms, including a unified advantage estimator (UAE) and a residual baseline, improving the efficacy of both on- and off-policy learning. Our empirical results showcase substantial enhancements in sample efficiency for on-policy algorithms, effectively bridging the gap to the off-policy approaches. It demonstrates the promise of our approach as a novel learning paradigm.
\end{abstract}

\section{Introduction}
Deep model-free reinforcement learning (RL) has emerged as a promising solution for tackling a wide range of tasks autonomously. Its effectiveness relies on innovations in neural network adaptation, notably the use of replay buffers \cite{DBLP:journals/corr/MnihKSGAWR13}, which helps decorrelate experiences and facilitate more effective weight updates. Off-policy algorithms like DDPG \cite{DBLP:journals/corr/LillicrapHPHETS15}, TD3 \cite{DBLP:conf/icml/FujimotoHM18}, and SAC \cite{DBLP:conf/icml/HaarnojaZAL18} harness these techniques to achieve scalable performance in continuous control tasks. However, off-policy learning encounters the challenge known as the "deadly triad" \cite{DBLP:journals/corr/abs-1812-02648}, which can lead to instability when combining bootstrapping and function approximation. Conversely, on-policy algorithms collect extensive data under the same policy, providing more reliable statistics and greater stability. However, they remain sample-intensive. Recognizing this complementary nature of on-policy and off-policy methods, our objective is to design an algorithm that marries stability with sample efficiency by leveraging the strengths of both approaches.

\begin{figure}[t]
\vskip 0.2in
\begin{center}
\centerline{
\includegraphics[width=0.7\textwidth]{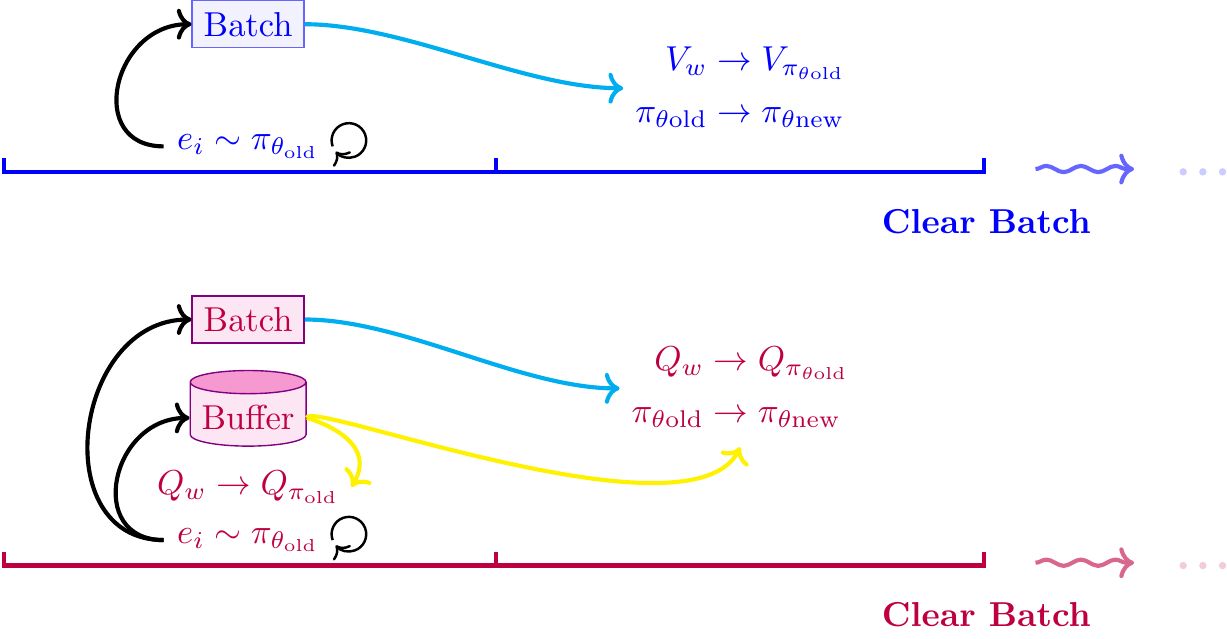}
}
\caption{Training procedure comparison between PPO and DPO (Top: PPO; Bottom: DPO), in which $e_{i} = (s_{i}, a_{i}, r_{i}, s_{i}')$, generated by interacting with environment, repeating until it reaches a maximum horizon $T$. \textbf{Batch} only stores data sampled from the current policy, and will be emptied after a training cycle, while \textbf{Buffer} refers to the relay buffer. \textbf{Cyan line} indicates data from the source for multiple uses, and \textbf{Yellow line} indicates data from the source for individual use. DPO leverages two data sources for both evaluation and control, whereas PPO exclusively relies on the on-policy data.}
\label{fig:1}
\end{center}
\vskip -0.2in
\end{figure}

However, directly applying off-policy techniques to on-policy algorithms can be challenging. In on-policy algorithms, the traditional use of the state value function limits the scope of policy gradients (PG). This contrasts with off-policy algorithms that often employ a broader class of policy gradients, such as the deterministic PG \cite{DBLP:conf/icml/SilverLHDWR14} and the reparameterized PG \cite{DBLP:conf/icml/HaarnojaZAL18}. These more versatile PG methods are common in off-policy settings but pose a challenge when adapting to on-policy algorithms. This incompatibility further renders the on-policy techniques such as GAE \cite{DBLP:journals/corr/SchulmanMLJA15} ineffective, which along with the value function as a baseline for variance reduction is crucial for performance. To leverage the advantages of off-policy gradients while retaining the benefits of on-policy methods, we introduce UAE, a unified technique that can accommodate any state-dependent baseline and free the choice of the bootstrapped value. Notably, UAE encompasses GAE as a strict special case. To fully unleash the potential of UAE, we also propose a residual baseline that enhances on-policy gradient estimate. Furthermore, this baseline can be seamlessly integrated into the off-policy gradient, mitigating instability and expediting the learning process. Unlike the traditional value baseline, it exhibits higher sample efficiency, as it is trained entirely off-policy.

With the necessary prerequisites and variance reduction mechanisms in place, our high-level algorithmic design is geared towards fully utilizing off-policy data for both policy evaluation and improvement. In the context of policy evaluation, algorithms often rely on the fitted Q-iteration (FQI) \cite{ernst2005tree} \cite{DBLP:conf/l4dc/FanWXY20}. This method employs mean squared error loss with stochastic targets. While the Monte Carlo estimate provides unbiased estimates, it can be susceptible to increased variance due to trajectory noise. On the other hand, the use of temporal difference targets, whether deterministic \cite{DBLP:journals/corr/MnihKSGAWR13} or stochastic \cite{DBLP:conf/icml/HaarnojaZAL18}, leverages replayed experiences to make predictions, which is amenable to the online learning with high sample efficiency but may introduce bias. We aim to harness the strengths of both approaches to enhance accuracy and generalization. In pursuit of an efficient solution, as illustrated in Figure \ref{fig:1}, we adopt a bi-level approach. During the environment interaction, it iteratively applies the Bellman operator using replayed experiences and subsequently performs batch updates employing the on-policy data. To improve the policy, we leverage UAE to estimate the on-policy gradient, which is then interpolated with an optimistic off-policy objective, promoting sample efficiency, stable learning, and inherent exploration.

In this paper, we introduce a general learning framework called Distillation Policy Optimization (DPO), which can be readily applied to various on-policy learners, consistently outperforming its on-policy counterpart and even the state-of-the-art off-policy algorithms on continuous benchmark tasks. Our contributions can be summarized in three main aspects:
\begin{itemize}
\item We extend GAE to UAE, offering greater flexibility with different choices of baseline and critic functions.
\item We propose a sample-efficient baseline, not only yielding a superior on-policy gradient estimator accompanied with UAE but also effectively facilitating off-policy learning when incorporated into the off-policy gradient.
\item We present a general framework, applicable to a wide range of on-policy gradient estimators, including A2C, TRPO, and PPO, with full engagement of off-policy data for both policy evaluation and improvement.
\end{itemize}
Throughout the paper, we provide comprehensive theoretical insights and empirical results that confirm the effectiveness of DPO, establishing it as a strong competitor in the field.
    
\section{Preliminaries}
\subsection{Notation}
We consider an infinite-horizon discounted MDP, which formulates how the agent interacts with the environment dynamics. Reinforcement learning aims to solve a sequential problem. Being at the state $s_{t} \in \mathcal{S}$, the agent takes an action $a_{t} \in \mathcal{A}$ according to some policy $\pi$, which assigns a probability $\pi(a_{t} | s_{t})$ to the choice. After the environment receives $a_{t}$, it emits a reward $r_{t}$, and sends the agent to a new state $s_{t+1} \sim P(s_{t+1} | s_{t}, a_{t})$. Following this procedure, we can collect a trajectory $\tau = (s_{0}, a_{0}, s_{1}, a_{1}, \dots)$, where $s_{0}$ is sampled from the distribution of the initial state $\rho_{0}$. The ultimate goal of the agent is to maximize the expected discounted reward $\eta (\pi) = \mathbb{E}_{\tau}\left[\sum\limits_{t=0}^{\infty}\gamma^{t}r_{t}  \right]$, with a discount factor $\gamma \in [0, 1)$. We also define the unnormalized discounted state visitation distribution (improper) as $\rho_{\pi} = \sum\limits_{t=0}^{\infty} \gamma^{t} P(s_{t} = s | \rho_{0}, \pi)$, and $d_{\pi}(s, a) = \rho_{\pi}(s) \pi(a | s)$, corresponding to the state-action one. Whenever noticed, the policy will be parameterized as $\pi_{\theta}$, sometimes abbreviated as $\pi$ for simplicity's sake. Thus the objective turns out to be $\eta (\theta)$. We declare that we are using the $l^{2}$-norm variance of a random vector $X$, that is, $\Var[X] = \mathbb{E}[\| X - \mathbb{E}[X] \|_{2}^{2}]$.

\subsection{Policy Gradient}
The policy gradient \cite{DBLP:conf/nips/SuttonMSM99} can be expressed as:
\begin{equation}
  \label{eq:1}
  \nabla_{\theta} \eta(\theta) = \mathbb{E}_{s \sim \rho_{\pi_{\theta}}, a \sim \pi_{\theta}} \bigl[\nabla_{\theta}\log{\pi_{\theta}(a | s)} Q^{\pi_{\theta}}(s, a) \bigr].
\end{equation}
In practice, incorporating a state-dependent baseline function $b(s)$ can not only reduce the variance drastically but also not intervene with the expectation \cite{DBLP:conf/nips/GreensmithBB01}. Combine it, we have:
\begin{equation}
  \label{eq:2}
  \begin{aligned}
    \nabla_{\theta} \eta(\theta) & = \mathbb{E}_{s \sim \rho_{\pi_{\theta}}, a \sim \pi_{\theta}} \bigl[\nabla_{\theta}\log{\pi_{\theta}(a | s)} (Q^{\pi_{\theta}}(s, a) - b(s)) \bigr] \\
                                 & = \mathbb{E}_{s \sim \rho_{\pi_{\theta}}, a \sim \pi_{\theta}} \bigl[\nabla_{\theta}\log{\pi_{\theta}(a | s)} A^{\pi_{\theta}, b}(s, a) \bigr].
  \end{aligned}
\end{equation}

\subsection{Optimal Baseline}
The optimal baseline can be derived by obtaining the fixed point of the variance of Equation \ref{eq:2}:
\begin{equation}
  \label{eq:3}
  b^{\star}(s) = \frac{\mathbb{E}_{\pi_{\theta}}[u_{\theta}(s, a)^{\top}u_{\theta}(s, a) Q^{\pi_{\theta}}(s, a) | s]}{\mathbb{E}_{\pi_{\theta}}[u_{\theta}(s, a)^{\top}u_{\theta}(s, a) | s]},
\end{equation}
where $u_{\theta} (s, a) = \nabla_{\theta}\log{\pi_{\theta}(a | s)}$. Its derivation can be found in Appendix \ref{sec:opt-base}.

However, this baseline is rarely used in practice, because it is extremely demanding for computing the $u_{\theta}(s_{t}, a_{t})$ for each time step of the available data.
\subsection{Distributional Reinforcement Learning}
Distributional reinforcement learning \cite{DBLP:conf/icml/BellemareDM17} abstracts the appraisal $Q^{\pi}(s, a)$ as a distribution $Z^{\pi}(s, a)$, whose expectation corresponds to the actual value of $Q$. In this perspective, the Bellman expectation operator is reloaded as:
\begin{equation}
  \label{eq:4}
  \begin{aligned}
  \mathcal{T}^{\pi}Z(s, a) & \stackrel{\text{D}}{:=} r(s, a) + \gamma Z(s', a') \\
  s' & \sim P(\cdot | s, a), a' \sim \pi(\cdot | s'),
  \end{aligned}
\end{equation}
where equality is held under probability laws.

\section{Unified Advantage Estimator}
\label{section:3}
To combat the noise arising from the long-delayed signals, GAE reduces the temporal spread by letting $b(s) = V(s)$ and then shrinks the long-term effect with a steeper parameter $\lambda$. Let $\delta_{t}^{V} = r_{t} + \gamma V(s_{t + 1}) - V(s_{t})$, we have:
\begin{equation}
  \label{eq:6}
  \hat{A}_{t}^{\text{GAE}(\gamma, \lambda)} = \sum\limits_{l = 0}^{\infty}(\gamma \lambda)^{l} \delta_{t + l}^{V}.
\end{equation}
One major limitation is that we don't have more flexible choices except we can stick both of the bootstrapped value function and the baseline function to the value function $V$. This restricted applicability makes it difficult to improve the baseline function for further variance reduction and miss out on a potentially broader class of policy gradients that rely on the state-action value function. We would therefore aim to relax it for both parts. Analogously, we can define a new TD residual $\delta_{t} = r_{t} + \gamma \Psi_{t+1} - b(s_{t})$, where the bootstrapped value function $\Psi_{t+1}$ can either be $Q(s_{t+1}, a_{t+1})$ or $V(s_{t+1})$, and $b(s)$ is an arbitrary state-dependent function. The core idea is that we can introduce a correction term $z_{t} = \Psi_{t} - b(s_{t})$ to make the $n$-step estimator unbiased when the true value of $\Psi_{t}$ is attained
\begin{proposition}
  \label{prop:unbiased}
  For any $n \in \mathbb{N}^{+}$, $A_{t}^{(n)}$ is an unbiased estimator of $A_{t}^{\pi, b}$, where
  \begin{equation}
    \label{eq:5}
    \begin{aligned}
    A_{t}^{(n)} & = \delta_{t} + \sum\limits_{l=1}^{n-1} \gamma^{l}(\delta_{t + l} - z_{t + l}) \\
    & = r_{t} + \gamma r_{t + 1} + \dots + \gamma^{n - 1} r_{t + n - 1} + \gamma^{n} \Psi_{t + n} - b_{t}.
    \end{aligned}
  \end{equation}
\end{proposition}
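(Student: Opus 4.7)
My plan is to prove Proposition \ref{prop:unbiased} in two stages: first an algebraic telescoping identity that reconciles the two expressions for $A_{t}^{(n)}$, and then a Bellman-style expectation argument that yields unbiasedness.

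First I would verify the second equality in the statement by direct telescoping. Substituting $\delta_{t+l} = r_{t+l} + \gamma \Psi_{t+l+1} - b(s_{t+l})$ and $z_{t+l} = \Psi_{t+l} - b(s_{t+l})$, I get $\delta_{t+l} - z_{t+l} = r_{t+l} + \gamma \Psi_{t+l+1} - \Psi_{t+l}$, so the $b$'s cancel inside the sum. Collecting powers of $\gamma$, the $\Psi$-terms telescope: $\sum_{l=1}^{n-1} \gamma^{l+1} \Psi_{t+l+1} - \sum_{l=1}^{n-1} \gamma^{l} \Psi_{t+l} = \gamma^{n} \Psi_{t+n} - \gamma \Psi_{t+1}$, and the $\gamma \Psi_{t+1}$ piece cancels with the one coming from $\delta_{t}$. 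What remains is $\sum_{l=0}^{n-1} \gamma^{l} r_{t+l} + \gamma^{n} \Psi_{t+n} - b(s_{t})$, matching the second line. The virtue of the $z_{t+l}$ correction is thereby made explicit: it kills the intermediate $\Psi_{t+l}$ terms that a naive $n$-step sum of $\delta_{t+l}$'s would introduce.

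Next, I take expectations over the trajectory $(s_{t+1}, a_{t+1}, \ldots, s_{t+n}, a_{t+n})$ sampled under $P$ and $\pi$, conditioned on $(s_{t}, a_{t})$. Using the second (telescoped) form, linearity gives
\begin{equation*}
\mathbb{E}\bigl[A_{t}^{(n)} \mid s_{t}, a_{t}\bigr] = \mathbb{E}\Bigl[\sum_{l=0}^{n-1}\gamma^{l} r_{t+l} \Bigm| s_{t}, a_{t}\Bigr] + \gamma^{n}\,\mathbb{E}\bigl[\Psi_{t+n} \mid s_{t}, a_{t}\bigr] - b(s_{t}).
\end{equation*}
Here ``true value of $\Psi_{t}$'' means either $\Psi_{t+n} = V^{\pi}(s_{t+n})$ or $\Psi_{t+n} = Q^{\pi}(s_{t+n}, a_{t+n})$. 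In the $Q$-case I apply the tower property to first integrate out $a_{t+n} \sim \pi(\cdot \mid s_{t+n})$, which reduces it to the $V$-case via $\mathbb{E}_{a \sim \pi}[Q^{\pi}(s,a) \mid s] = V^{\pi}(s)$. The $n$-step Bellman expansion of $Q^{\pi}$ then gives
\begin{equation*}
\mathbb{E}\Bigl[\sum_{l=0}^{n-1}\gamma^{l} r_{t+l} + \gamma^{n} V^{\pi}(s_{t+n}) \Bigm| s_{t}, a_{t}\Bigr] = Q^{\pi}(s_{t}, a_{t}),
\end{equation*}
so $\mathbb{E}[A_{t}^{(n)} \mid s_{t}, a_{t}] = Q^{\pi}(s_{t}, a_{t}) - b(s_{t}) = A_{t}^{\pi, b}(s_{t}, a_{t})$, as required.

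I do not anticipate a genuine obstacle here; the proposition is essentially a rebranding of the standard $n$-step advantage with a flexible bootstrap. The only subtlety worth flagging is the bookkeeping on which $\sigma$-algebra the expectation conditions on, so that the tower property can be applied cleanly in the $Q$-bootstrapped case; once that is stated the rest reduces to the telescoping identity plus the Bellman expansion. I would state both cases of $\Psi$ explicitly, since the whole point of introducing $z_{t+l}$ is to permit the $Q$-version, which in turn unlocks the broader class of policy gradients advertised in the surrounding text.
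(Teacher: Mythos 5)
Your proof is correct and follows essentially the same route as the paper's: a telescoping identity to reconcile the two forms of $A_{t}^{(n)}$, followed by taking expectations and invoking the Bellman equation for the true $\Psi$. If anything, you are more careful than the paper in pinning down the conditioning on $(s_{t}, a_{t})$ and in handling the $Q$-bootstrapped case via the tower property, which makes the conclusion $\mathbb{E}[A_{t}^{(n)} \mid s_{t}, a_{t}] = Q^{\pi}(s_{t},a_{t}) - b(s_{t})$ come out cleanly in both cases.
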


Be that as it may, in practice, $\Psi_{t}$ is the approximate value and thus the $A_{t}^{(n)}$ is referenced as $\hat{A}_{t}^{(n)}$. Similarly, we can introduce a steeper parameter $\lambda$ to shrink the long-term effect, by telescoping on which, we would arrive at the unified advantage estimator (UAE):

\begin{equation}
  \label{eq:6}
  \hat{A}_{t}^{\text{UAE}(\gamma, \lambda)} = \delta_{t} + \sum\limits_{l = 1}^{\infty}(\gamma \lambda)^{l} (\delta_{t + l} - z_{t + l}).
\end{equation}

\begin{wrapfigure}{r}{0.5\textwidth}
\begin{minipage}{0.5\textwidth}
\begin{algorithm}[H]
  \small
   \caption{Unified Advantage Estimator}
   \label{alg:UAE}
   \begin{algorithmic}
   \STATE {\bfseries Input:} $\gamma$, $\lambda$, Batch size $T$, rewards $r$, Q values $Q$, baselines $b$, dones $d$
   \STATE \textbf{Initialize} $\text{uae} = 0$
   \FOR{$t=T-1, T-2, \dots, 0$}
   \STATE $\delta = r_{t} + \gamma Q_{t + 1} (1 - d_{t + 1}) - b_{t}$
   \STATE $z = Q_{t} - b_{t}$
   \STATE $\text{discounted uae} = \gamma \lambda (1 - d_{t + 1}) \text{uae}$
   \STATE $A_{t} = \delta + \text{discounted uae}$
   \STATE $\text{uae} = (\delta - z) + \text{discounted uae}$
   \ENDFOR
   \STATE \textbf{return} advantages $A$
\end{algorithmic}
\end{algorithm}
\end{minipage}
\end{wrapfigure}

The intuition behind this estimator is that we correct any TD residual term one step beyond $t$ to the TD error $\delta_{t + l} - z_{t + l} = r_{t + l} + \gamma \Psi_{t + l + 1} - \Psi_{t + l}, l \geq 1$, and leave the first term $\delta_{t} = r_{t} + \gamma \Psi_{t + 1} - b_{t}$ to have a potential lower variance dependent on $b_{t}$. It is worth noting that if we set $\Psi_{t+1} = V(s_{t+1})$ and $b(s_{t}) = V(s_{t})$, then we have GAE exactly. However, its usefulness extends beyond that, as we have the flexibility to choose any state-dependent baseline and extend it to the state-action value function. When combined with a high-quality baseline, it can effectively reduce both the instantaneous variance resulting from sampling from the state and policy distribution, as well as the variance arising from sampling a trajectory $\tau$. A truncated version of such an estimator is summarized in Algorithm \ref{alg:UAE}.

\paragraph{Connection to SARSA($\lambda$)}
TD($\lambda$) updates the value function towards the $\lambda$-return $G_{t}^{\lambda, V}$, for which a useful identity is often used to establish the connection between the forward- and backward-view\cite{DBLP:journals/ml/Sutton88} \cite{Sutton1998}:
\begin{equation}
  \label{eq:23}
  \begin{aligned}
    G_{t}^{\lambda, V} & = (1 - \lambda) \sum\limits_{n=1}^{\infty}\lambda^{n - 1} G_{t}^{(n), V} \\
                 & = V(s_{t}) + \sum\limits_{n=0}^{\infty}(\gamma\lambda)^{n} \delta_{t + n}^{V},
  \end{aligned}
\end{equation}
where $G_{t}^{(n), V} = \sum\limits_{k = 0}^{n - 1} \gamma^{k}r_{t + k} + \gamma^{n}V(s_{t + n})$ is the $n$-step return.

It is evident that GAE is a variance-reduced $\lambda$-return with a baseline $V$, represented as $\hat{A}_{t}^{\text{GAE}(\gamma, \lambda)} = G_{t}^{\lambda, V} - V_{t}$. Similarly, we can interpret the UAE in the same way -- UAE is a variance-reduced $\lambda$-return with an arbitrary baseline $b$, denoted as $\hat{A}_{t}^{\text{UAE}(\gamma, \lambda)} = G_{t}^{\lambda, Q} - b_{t}$, since this identity also holds for SARSA($\lambda$) (see Appendix \ref{uae-sarsa} for derivation). 

To fully exploit the potential of UAE, in the next section, our goal is to learn a more adaptable baseline that can be seamlessly integrated into the off-policy gradient at a later stage.

\section{Residual Baseline}
\label{section:4}
In practice, obtaining the $u_{\theta}(s, a)$ can be computationally demanding, especially when dealing with a large batch of data. Consequently, achieving the optimal baseline (Equation \ref{eq:10}) becomes challenging. Moreover, in practical scenarios, the value function used as a baseline does not accurately capture its true value due to estimation errors \cite{DBLP:conf/iclr/IlyasESTJRM20}. We aim to alleviate the computational overhead and enhance the quality of the baseline. By observing the structure of the optimal baseline, we define:
\begin{equation}
  \label{eq:9}
  \begin{aligned}[c]
    \tilde{\pi}(a | s) = \frac{\pi(a | s) u_{\theta}^{\top}u_{\theta}}{\mathbb{E}_{\pi}[u_{\theta}^{\top}u_{\theta} | s]},
  \end{aligned}
  \qquad\qquad
  \begin{aligned}[c]
    l(s, a) = \frac{u_{\theta}^{\top}u_{\theta}}{\mathbb{E}_{\pi}[u_{\theta}^{\top}u_{\theta} | s]}.
  \end{aligned}
\end{equation}

Then we can rewrite the optimal baseline as:
\begin{equation}
  \label{eq:10}
  b^{\star}(s) = \mathbb{E}_{\tilde{\pi}}[Q^{\pi}(s, a) | s].
\end{equation}

Using the importance sampling, we have:
\begin{equation}
  \label{eq:11}
  \begin{split}
  b^{\star}(s) & = \mathbb{E}_{\pi}[\frac{\tilde{\pi}(a | s)}{\pi(a | s)}Q^{\pi}(s, a) | s] \\
           & = \mathbb{E}_{\pi}[l(s, a) Q^{\pi}(s, a) | s].
  \end{split}
\end{equation}

We would directly parameterize the $l(s, a)$, or, alternatively, introduce a ``residual'' term $
\mathrm{r}_{\phi}(s, a)$ to reformulate the $l(s, a)$ as $1 + \mathrm{r}_{\phi}(s, a)$, since $\mathbb{E}_{\pi}[l(s, a)] = 1$. This transformation would induce a symmetric behavior for $\mathrm{r}_{\phi}(s, a)$, which is favored by the neural network. In the hope that $Q_{w}$ is a good approximation to $Q^{\pi}$, it translates to approximate the Equation \ref{eq:11} as:

\begin{equation}
  \label{eq:12}
  b_{\phi}^{\pi}(s) = \mathbb{E}_{\pi}[(1 + \mathrm{r}_{\phi}(s, a)) Q_{w}(s, a) | s].
\end{equation}

In practice, we can sample $m$ actions $a_{1}, a_{2}, \dots, a_{m}$ from the $\pi(\cdot | s)$ to approximate the outer expectation.

We then construct a magnitude-free objective to represent the amount of the variance associated with the approximate baseline, which is more tractable and easier to optimize \cite{DBLP:conf/iclr/GuLGTL17} \cite{DBLP:conf/icml/MnihG14}:
\begin{equation}
  \label{eq:14}
  \mathcal{J}(\phi) = \mathbb{E}_{d_{\beta}}[(Q_{w}(s, a) - b_{\phi}^{\pi}(s))^{2}],
\end{equation}

where $d_{\beta}$ is a mixture of the joint distributions of the past policy sequences. This objective has a wider coverage of past experiences than solely relying on the on-policy data, which is critical to reducing the variance of both on- and off-policy gradient. It is this reason that allows us to make the most of the advantages of the two kinds.

\section{Practical Algorithm}
In this section, we propose a sample-efficient algorithm that combines on- and off-policy data, leveraging the variance reduction techniques discussed earlier. Our approach begins with policy evaluation, updating the critic not only during the environment interaction using replayed experiences but also through batch updates. Then we interpolate the on-policy gradient with an optimistic objective that incorporates the residual baseline for stable and efficient learning.

\subsection{Policy Evaluation: Distributional Regression}
\label{section:5.1}
The mean-squared error is a commonly used proxy for policy evaluation in DRL. It can be interpreted as a point estimate of a Gaussian distribution $\mathcal{N}(\mu_{w}(s, a), \sigma^{2})$ using maximum likelihood with a fixed variance based on the TD target \cite{DBLP:conf/icml/AbbasSTW20}. However, this approach overlooks the distributional nature of the TD target. When both parts are modeled as Gaussian distributions with a fixed variance, minimizing the KL divergence leads to least-squares regression as well.

In a distributional perspective, it offers a richer set of predictions by incorporating input-dependent variance to accommodate uncertainty. This adaptive Mahalanobis reweighting, driven by the $\sigma_{w}$ terms, penalizes high-noise input regions and enhances representation learning \cite{DBLP:journals/corr/abs-2204-10256}. Leveraging this representation, we develop an efficient update scheme based on the analytical Gaussian modeling, conducted during the environment interaction using the off-policy data. This scheme, akin to the FQI [2] used in TD3 or SAC, shares the same time complexity but does not require excessive samples from the distributional critic (see Appendix \ref{dist-up}). The general update rule can be expressed as:
\begin{equation}
  \label{eq:17}
  \mathcal{J}(w) = \mathbb{E}_{(s, a, r, s') \sim \mathcal{D}}[D_{\text{KL}}(r + \gamma Z_{\bar{w}}(s', a') || Z_{w}(s, a))],
\end{equation}
where $a' \sim \pi(\cdot | s')$, and $Z_{\bar{w}}$ is the target network, commonly used in the off-policy learning to stabilize the neural network.

Once the rollouts are collected, we sample multiple instances from the distributional critic to create a vector $\vv{Z}$ of length $l$ for each $(s, a)$ in the batch $\mathcal{B}$. We then compute a collection of advantages by UAE as $\vv{A}$ and replenish the baseline to construct a target vector $\vv{U} = \vv{A} + \mathbf{1} \cdot b$, where $\mathbf{1}$ is an all-one vector of length $l$. As the entropy term in the KL divergence does not provide gradient information, we minimize the empirical cross-entropy as follows:
\begin{equation}
  \label{eq:28}
  \hat{\mathcal{J}}(w) = \mathbb{E}_{(s, a) \sim \mathcal{B}}[- \frac{1}{l} \mathbf{1}^{\top} \log{P_{w}(\vv{U} | s, a)}].
\end{equation}
This approach effectively harnesses the benefits of both off-policy learning's high sample efficiency and on-policy learning's informative target estimates. Additionally, the estimated advantage is a key component of the on-policy gradient, as discussed later.
\subsection{Policy Improvement: Advantageous Interpolation}
\label{section:5.2}
We aim to integrate the off-policy policy gradient with the on-policy policy gradient to enable faster learning, boost sample efficiency, and encourage exploration. One solution for this integration involves introducing an interpolating parameter $w \in [0, 1]$ to directly adjust both gradients, as suggested by IPG \cite{DBLP:conf/nips/GuLTGSL17}:
\begin{equation}
  \label{eq:13}
  \omega \mathbb{E}_{\rho^{\pi_{\theta}}, \pi_{\theta}}[A^{\pi_{\theta}, V}] + (1 - \omega) \mathbb{E}_{\rho^{\beta}, \pi_{\theta}}[Q_{w}].
\end{equation}
While approximating a value function $V$ for advantage calculation, it also maintains an off-policy critic $Q_{w}$. This separate estimation can pose a problem as it leads to varying magnitudes between the two types of policy gradients. Even when working with off-policy data, the estimation of $Q_{w}$ can still be prone to errors and noise. Therefore, we incorporate the residual baseline to mitigate the noise and only update actions that provide an advantage. This approach can be efficiently optimized with trust region methods:
\begin{equation}
  \label{eq:19}
  \mathcal{J}(\theta) = \omega \mathbb{E}_{\rho^{\pi_{\theta_{\text{old}}}}, \pi_{\theta}}[A^{\pi_{\theta_{\text{old}}}, b_{\phi}^{\pi_{\theta_{\text{old}}}}}] + (1 - \omega) \mathbb{E}_{\rho^{\beta}, \pi_{\theta}} [A^{+} - \alpha \log{\pi_{\theta}(a | s)}],
\end{equation}
where $A^{+} = (Q_{w} - b_{\phi}^{\pi_{\theta}})^{+}$ is the positive advantage, of which $(x)^{+}$ stands for $\max(x, 0)$. To enhance exploration and prevent premature convergence, we include an entropy bonus controlled by parameter $\alpha$ \cite{DBLP:conf/icml/MnihBMGLHSK16}. This bonus improves the exploration of the off-policy data. While our off-policy gradient resembles the likelihood ratio gradient estimator of SAC, the purpose of the entropy term in our approach differs. It is not considered as part of the task-specific reward for the agent.

Without the cancellation of the negative part, when an action is perceived as unfavorable, it steers away from that choice and increases the likelihood of exploring unknown actions. This introduces a risk of making completely wrong decisions. However, the process we employ, which eliminates negative aspects and emphasizes positive signals through the residual baseline, prevents detrimental updates and ensures that the movement direction is always advantageous.

\begin{algorithm}[t]
  \scriptsize
   \caption{Distillation Policy Optimization}
   \label{alg:DPO}
\begin{algorithmic}
   \STATE \textbf{Initialize parameters} $w, \bar{w}, \theta, \phi$
   \FOR{each iteration}
   \FOR{each environment step}
   \STATE Execute $a_{t} \sim \pi(\cdot | s_{t})$, observe reward $r_{t}$ and next state $s_{t + 1}$
   \STATE Store transition $(s_{t}, a_{t}, r_{t}, s_{t + 1})$ to the replay buffer $\mathcal{D}$ and the batch $\mathcal{B}$
   \STATE Sample mini-batch of $n$ transitions $(s, a, r, s')$ from $\mathcal{D}$
   \STATE Update critic with $\nabla_{w}\mathcal{J}(w)$ (Equation \ref{eq:17})
   \STATE Update target network $\bar{w} \leftarrow \tau w + (1 - \tau) \bar{w}$
   \ENDFOR
   \FOR{each baseline update}
   \STATE Sample mini-batch of $n$ transitions $(s, a, r, s')$ from $\mathcal{D}$
   \STATE Update baseline with $\nabla_{\phi}\mathcal{J}(\phi)$ (Equation \ref{eq:14})
   \ENDFOR
   \STATE Calculate $\hat{A}$ for $\mathcal{B}$ by Algorithm \ref{alg:UAE}
   \FOR{each epoch}
   \FOR{each policy update}
   \STATE Sample mini-batch from $\mathcal{B}$, and compute $\mathcal{J}_{\text{on-policy}}(\theta)$
   \STATE Sample mini-batch of $n$ transitions $(s, a, r, s')$ from $\mathcal{D}$, and compute $\mathcal{J}_{\text{off-policy}}(\theta)$
   \STATE Update policy with $\nabla_{\theta}\mathcal{J}(\theta)$ (Equation \ref{eq:19})
   \STATE Update critic with $\nabla_{w}\hat{\mathcal{J}}(w)$ (Equation \ref{eq:28})
   \STATE Update target network $\bar{w} \leftarrow \tau w + (1 - \tau) \bar{w}$   
   \ENDFOR
   \ENDFOR
   \STATE Reset the batch $\mathcal{B}$
   \ENDFOR
\end{algorithmic}
\end{algorithm}

\section{Theoretical Analysis}
In this section, we present a theoretical analysis of the proposed methods. We explore three key questions: \textbf{(1)} How does UAE outperform GAE, and what's their relationship? \textbf{(2)} Can the residual baseline effectively minimize variance in the off-policy gradient? \textbf{(3)} What advantages does our interpolated policy gradient offer? These questions form the foundation of our theoretical investigation.
\begin{assumption}
  \label{assmp:3}
  $\sup_{s, a}|Q_{w}|$ is bounded by some constant $M$.
\end{assumption}
\begin{assumption}
  \label{assmp:1}
  $\sup_{s, a} \| \nabla_{\phi}\mathrm{r}_{\phi}\|$ is bounded by some constant $G$.
\end{assumption}
\begin{assumption}
  \label{assmp:2}
  $\sup_{s, a}|\mathrm{r}_{\phi}|$ is bounded by some constant $K$.
\end{assumption}

\begin{theorem}
  \label{thm:diff-uae-gae}
  For any choice of $\Psi$ being either $Q^{\pi}$ or $V^{\pi}$, and $b$ state-dependent, then
  \begin{equation}
    \label{eq:25}
    \begin{aligned}
      \mathbb{V}_{s_{t}, a_{t}}[u_{\theta} A_{t}^{\text{UAE}(\gamma, \lambda)}] - \mathbb{V}_{s_{t}, a_{t}}[u_{\theta} A_{t}^{\text{GAE}(\gamma, \lambda)}] = & \overbrace{\mathbb{E}_{s_{t}, a_{t}}\Bigl[u_{\theta}^{\top} u_{\theta} \biggl(\sum\limits_{l=0}^{\infty} (\gamma\lambda)^{2l} \cdot \gamma^{2} (1 - \lambda^{2})\mathbb{E}_{s_{t + l + 1}, a_{t + l + 1}}\bigl[(\Psi - V^{\pi})^{2}\bigr]\biggr)\Bigr]}^{irreducible} + \\
                                                                                                                                                              & \underbrace{\mathbb{E}_{s_{t}, a_{t}}[u_{\theta}^{\top}u_{\theta} (b^{2} - {V^{\pi}}^{2} - 2 Q^{\pi}(b - V^{\pi}))]}_{reducible}.
    \end{aligned}
  \end{equation}
\end{theorem}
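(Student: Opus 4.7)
The plan is to apply the law of total variance conditioning on $(s_t, a_t)$. Since $u_\theta = u_\theta(s_t, a_t)$ factors out of any conditional variance, we obtain
\begin{equation*}
\mathbb{V}[u_\theta A] = \mathbb{E}_{s_t, a_t}[u_\theta^\top u_\theta\, \mathbb{V}[A \mid s_t, a_t]] + \mathbb{V}_{s_t, a_t}[u_\theta\, \mathbb{E}[A \mid s_t, a_t]]
\end{equation*}
for both $A = A_t^{\text{UAE}}$ and $A = A_t^{\text{GAE}}$. By Proposition~\ref{prop:unbiased}, the conditional means are $A^{\pi, b}(s_t, a_t)$ and $A^{\pi, V^\pi}(s_t, a_t)$ respectively, and both share the common marginal mean $\mathbb{E}[u_\theta Q^\pi]$ because a state-dependent baseline never biases the score. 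Hence the difference of the ``between'' terms collapses to $\mathbb{E}[u_\theta^\top u_\theta((A^{\pi, b})^2 - (A^{\pi, V^\pi})^2)]$, and expanding $(Q^\pi - b)^2 - (Q^\pi - V^\pi)^2$ yields exactly the reducible contribution $b^2 - {V^\pi}^2 - 2 Q^\pi(b - V^\pi)$.

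For the difference of the conditional (``within'') variances, I would invoke the $\lambda$-return identity of Section~\ref{section:3} to write $A_t^{\text{UAE}} = G_t^{\lambda, \Psi} - b_t$ and $A_t^{\text{GAE}} = G_t^{\lambda, V^\pi} - V^\pi_t$. Since $b_t$ and $V^\pi_t$ are constant given $(s_t, a_t)$, the target is $\mathbb{V}[G_t^{\lambda, \Psi} \mid s_t, a_t] - \mathbb{V}[G_t^{\lambda, V^\pi} \mid s_t, a_t]$. Writing $\Delta_{t+n} := \Psi_{t+n} - V^\pi_{t+n}$ and $D_t := G_t^{\lambda, \Psi} - G_t^{\lambda, V^\pi} = (1 - \lambda) \sum_{n\geq 1} \lambda^{n-1} \gamma^n \Delta_{t+n}$, the difference is $\mathbb{V}[D_t \mid s_t, a_t] + 2\,\mathrm{Cov}(G_t^{\lambda, V^\pi}, D_t \mid s_t, a_t)$. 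The property $\mathbb{E}[\Delta_{t+n} \mid s_{t+n}] = 0$ (trivial for $\Psi = V^\pi$; the defining property of $A^\pi$ for $\Psi = Q^\pi$) kills the cross-terms in $\mathbb{V}[D_t]$ by iterated conditioning, leaving $\mathbb{V}[D_t \mid s_t, a_t] = (1-\lambda)^2 \sum_{n\geq 1} \lambda^{2(n-1)} \gamma^{2n} \mathbb{E}[\Delta_{t+n}^2 \mid s_t, a_t]$.

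The main obstacle is the diagonal covariance identity
\begin{equation*}
\mathbb{E}[\delta_{t+k}^{V^\pi} \Delta_{t+n} \mid s_t, a_t] = \mathbb{E}[\Delta_{t+n}^2 \mid s_t, a_t] \cdot \mathbf{1}[k = n \geq 1],
\end{equation*}
obtained via the telescoping form $G_t^{\lambda, V^\pi} = V^\pi_t + \sum_{k\geq 0} (\gamma\lambda)^k \delta_{t+k}^{V^\pi}$. For $k \neq n$, iterated conditioning along the filtration annihilates the product through either $\mathbb{E}[\delta_{t+k}^{V^\pi} \mid s_{t+k}] = 0$ or $\mathbb{E}[\Delta_{t+n} \mid s_{t+n}] = 0$. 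The subtle case $k = n \geq 1$ requires integrating over $s_{t+n+1}$ given $(s_{t+n}, a_{t+n})$ first, which by the Bellman equation converts $\delta_{t+n}^{V^\pi}$ into $A^\pi(s_{t+n}, a_{t+n})$ \emph{before} the action is averaged out -- matching $\Delta_{t+n}$ exactly under $\Psi = Q^\pi$. This yields $\mathrm{Cov}(G_t^{\lambda, V^\pi}, D_t \mid s_t, a_t) = (1-\lambda) \sum_{n\geq 1} \gamma^{2n} \lambda^{2n-1} \mathbb{E}[\Delta_{t+n}^2 \mid s_t, a_t]$. Combining the two contributions via the algebraic identity $(1-\lambda)^2 \lambda^{2(n-1)} + 2(1-\lambda) \lambda^{2n-1} = (1-\lambda^2) \lambda^{2(n-1)}$ and re-indexing $l = n - 1$ produces the $\gamma^2(1-\lambda^2)(\gamma\lambda)^{2l}$ prefactor in the irreducible term. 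The hard part is spotting the latent $A^\pi$ inside the $V^\pi$-TD error when next-state randomness is averaged but the action is not; that is precisely what makes the factor $(1-\lambda^2)$ emerge rather than the naive $(1-\lambda)^2$ one would get from the variance of $D_t$ alone.
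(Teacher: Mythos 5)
Your proof is correct, and while it shares the paper's outer skeleton --- the law of total variance conditioned on $(s_{t}, a_{t})$, with the ``between'' part collapsing to the reducible term exactly as the paper does --- your handling of the conditional-variance difference takes a genuinely different route. The paper expands both estimators term-by-term as sums of TD residuals, verifies through Lemma \ref{lm:coveq} plus separate $i=0$ and $i>0$ case analyses that every off-diagonal covariance difference $\chi(i, j \mid \Psi)$ vanishes, computes the per-term variance differences $\mathbb{V}(\delta_{t+l}) - \mathbb{V}(\delta^{V^{\pi}}_{t+l}) = \gamma^{2}\mathbb{E}[(\Psi_{t+l+1} - V^{\pi}_{t+l+1})^{2}] - \mathbb{E}[(\Psi_{t+l} - V^{\pi}_{t+l})^{2}]$, and extracts the $(1-\lambda^{2})$ prefactor by telescoping and re-indexing the weighted sum. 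You instead treat $G_{t}^{\lambda, \Psi}$ as a perturbation $G_{t}^{\lambda, V^{\pi}} + D_{t}$ of the GAE return, so that everything reduces to $\mathbb{V}[D_{t}] + 2\,\mathrm{Cov}(G_{t}^{\lambda, V^{\pi}}, D_{t})$ and to the single diagonal identity $\mathbb{E}[\delta^{V^{\pi}}_{t+k}\Delta_{t+n} \mid s_{t}, a_{t}] = \mathbb{E}[\Delta_{t+n}^{2} \mid s_{t}, a_{t}]\,\mathbf{1}[k = n \geq 1]$; the $(1-\lambda^{2})$ factor then emerges from the algebraic identity $(1-\lambda)^{2}\lambda^{2(n-1)} + 2(1-\lambda)\lambda^{2n-1} = (1-\lambda^{2})\lambda^{2(n-1)}$ rather than from telescoping. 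I checked the key steps: the $k=n$ case does yield $\Delta_{t+n}^{2}$ because $\mathbb{E}[\delta^{V^{\pi}}_{t+n} \mid s_{t+n}, a_{t+n}] = A^{\pi}(s_{t+n}, a_{t+n}) = \Delta_{t+n}$ when $\Psi = Q^{\pi}$, the $k>n$ case dies via $\mathbb{E}[\delta^{V^{\pi}}_{t+k} \mid s_{t+k}] = 0$, and the final re-indexing $l = n-1$ reproduces the stated irreducible term. Your organization concentrates the measure-theoretic work into one covariance computation and makes transparent why the coefficient is $(1-\lambda^{2})$ rather than the naive $(1-\lambda)^{2}$, at the cost of first invoking the $\lambda$-return identities of Section \ref{section:3}; the paper's version is more mechanical but requires checking more vanishing cases.
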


\begin{corollary}
  \label{core:uae-gae}
  If $\Psi = V^{\pi}$, for any baseline $b(s)$ that reduces variance no less than $V^{\pi}(s)$, then
  \begin{equation}
    \label{eq:24}
    \mathbb{V}_{s_{t}, a_{t}}[u_{\theta} A_{t}^{\text{UAE}(\gamma, \lambda)}] \leq \mathbb{V}_{s_{t}, a_{t}}[u_{\theta} A_{t}^{\text{GAE}(\gamma, \lambda)}].
  \end{equation}
\end{corollary}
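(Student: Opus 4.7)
The plan is to derive the corollary as an almost immediate consequence of Theorem \ref{thm:diff-uae-gae}. First I would substitute $\Psi = V^{\pi}$ into the variance difference formula. In the ``irreducible'' summand, each factor $\mathbb{E}_{s_{t+l+1},a_{t+l+1}}[(\Psi - V^{\pi})^{2}]$ becomes identically zero, so that entire geometric series collapses, and only the ``reducible'' term survives on the right-hand side.

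Next I would massage the reducible term into a form that makes the baseline comparison transparent. Observe the algebraic identity
\begin{equation*}
b^{2} - {V^{\pi}}^{2} - 2Q^{\pi}(b - V^{\pi}) = (Q^{\pi} - b)^{2} - (Q^{\pi} - V^{\pi})^{2},
\end{equation*}
which is verified by expanding both sides. Hence the surviving contribution to the variance gap equals $\mathbb{E}_{s_{t},a_{t}}\bigl[u_{\theta}^{\top}u_{\theta}\bigl((Q^{\pi}-b)^{2} - (Q^{\pi}-V^{\pi})^{2}\bigr)\bigr]$.

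Now I would connect this quantity to the notion of ``reducing variance no less than $V^{\pi}$.'' Since a state-dependent baseline does not alter the expectation in \eqref{eq:2}, the variance of the single-step policy gradient estimator $u_{\theta}(Q^{\pi}-b)$ differs from that of $u_{\theta}(Q^{\pi}-V^{\pi})$ only through the second-moment term $\mathbb{E}[u_{\theta}^{\top}u_{\theta}(Q^{\pi}-b)^{2}]$ versus $\mathbb{E}[u_{\theta}^{\top}u_{\theta}(Q^{\pi}-V^{\pi})^{2}]$. The hypothesis that $b$ reduces variance at least as much as $V^{\pi}$ is precisely
\begin{equation*}
\mathbb{E}_{s_{t},a_{t}}\bigl[u_{\theta}^{\top}u_{\theta}(Q^{\pi}-b)^{2}\bigr] \le \mathbb{E}_{s_{t},a_{t}}\bigl[u_{\theta}^{\top}u_{\theta}(Q^{\pi}-V^{\pi})^{2}\bigr],
\end{equation*}
so the reducible term is non-positive, yielding $\mathbb{V}[u_{\theta}A_{t}^{\text{UAE}}] - \mathbb{V}[u_{\theta}A_{t}^{\text{GAE}}] \le 0$.

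The only real obstacle is interpretive rather than computational: we must articulate cleanly what ``reduces variance no less than $V^{\pi}(s)$'' means (i.e., in the sense of the instantaneous policy-gradient variance with weight $u_{\theta}^{\top}u_{\theta}$) so that the inequality drops out of the rewriting. Once that convention is fixed, the rest is a one-line algebraic identity together with the vanishing of the $(\Psi-V^{\pi})^{2}$ factor, so no further technical work is needed.
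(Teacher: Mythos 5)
Your proposal is correct and follows essentially the same route as the paper's own proof: setting $\Psi = V^{\pi}$ to kill the irreducible term, rewriting the reducible term via the identity $b^{2} - {V^{\pi}}^{2} - 2Q^{\pi}(b - V^{\pi}) = (Q^{\pi}-b)^{2} - (Q^{\pi}-V^{\pi})^{2}$, and invoking the baseline hypothesis to conclude non-positivity. Your added remark making explicit the intended meaning of ``reduces variance no less than $V^{\pi}$'' is a reasonable clarification of a point the paper leaves implicit, but the argument is otherwise identical.
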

It shows that UAE can further reduce variance with an improved baseline beyond $V^{\pi}(s)$.
\begin{theorem}
  \label{thm:stationary}
  Under Assumption \ref{assmp:1}, at $n$th iteration, let $k = \min\{n, \left\lfloor\frac{|\mathcal{D}|}{T}\right\rfloor\}$, for the current policy $\pi_{n}$, along with its predecessors $\{\pi_{n - i}\}_{i=1}^{k - 1}$, if for any $s \in \mathcal{S}$, $\sup_{i}D_{\text{TV}}(\frac{d_{n - i}(s, a)}{d_{\beta}(s)} || \pi) < \frac{\epsilon}{4}$ and $\mathbb{E}_{\pi}[|\mathrm{r}_{\phi^{\star}}(s, a)|] < \frac{\epsilon}{2}$, then $\|\nabla_{\phi} \mathcal{J}(\phi^{\star})\| < 2GM^{2}\epsilon$.
\end{theorem}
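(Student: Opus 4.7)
The plan is to write $\nabla_\phi \mathcal{J}(\phi^\star)$ in closed form, bound its norm by pushing the norm inside the outer expectation, and then argue that the residual factor $Q_w(s,a) - b_{\phi^\star}^\pi(s)$ is small on $d_\beta$-average because (i) the action conditional of $d_\beta$ is close to $\pi(\cdot\mid s)$ by the total-variation hypothesis and (ii) at $\phi^\star$ the learned correction $\mathrm{r}_{\phi^\star}$ is small in $\pi$-expectation. Differentiating the squared loss \ref{eq:14} yields
\begin{equation*}
\nabla_\phi \mathcal{J}(\phi^\star) = -2\,\mathbb{E}_{(s,a)\sim d_\beta}\bigl[(Q_w(s,a) - b_{\phi^\star}^\pi(s))\,\nabla_\phi b_{\phi^\star}^\pi(s)\bigr],
\end{equation*}
and since $\nabla_\phi b_{\phi^\star}^\pi(s) = \mathbb{E}_\pi[\nabla_\phi \mathrm{r}_{\phi^\star}(s,a)\,Q_w(s,a)\mid s]$ depends only on $s$, I can factor the inner action expectation and the bracket collapses to $\mathbb{E}_{d_\beta(\cdot\mid s)}[Q_w(s,a)] - b_{\phi^\star}^\pi(s)$.

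For the inner gradient I would invoke Jensen together with Assumption \ref{assmp:1} and the boundedness $|Q_w|\le M$ from Assumption \ref{assmp:3} to get the uniform pointwise estimate $\|\nabla_\phi b_{\phi^\star}^\pi(s)\| \le GM$. For the scalar residual I would rewrite $b_{\phi^\star}^\pi(s) = \mathbb{E}_\pi[Q_w\mid s] + \mathbb{E}_\pi[\mathrm{r}_{\phi^\star}Q_w\mid s]$ and split
\begin{equation*}
\mathbb{E}_{d_\beta(\cdot\mid s)}[Q_w] - b_{\phi^\star}^\pi(s) = \bigl(\mathbb{E}_{d_\beta(\cdot\mid s)}[Q_w] - \mathbb{E}_\pi[Q_w\mid s]\bigr) - \mathbb{E}_\pi[\mathrm{r}_{\phi^\star}Q_w\mid s].
\end{equation*}
The first summand I would bound by the standard inequality $|\mathbb{E}_p f - \mathbb{E}_q f| \le 2\|f\|_\infty D_{\text{TV}}(p, q)$ with $\|f\|_\infty = M$, combined with the mixture identity $d_\beta(a\mid s) = \frac{1}{k}\sum_{i=0}^{k-1} d_{n-i}(s,a)/d_\beta(s)$ and convexity of TV to transport the per-slice hypothesis through the mixture; this yields $M\epsilon/2$. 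The second is at most $M\cdot\mathbb{E}_\pi[|\mathrm{r}_{\phi^\star}|]\le M\epsilon/2$ by the second hypothesis. Summing via the triangle inequality and plugging into the gradient formula gives $\|\nabla_\phi \mathcal{J}(\phi^\star)\| \le 2\cdot M\epsilon\cdot GM = 2GM^2\epsilon$.

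The main obstacle is the total-variation step. The object $d_{n-i}(s,a)/d_\beta(s)$ is not literally a probability over $a$ (its mass in $a$ is $\rho_{n-i}(s)/d_\beta(s)$), so $D_{\text{TV}}(\cdot\,\|\,\pi)$ must be read either as a signed-measure TV or, more naturally, after a per-$i$ renormalisation to the past conditional $\pi_{n-i}(\cdot\mid s)$. In that renormalised reading the mixture weights in $d_\beta(\cdot\mid s)$ are $\rho_{n-i}(s)/\sum_j\rho_{n-j}(s)$ rather than uniform $1/k$, and convexity of TV then transports the uniform per-slice bound $<\epsilon/4$ into $D_{\text{TV}}(d_\beta(\cdot\mid s)\,\|\,\pi(\cdot\mid s))<\epsilon/4$. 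The numerical constants $\epsilon/4$ and $\epsilon/2$ in the hypotheses are calibrated precisely so that the two halves of the triangle inequality each contribute $M\epsilon/2$, producing the advertised $2GM^2\epsilon$; once this normalisation point is pinned down, everything else is routine.
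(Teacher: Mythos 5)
Your proposal is correct and follows essentially the same route as the paper's proof: differentiate the squared loss, condition on $s$ so that $Q_w(s,a)$ collapses to $\mathbb{E}_{\beta(\cdot\mid s)}[Q_w]$, bound the inner gradient $\mathbb{E}_\pi[\nabla_\phi \mathrm{r}_{\phi^\star} Q_w]$ by $GM$, and let the total-variation and residual hypotheses each contribute $M\epsilon/2$, yielding $2GM^2\epsilon$. Your triangle-inequality split of the scalar residual is a cleaner reorganization of the paper's explicit sum over the $k$ past policies (which averages the per-slice signed-measure bound into the mixture conditional exactly as your convexity step does), and your caveat that $d_{n-i}(s,a)/d_\beta(s)$ is unnormalized is a fair reading of a hypothesis the paper itself only ever applies in the form $\tfrac{1}{2}\int\lvert\cdot\rvert\,da<\epsilon/4$.
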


\begin{corollary}
  \label{core:1}
  Under Assumption \ref{assmp:2}, for any successor policy $\tilde{\pi}$ of $\pi$, if $\sup_{s}D_{\text{KL}}(\tilde{\pi} || \pi) < \frac{\epsilon}{2}$, then
  \begin{equation}
    \label{eq:15}
    \mathbb{E}_{d_{\beta}}[(Q_{w}(s, a) - b_{\phi^{\star}}^{\tilde{\pi}}(s))^2] \leq 2 \mathcal{J}(\phi^{\star}) + 2\bigl((K + 1)M\bigr)^{2} \epsilon.
  \end{equation}
\end{corollary}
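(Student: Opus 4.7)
The plan is to use the standard decomposition trick of adding and subtracting $b_{\phi^\star}^{\pi}(s)$ inside the square, which cleanly separates the bound into a term we can identify with $\mathcal{J}(\phi^\star)$ and a term that captures the sensitivity of the baseline to a small policy change. Concretely, writing $Q_w(s,a) - b_{\phi^\star}^{\tilde{\pi}}(s) = \bigl(Q_w(s,a) - b_{\phi^\star}^{\pi}(s)\bigr) + \bigl(b_{\phi^\star}^{\pi}(s) - b_{\phi^\star}^{\tilde{\pi}}(s)\bigr)$ and applying $(x+y)^2 \le 2x^2 + 2y^2$ under the expectation over $d_\beta$ immediately yields
\[
\mathbb{E}_{d_{\beta}}\bigl[(Q_{w}(s, a) - b_{\phi^{\star}}^{\tilde{\pi}}(s))^2\bigr] \le 2\mathcal{J}(\phi^{\star}) + 2\mathbb{E}_{d_{\beta}}\bigl[(b_{\phi^{\star}}^{\pi}(s) - b_{\phi^{\star}}^{\tilde{\pi}}(s))^2\bigr].
\]

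Next, I would unfold the two baselines using the definition $b_{\phi^\star}^{\pi}(s) = \mathbb{E}_{\pi}[(1 + \mathrm{r}_{\phi^\star}(s,a))Q_w(s,a) \mid s]$ so that their difference is the expectation of a common function $f(s,a) := (1 + \mathrm{r}_{\phi^\star}(s,a))Q_w(s,a)$ under two different action distributions. By Assumption \ref{assmp:3} and Assumption \ref{assmp:2} we have $\sup_{s,a}|f(s,a)| \le (K+1)M$, so a Hölder-style bound gives
\[
|b_{\phi^\star}^{\pi}(s) - b_{\phi^\star}^{\tilde{\pi}}(s)| \le (K+1)M \int |\pi(a|s) - \tilde{\pi}(a|s)|\, da = 2(K+1)M \cdot D_{\mathrm{TV}}(\tilde{\pi}\|\pi).
\]

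The last ingredient is Pinsker's inequality, $D_{\mathrm{TV}}(\tilde{\pi}\|\pi) \le \sqrt{D_{\mathrm{KL}}(\tilde{\pi}\|\pi)/2}$, which converts the TV bound into a KL bound and absorbs a convenient factor of two: squaring gives $(b_{\phi^\star}^{\pi}(s) - b_{\phi^\star}^{\tilde{\pi}}(s))^2 \le 2((K+1)M)^2 D_{\mathrm{KL}}(\tilde{\pi}\|\pi)$. Taking expectation over $d_\beta$ and using the hypothesis $\sup_s D_{\mathrm{KL}}(\tilde{\pi}\|\pi) < \epsilon/2$, the second term becomes at most $2((K+1)M)^2 \epsilon$, yielding the claimed inequality.

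There is no real obstacle here; the only place for care is bookkeeping the constants so that the factor of $2$ from Young's inequality, the factor of $2$ in the TV--$L^1$ identity, and the factor of $1/2$ inside Pinsker combine to reproduce exactly the coefficient $2((K+1)M)^2 \epsilon$ in the statement. One minor point worth flagging in the written proof is that the corollary implicitly relies on Assumption \ref{assmp:3} as well (to bound $Q_w$ uniformly), even though only Assumption \ref{assmp:2} is explicitly cited.
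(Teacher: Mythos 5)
Your proposal is correct and follows essentially the same route as the paper's proof: the same add-and-subtract of $b_{\phi^{\star}}^{\pi}$ with $(x+y)^{2}\leq 2x^{2}+2y^{2}$, the same $L^{1}$/total-variation bound on the baseline difference using the uniform bound $(K+1)M$, and the same application of Pinsker's inequality, with the constants combining identically. Your remark that the bound implicitly also uses Assumption \ref{assmp:3} to control $|Q_{w}|$ by $M$ is accurate and matches what the paper's proof actually invokes despite citing only Assumption \ref{assmp:2}.
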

The corollary \ref{core:1} states that for a baseline induced by the policy $\pi$, any policy ahead of it can have a considerably small magnitude-free variance, as long as not being too far away from the origin. This could be the case if a trust region is enforced. Efficiently utilizing such a baseline can reduce the overall variance of the policy gradients, and thus smooth out the learning process.

\begin{theorem}
  \label{thm:ppa}
  (Self-annealing effect) Under Assumption \ref{assmp:2}, for any policy sequence $\{\pi_{k}\}$ such that its limiting point $\pi^{\star}$ lies in the deterministic optimal policy set, if for any $s \in \mathcal{S}$, $\lim_{k \rightarrow \infty}\mathbb{E}_{\pi_{k}}[\mathrm{r}_{\phi_{k}}] = 0$, then
  \begin{equation}
    \label{eq:21}
    \limsup_{k \rightarrow \infty}{A_{k}^{+}} = 0.
  \end{equation}
\end{theorem}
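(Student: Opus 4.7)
The high-level idea is that, as $\pi_k$ concentrates on a deterministic optimal policy $\pi^{\star}$, the learned baseline $b_{\phi_k}^{\pi_k}(s)$ converges to $Q_{w_k}(s,a^{\star}(s))$, which under the natural critic consistency $Q_{w_k}\to Q^{\star}$ equals $V^{\star}(s)=\max_a Q^{\star}(s,a)$; combined with the Bellman inequality $Q^{\star}(s,a)\le V^{\star}(s)$, this forces the advantage, and thus its positive part, to be non-positive in the limit. First I would split the baseline by linearity,
\begin{equation*}
b_{\phi_k}^{\pi_k}(s) \;=\; \mathbb{E}_{a\sim\pi_k}[Q_{w_k}(s,a)] + \mathbb{E}_{a\sim\pi_k}[\mathrm{r}_{\phi_k}(s,a)\,Q_{w_k}(s,a)],
\end{equation*}
and dispatch the first summand via weak convergence $\pi_k(\cdot\mid s)\rightharpoonup\delta_{a^{\star}(s)}$ together with the boundedness $|Q_{w_k}|\le M$ from Assumption~\ref{assmp:3}, so that $\mathbb{E}_{\pi_k}[Q_{w_k}(s,a)] - Q_{w_k}(s,a^{\star}(s))\to 0$.

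The main obstacle is the residual summand $\mathbb{E}_{\pi_k}[\mathrm{r}_{\phi_k}Q_{w_k}]$, since the hypothesis only controls the signed mean $\mathbb{E}_{\pi_k}[\mathrm{r}_{\phi_k}]\to 0$, not $\mathbb{E}_{\pi_k}[|\mathrm{r}_{\phi_k}|]$. My plan is a centering trick at the limiting action:
\begin{equation*}
\mathbb{E}_{\pi_k}[\mathrm{r}_{\phi_k}Q_{w_k}] \;=\; Q_{w_k}(s,a^{\star}(s))\,\mathbb{E}_{\pi_k}[\mathrm{r}_{\phi_k}] + \mathbb{E}_{\pi_k}\bigl[\mathrm{r}_{\phi_k}\bigl(Q_{w_k}(s,a)-Q_{w_k}(s,a^{\star}(s))\bigr)\bigr].
\end{equation*}
The first term is bounded in absolute value by $M\,|\mathbb{E}_{\pi_k}[\mathrm{r}_{\phi_k}]|$ thanks to Assumption~\ref{assmp:3} and vanishes by hypothesis; the second is bounded by $K\,\mathbb{E}_{\pi_k}[|Q_{w_k}(s,a)-Q_{w_k}(s,a^{\star}(s))|]$ via Assumption~\ref{assmp:2}, and this tends to zero because $\pi_k$ concentrates at $a^{\star}(s)$ while $Q_{w_k}$ is (uniformly) continuous in $a$ near $a^{\star}(s)$, a mild regularity property of a bounded network critic.

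Putting the two summands together gives $b_{\phi_k}^{\pi_k}(s) - Q_{w_k}(s,a^{\star}(s))\to 0$. Under the natural critic consistency $Q_{w_k}\to Q^{\star}$, one then has $Q_{w_k}(s,a)-Q_{w_k}(s,a^{\star}(s))\to Q^{\star}(s,a)-V^{\star}(s)\le 0$ for every $(s,a)$, so $\limsup_{k\to\infty}(Q_{w_k}-b_{\phi_k}^{\pi_k})\le 0$, and a standard continuity/monotonicity argument for $(\cdot)^+$ turns this into $\limsup_{k\to\infty}A_k^+=0$. If only a subsequential limit point is available rather than a full limit, I would extract a convergent subsequence $\pi_{k_j}\to\pi^{\star}$, run the argument verbatim along it, and use nonnegativity of $(\cdot)^+$ together with the hypothesis applying to every convergent subsequence to lift the conclusion to the $\limsup$ of the full sequence.
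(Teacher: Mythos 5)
Your argument is correct and rests on the same key observation as the paper's proof: because the limiting policy is deterministic, $Q$ is (asymptotically) constant where $\pi_k$ puts its mass, so the residual contribution to the baseline collapses to a multiple of $\mathbb{E}_{\pi_k}[\mathrm{r}_{\phi_k}]\to 0$, the baseline tends to $V^{\pi^{\star}}$, and $(Q-V^{\pi^{\star}})^{+}=0$ by optimality. The only substantive difference is presentational: the paper passes to the limit first (bounded convergence, evaluating everything at $\pi^{\star},\phi^{\star}$ and restricting the integral to the support of $\pi^{\star}$ where $Q^{\pi^{\star}}\equiv Q^{\star}$), whereas you run the estimate along the sequence via a centering trick, which makes explicit the critic-consistency and continuity-in-$a$ assumptions that the paper uses implicitly when it silently replaces $Q_{w_k}$ by $Q^{\pi_k}$.
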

It indicates that as the positive advantage diminishes, the surrogate reduces to encourage exploration only. This self-annealing effect is helpful since as the learning evolves, the direction of the policy update will close to the on-policy gradient, which is generally stabler.
\begin{theorem}
  \label{thm:bound}
  (Bounded bias) Let $\Delta = \max_{s, a}{|Q^{\pi} - Q_{w}|}$, $\Omega = \max_{s}{|\mathbb{E}_{\tilde{\pi}}[Q_{w} - b_{\phi}^{\tilde{\pi}}]|}$, $\Upsilon = \max_{s} {|\mathbb{E}_{\tilde{\pi}}[Q^{\pi} - b_{\phi}^{\pi}]|}$, and define
  \begin{equation}
    \label{eq:26}
    L_{\pi}(\tilde{\pi}) = \eta(\pi) + \omega \mathbb{E}_{\rho^{\pi}, \tilde{\pi}}[Q^{\pi} - b_{\phi}^{\pi}] + (1 - \omega) \mathbb{E}_{\rho^{\beta}, \tilde{\pi}}[(Q_{w} - b_{\phi}^{\tilde{\pi}})^{+} - \alpha \log \tilde{\pi}],
  \end{equation}
  then
  \begin{equation}
    \label{eq:27}
    |\eta(\tilde{\pi}) - L_{\pi}(\tilde{\pi})| \leq \frac{2 \gamma  \Upsilon}{(1 - \gamma)^{2}} \sqrt{D_{\text{KL}}^{\text{max}}(\pi || \tilde{\pi})} +  (1 - \omega) (\Delta + C_{1} \sqrt{2 D_{\text{KL}}^{\text{max}}(\pi || \tilde{\pi})} + \frac{2 \gamma  \Omega}{(1 - \gamma)^{2}} \sqrt{D_{\text{KL}}^{\text{max}}(\pi || \beta)} + C_{2}).
  \end{equation}
\end{theorem}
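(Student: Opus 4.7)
The plan is to exploit the convex-combination structure of $L_\pi(\tilde{\pi})$ by writing
\[
\eta(\tilde{\pi}) - L_{\pi}(\tilde{\pi}) \;=\; \omega\, E_{\text{on}} \;+\; (1-\omega)\, E_{\text{off}},
\]
with
\[
E_{\text{on}} = \eta(\tilde{\pi}) - \eta(\pi) - \mathbb{E}_{\rho^{\pi}, \tilde{\pi}}[Q^\pi - b_\phi^\pi],\qquad E_{\text{off}} = \eta(\tilde{\pi}) - \eta(\pi) - \mathbb{E}_{\rho^{\beta}, \tilde{\pi}}[(Q_w - b_\phi^{\tilde{\pi}})^{+} - \alpha \log \tilde{\pi}],
\]
and then applying the triangle inequality, noting $\omega \leq 1$, so the on-policy term appears without an $\omega$ prefactor in the final bound. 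Both $|E_{\text{on}}|$ and $|E_{\text{off}}|$ will then be bounded separately by the standard Kakade--Langford + Pinsker machinery, after routing through a chain of intermediate surrogates.

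For $|E_{\text{on}}|$, I would invoke the performance-difference lemma $\eta(\tilde{\pi}) - \eta(\pi) = \mathbb{E}_{\rho^{\tilde{\pi}}, \tilde{\pi}}[A^\pi]$, and then exploit the baseline-swap identity $\mathbb{E}_{\tilde{\pi}}[A^\pi(s,\cdot)] = \mathbb{E}_{\tilde{\pi}}[Q^\pi - b_\phi^\pi] - \mathbb{E}_{\pi}[Q^\pi - b_\phi^\pi]$ (since $V^\pi = \mathbb{E}_\pi[Q^\pi]$) to rewrite everything in terms of the shifted quantity $f(s,a) := Q^\pi(s,a) - b_\phi^\pi(s)$, which is precisely what the definition of $\Upsilon$ controls. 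The state-distribution mismatch $\rho^{\tilde{\pi}} \mapsto \rho^{\pi}$ then contributes the classical $\frac{2\gamma \Upsilon}{(1-\gamma)^{2}} D_{\mathrm{TV}}(\pi \| \tilde{\pi})$ term, and Pinsker's inequality upgrades $D_{\mathrm{TV}}$ to $\sqrt{D_{\mathrm{KL}}^{\max}(\pi \| \tilde{\pi})}$, yielding the first summand of the claim.

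For $|E_{\text{off}}|$, I would introduce a sequence of intermediate surrogates and triangulate: (i) replace $Q_w$ by $Q^\pi$ at cost $\Delta$ via $\sup|Q_w - Q^\pi| \leq \Delta$; (ii) replace $b_\phi^{\tilde{\pi}}$ by $b_\phi^\pi$, using the $1$-Lipschitzness of $(\cdot)^+$ (i.e.\ $|x^+ - y^+|\leq |x-y|$) to push the swap inside the expectation, and bounding $\sup_s |b_\phi^{\tilde{\pi}}(s) - b_\phi^\pi(s)|$ by a Lipschitz/smoothness argument on $b_\phi^{(\cdot)}$ in its policy argument that produces the $C_1\sqrt{2 D_{\mathrm{KL}}^{\max}(\pi \| \tilde{\pi})}$ contribution; (iii) perform the state-distribution swap $\rho^{\beta} \mapsto \rho^{\tilde{\pi}}$ via the Kakade--Langford bound applied to the function $(Q_w - b_\phi^{\tilde{\pi}})^+$ (whose expected magnitude is $\Omega$ after sliding $(\cdot)^+$ through the expectation), followed by Pinsker to arrive at $\frac{2\gamma \Omega}{(1-\gamma)^2}\sqrt{D_{\mathrm{KL}}^{\max}(\pi \| \beta)}$; (iv) drop the $(\cdot)^+$ operator using $(x)^+ \geq x$ to recover a surrogate advantage that lines up with $A^\pi$ under $\tilde{\pi}$, and absorb the entropy residual $-\alpha \log \tilde{\pi}$ under an $L^\infty$ bound into $C_2$.

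The main obstacle will be step (ii)--(iv) of the off-policy analysis, because the positive-part operator is non-linear and breaks the clean TRPO-style algebraic identities (in particular the state-only baseline cannot be added and subtracted freely inside $(\cdot)^+$). The standard trick of aligning baselines via $\mathbb{E}_{\pi}[b] = b$ has to be replaced with pointwise $1$-Lipschitz estimates, and the cumulative error of cascading these estimates must be tracked carefully so that the final KL terms match the stated exponents (one $\sqrt{D_{\mathrm{KL}}(\pi\|\tilde{\pi})}$ for policy-level drift, one $\sqrt{D_{\mathrm{KL}}(\pi\|\beta)}$ for the off-policy state-distribution drift). A secondary subtlety is justifying a Lipschitz-in-policy property for $b_\phi^{(\cdot)}$ so that the action-level KL between $\tilde{\pi}$ and $\pi$ translates into the constant $C_1$; this likely requires either Assumption~\ref{assmp:3} (boundedness of $Q_w$) together with Pinsker at the action level, or the trust-region assumption implicit in the statement.
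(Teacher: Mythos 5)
Your proposal is correct in substance and uses the same machinery as the paper (a chain of intermediate surrogates controlled by the performance-difference/Kakade--Langford state-distribution bound, Pinsker's inequality, the sup-bounds $\Delta$, $(1+K)M$ for the baseline swap, and absorption of the positive-part remainder and entropy term into $C_2$), but your top-level decomposition differs from the paper's in a way worth noting. The paper does not split $\eta(\tilde{\pi})-L_{\pi}(\tilde{\pi})$ as a convex combination $\omega E_{\text{on}}+(1-\omega)E_{\text{off}}$; instead it introduces the full-weight on-policy surrogate $\bar{L}_{\pi}(\tilde{\pi})=\eta(\pi)+\mathbb{E}_{\rho^{\pi},\tilde{\pi}}[Q^{\pi}-b_{\phi}^{\pi}]$ as an anchor, bounds $|\eta(\tilde{\pi})-\bar{L}_{\pi}(\tilde{\pi})|$ by the $\Upsilon$-term, and then observes that $\bar{L}_{\pi}(\tilde{\pi})-L_{\pi}(\tilde{\pi})$ carries an explicit factor of $(1-\omega)$ times the discrepancy between the on- and off-policy surrogates, which is then split into your steps (i)--(iv). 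The practical advantage of the paper's anchoring is that the off-policy branch starts already at $\rho^{\pi}$, so its state-distribution swap is $\rho^{\pi}\to\rho^{\beta}$ and directly yields the stated $\sqrt{D_{\text{KL}}^{\max}(\pi\|\beta)}$ term via Lemma \ref{lm:state}. In your $E_{\text{off}}$ branch the swap is $\rho^{\tilde{\pi}}\to\rho^{\beta}$, which by itself controls only a $\tilde{\pi}$-versus-$\beta$ divergence; to land on the stated bound you must insert $\rho^{\pi}$ as an intermediate point, incurring a second $\Upsilon$-weighted $\sqrt{D_{\text{KL}}^{\max}(\pi\|\tilde{\pi})}$ term inside $E_{\text{off}}$, and then recombine $\omega\cdot(\Upsilon\text{-term})+(1-\omega)\cdot(\Upsilon\text{-term})$ into a single coefficient-one term rather than crudely invoking $\omega\le 1$. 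With that repair the two routes give identical constants. Two further remarks: the paper sidesteps your worry about the non-linearity of $(\cdot)^{+}$ by first peeling it off as $(x)^{+}=x+(x)^{-}$ and bounding $\mathbb{E}[(Q_w-b_{\phi}^{\tilde{\pi}})^{-}+\alpha\log\tilde{\pi}]$ by the constant $C_2$, so all remaining comparisons are linear and no Lipschitz estimate for $(\cdot)^{+}$ is needed; and the ``Lipschitz-in-policy'' property of $b_{\phi}^{(\cdot)}$ you flag is obtained exactly as you guess, from $|b_{\phi}^{\pi}-b_{\phi}^{\tilde{\pi}}|\le 2D_{\text{TV}}^{\max}(\pi\|\tilde{\pi})(1+K)M$ under Assumptions \ref{assmp:3} and \ref{assmp:2}, followed by Pinsker, giving $C_1=(1+K)M$. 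Finally, be aware that your use of the performance-difference lemma with the non-value baseline $b_{\phi}^{\pi}$ leaves a residual $\mathbb{E}_{\rho^{\tilde{\pi}}}[V^{\pi}-b_{\phi}^{\pi}]$ that $\Upsilon$ does not control; the paper's proof silently makes the same identification, so this is a shared, not a new, gap.
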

This theorem provides a comprehensive bound on the bias introduced by the on-policy state distribution mismatch and the off-policy learning, from $L_{\pi}(\tilde{\pi})$ to the true objective $\eta(\tilde{\pi})$. The accuracy of this approximation depends on the deviation from the original policy $\pi$, the approximation quality of $Q_{w}$ to $Q^{\pi}$, and the extent of off-policyness. Our combined policy evaluation greatly reduces the $\Delta$ gap. In the case where $\mathrm{r} \equiv 0$, it further eliminates bias from off-policy learning as $\Omega = 0$ for any $\tilde{\pi}$. IPG, on the other hand, fits $Q_{w}$ using only off-policy data and struggles to manage off-policyness as their $\Omega$ is policy-dependent. While the residual term is typically non-zero, it remains relatively small (see Appendix \ref{valid}, Figure \ref{fig:11a}), reducing off-policyness. When combined with any on-policy gradient with an enforced trust region, it can effectively utilize recent rollouts while constraining the deviation from the origin. The shifting constant $C_{2}$ can also be controlled, depending on the desired level of exploration and the portion of the negative off-policy gradient to be canceled\footnote{Although we mainly focus on the negative portion, the bound is generally applicable to any removed portion.}.

\section{Experiments}
Our goal is to validate sample efficiency and stable learning while understanding the contributions of different algorithmic components. We perform our algorithm on several continuous control tasks from the OpenAI Gym \cite{DBLP:journals/corr/BrockmanCPSSTZ16} with the MuJoCo simulator \cite{DBLP:conf/iros/TodorovET12}.

\paragraph{Evaluation} Since our algorithm is in a hybrid fashion, the policy that we update would not strictly follow the sampling policy. We thus evaluate our algorithm by executing the mean action with 10 trails, for which we report the averaged episodic reward every 4096 steps. We run each task with 5 random seeds, whose total environment step is 1 million.

As our default on-policy learner is PPO, it is a direct baseline to verify whether our method realizes an improvement. And we test the advantage of the unified learning of the critic and the optimistic policy gradient against IPG. We also made comparisons with the state-of-the-art off-policy algorithms, such as SAC \cite{DBLP:conf/icml/HaarnojaZAL18} and TD3 \cite{DBLP:conf/icml/FujimotoHM18}. We defer additional comparisons to related baselines that combine on-policy methods with off-policy data to Appendix \ref{additional}.

The learning curves are presented in the Figure \ref{fig:7}. DPO demonstrates superior or comparable performance across all tasks, notably excelling in the high-dimensional Humanoid task. Other DPO variants, such as DPO(A2C) and DPO(TRPO), exhibit significant improvements over their on-policy counterparts (Table \ref{tab:head-to-head}). This highlights the potential of our method as a promising learning paradigm for a range of on-policy algorithms.
\begin{figure*}[t]
\vskip 0.2in
\begin{center}
\centerline{
\includegraphics[width=\textwidth]{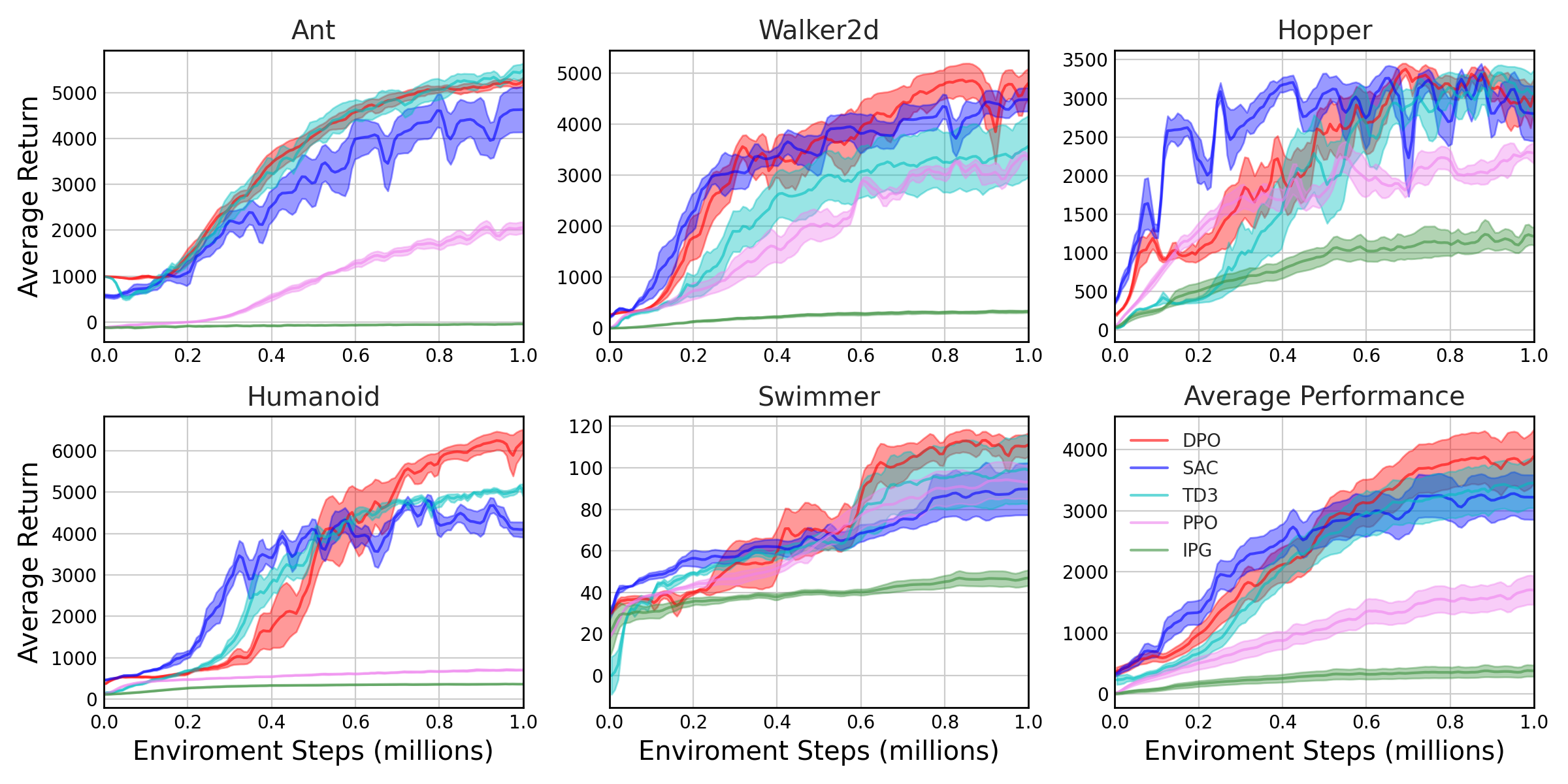}
}
\caption{Learning curves on continuous control tasks, averaged over 5 random seeds and shaded with standard error.}
\label{fig:7}
\end{center}
\vskip -0.2in
\end{figure*}

\paragraph{Sample Efficiency} DPO combines both on- and off-policy evaluation and employs off-policy gradient interpolation to enhance sample efficiency. As depicted in Figure \ref{fig:se}, DPO achieves exceptional performance more rapidly and with significantly less time compared to off-policy algorithms. To achieve comparable performance, on-policy algorithms like PPO necessitate 10 times more samples than DPO. This underscores DPO's enhanced sample efficiency over on-policy algorithms and improved time efficiency compared to off-policy algorithms. Additionally, DPO only performs $4\%$ of the total number of policy gradients, compared to off-policy algorithms like TD3 and SAC, highlighting DPO's superior data utilization per update.
\begin{figure}[H]
    \centering
    \subfigure[Time Complexity]{%
      \includegraphics[width=0.4\textwidth]{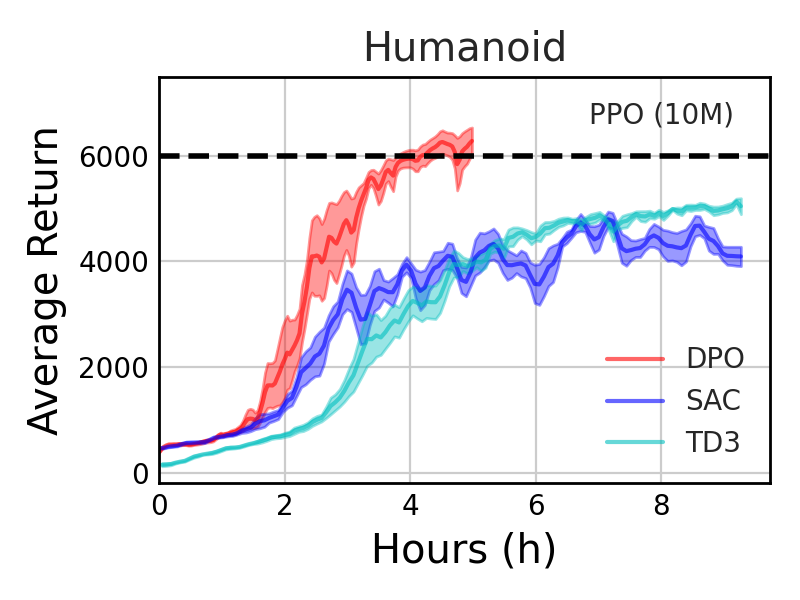}
      \label{fig:se}%
    }\hspace{0.1\textwidth}
    \subfigure[Relative Policy Updates]{%
      \includegraphics[width=0.4\textwidth]{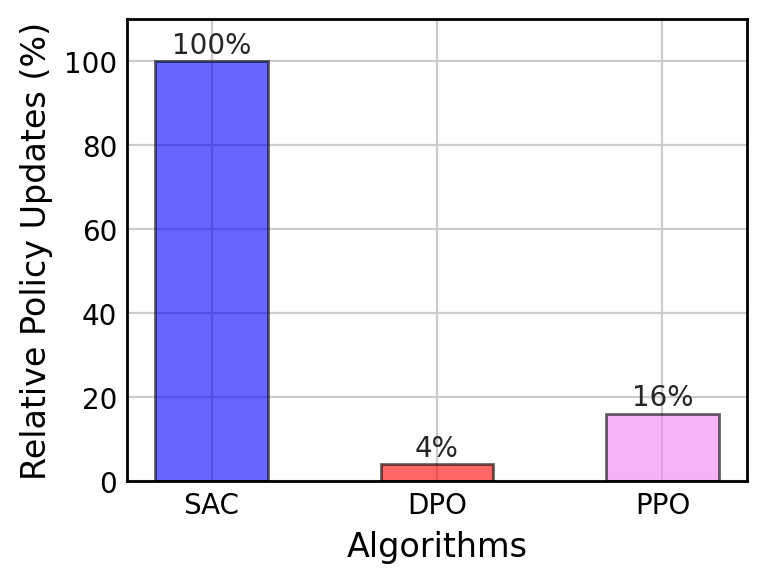}
      \label{fig:rpu}%
    }
    \caption{Comparison of time complexity and relative parameter updates.}
    \label{fig:SE}
  \end{figure}
  
  \paragraph{Stable Learning} DPO preserves the stable learning characteristics typically associated with on-policy algorithms. We assess stability by measuring the variability in policy changes and critic mean squared error (MSE) loss. These metrics are computed using the variance of parameter updates and the average total variation respectively (see definitions in the Appendix \ref{sl:detail}). Both metrics are calculated based on $25\%$ of the data within fixed intervals. To ensure a fair comparison, we normalize the losses before calculating the latter metric, as loss functions can vary in magnitude for different algorithms. Figure \ref{fig:stb} shows that DPO maintains smoother policy changes compared to SAC, even without the use of learning rate scheduling as seen in PPO. Furthermore, Figure \ref{fig:tv} highlights that DPO exhibits reduced variability in critic MSE loss, suggesting the effectiveness of the combined policy evaluation within a more stable optimization landscape.

\begin{figure}[H]
    \centering
    \subfigure[Variance Of Policy Updates]{%
      \includegraphics[width=0.4\textwidth]{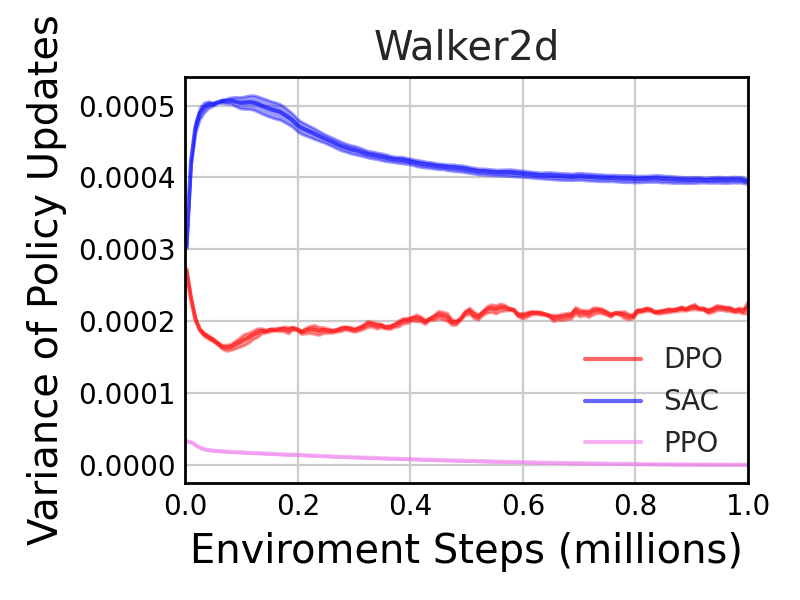}
      \label{fig:stb}%
    }\hspace{0.1\textwidth}
    \subfigure[Average Total Variation]{%
      \includegraphics[width=0.4\textwidth]{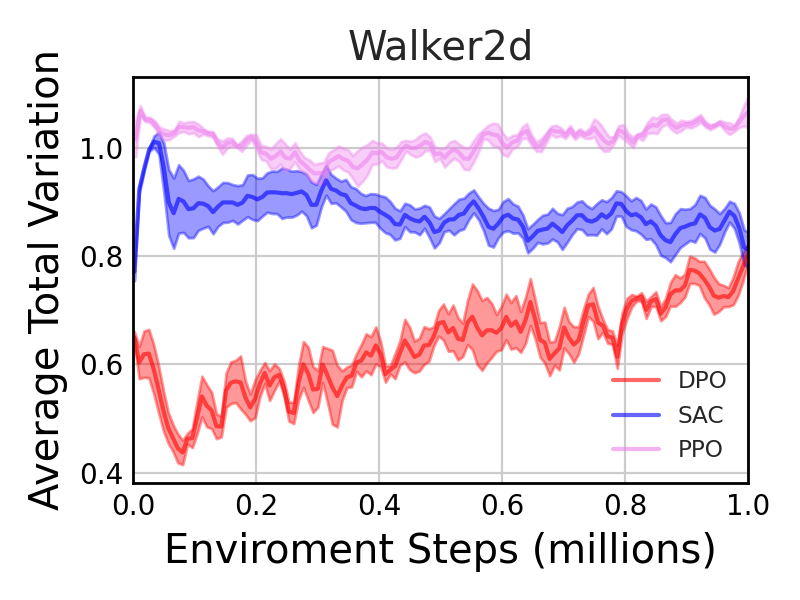}
      \label{fig:tv}%
    }
    \caption{Comparison of variance of policy updates and average total variation of critic MSE loss.}
    \label{fig:SL}
  \end{figure}
  
  \paragraph{General Framework} One of our key innovations is the provision of the implementation for a versatile learning framework that can adapt to various on-policy gradient estimators, including A2C, TRPO, and PPO (details can be found in Appendix \ref{impl}). Within this framework, these estimators share the same set of hyperparameters across different tasks, except for any algorithmic specifications. The remarkable improvement in sample efficiency is evident in Table \ref{tab:head-to-head} and Figure \ref{fig:7}, enabling on-policy algorithms to successfully tackle previously unsolvable tasks. It also narrows the performance gap with off-policy approaches, while increasing time efficiency.  
\begin{table}[H]
\caption{Head-to-head comparison between other variants of DPO to its counterpart.}
\label{tab:head-to-head}
\vskip 0.15in
\begin{center}
\begin{small}
\begin{sc}
\scalebox{0.8}{\begin{tabular}{lccccccr}
\toprule
Method  & Walker2d & Hopper & Swimmer & Ant & Humanoid & Avg. \\
\midrule 
A2C       & 134  $\pm$ 45   & 156  $\pm$ 30    & 17  $\pm$ 4   & 942  $\pm$ 3    & 163  $\pm$ 50    & 282  $\pm$ 26 \\
DPO(A2C)  & \textbf{2786} $\pm$ 681 & \textbf{2108} $\pm$ 712  & \textbf{40} $\pm$ 5   & \textbf{3581} $\pm$ 905 & \textbf{3980} $\pm$ 2231 & \textbf{2499} $\pm$ 907 \\
\midrule
TRPO      & 2449  $\pm$ 251 & \textbf{2142}  $\pm$ 591  & \textbf{103}  $\pm$ 21 & 68  $\pm$ 50    & 503  $\pm$ 23    & 853  $\pm$ 187  \\
DPO(TRPO) & \textbf{3581} $\pm$ 516 & 2025 $\pm$ 1120 & 55 $\pm$ 24  & \textbf{4615} $\pm$ 129 & \textbf{5011} $\pm$ 1197 & \textbf{3057} $\pm$ 597  \\
\bottomrule
\end{tabular}}
\end{sc}
\end{small}
\end{center}
\vskip -0.1in
\end{table}
\paragraph{Variance Reduction} We explore two methods for variance reduction: UAE and a residual baseline. We investigate three key questions: \textbf{(1)} Does UAE effectively mitigate long-term noise? \textbf{(2)} Does the residual baseline reduce both on-policy and off-policy gradient variance? \textbf{(3)} How does the baseline's data exposure impact its performance?

For the first question, we evaluate the variance of the policy gradient equipped with UAE using the law of total variance, as expressed in Equation \ref{eq:29}. Our focus is primarily on $\Sigma_{\tau}$, which arises from trajectory sampling. We investigate how varying the value of $\lambda$ allows us to reduce the temporal spread and mitigate the trajectory noise, as depicted in Figure \ref{fig:traj}.
\begin{equation}
  \label{eq:29}
\begin{aligned}
    \mathbb{V}_{s, a}[u_{\theta} A^{\text{UAE}(\gamma, \lambda)}] & = \mathbb{E}_{s, a}[\mathbb{V}_{\tau | s, a}[u_{\theta} A^{\text{UAE}(\gamma, \lambda)}]] + \mathbb{V}_{s, a}[\mathbb{E}_{\tau | s, a}[u_{\theta} A^{\text{UAE}(\gamma, \lambda)}]] \\
  & = \Sigma_{\tau} + \Sigma_{s, a}.
\end{aligned}
\end{equation}
Regarding the second question, we compare our residual baseline $b_{\phi}^{\pi}(s)$ with an approximate value baseline $\hat{V}(s)$, the sample mean of $Q_{w}(s, a_{i})$ (with $a_{i} \sim \pi$), and a zero baseline. The results demonstrate the significance of incorporating a baseline. The residual baseline reduces off-policy variance more effectively while maintaining a similar reduction in on-policy variance (see Figure \ref{fig:var}).

To address the third question, we examine how the baseline's exposure to data impacts its effectiveness. Gradually increasing the number of segments reveals that training the baseline with more data enhances its ability to stabilize the learning process (see Figure \ref{fig:effect-N}).
\begin{figure}[t]
  \centering
  \subfigure[Trajectory variance]{\label{fig:traj}\includegraphics[width=0.32\textwidth]{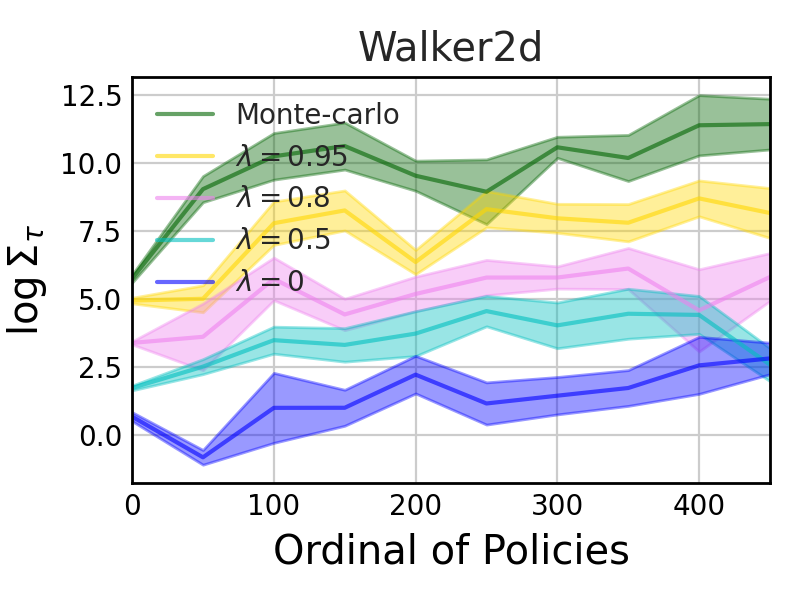}}\hfill
  \subfigure[Variance of both on- and off-policy gradient]{\label{fig:var}\includegraphics[width=0.32\textwidth]{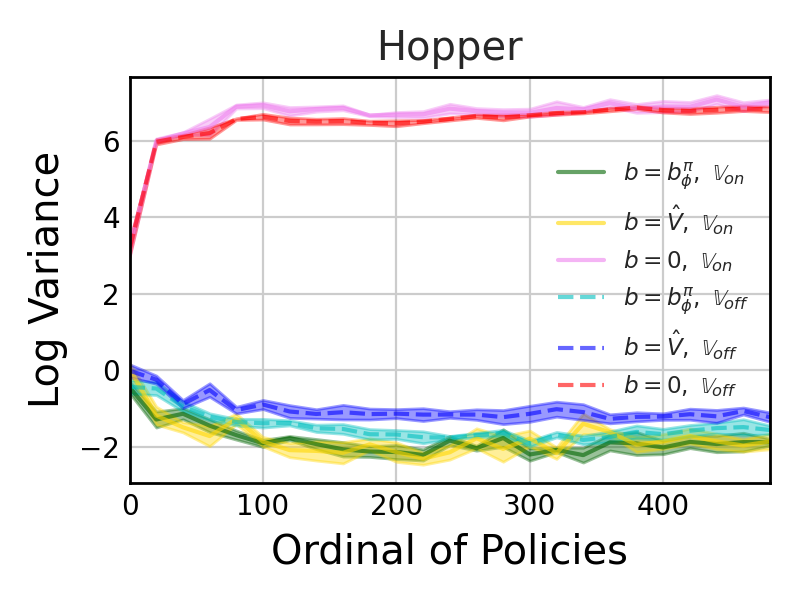}}\hfill
  \subfigure[Varying training samples for baseline learning]{\label{fig:effect-N}\includegraphics[width=0.32\textwidth]{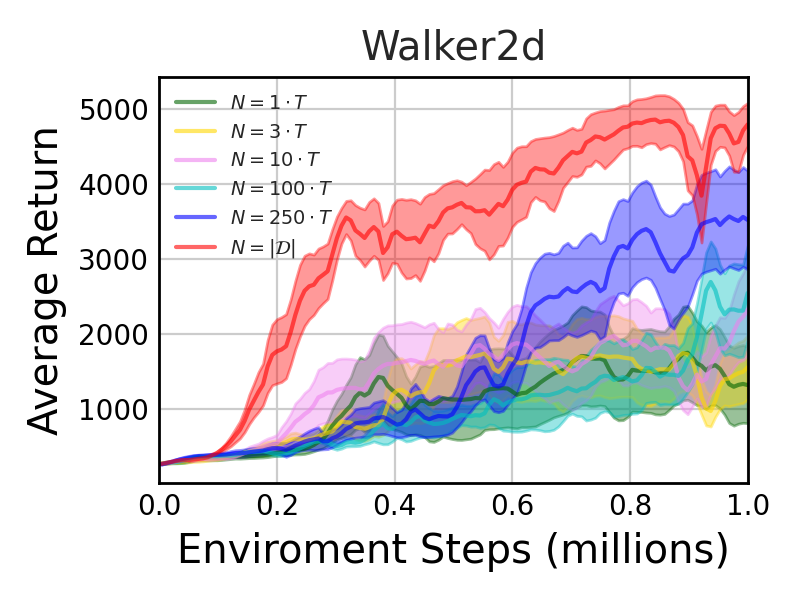}}
\end{figure}
\paragraph{Distributional Critic} DPO proactively performs off-policy evaluation steps before the policy improvement starts. In such a long-horizon scenario (as seen in Figure \ref{fig:1}), it requires preventing overfitting on the finite dataset and generalizing well on the unseen data. We empirically test this ability on our method along with other predominant evaluation methods, such as MSE with single Q \cite{ernst2005tree} \cite{DBLP:conf/l4dc/FanWXY20}, and MSE with double Q \cite{DBLP:conf/icml/FujimotoHM18}. The results in Figure \ref{fig:dist} demonstrate that the distributional critic in DPO excels in both online prediction and training generalization.
\begin{figure}[!htb]
  \begin{minipage}{0.50\textwidth}
    \centering
    \subfigure[Prediction Error]{%
      \includegraphics[width=0.48\textwidth]{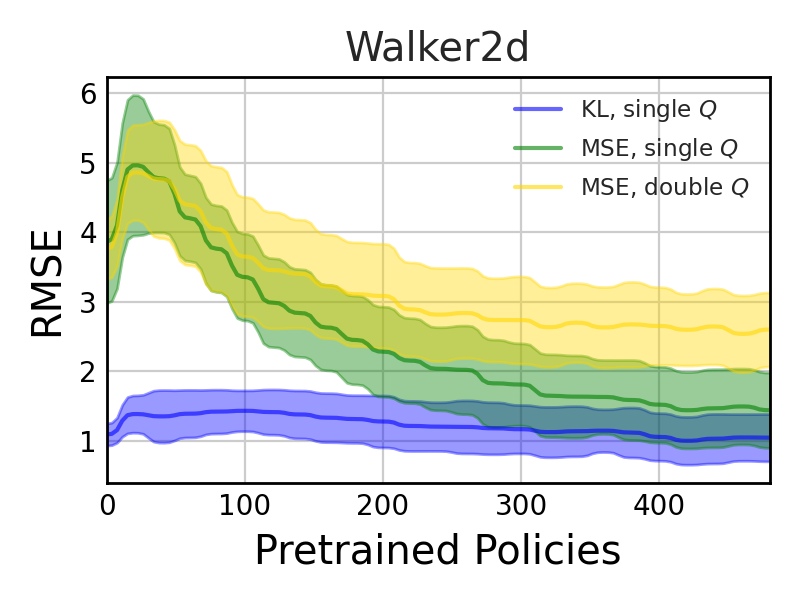}
      \label{fig:5a}%
    }\hfill
    \subfigure[Generalization Error]{%
      \includegraphics[width=0.48\textwidth]{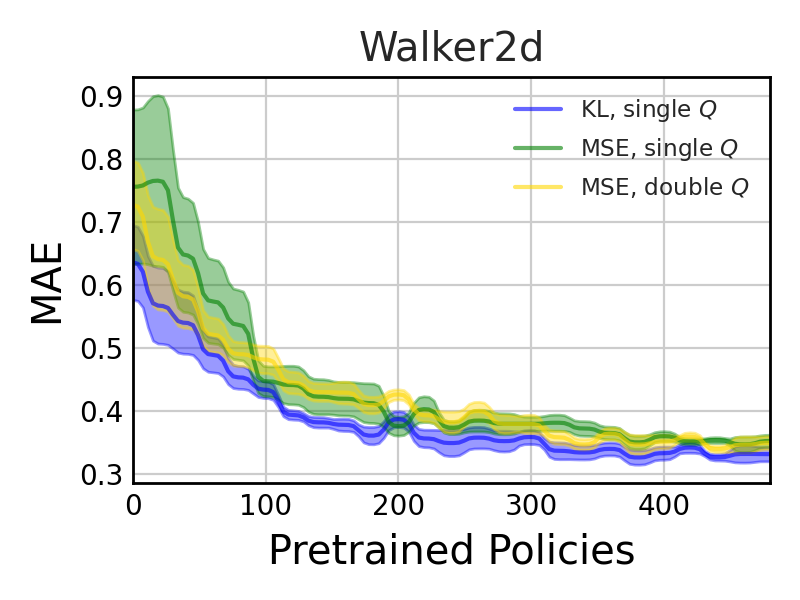}
      \label{fig:5b}%
    }
    \caption{\textbf{Left}: root mean squared error (RMSE) between  $\mathbb{E}[r + \gamma Z_{w}(s', a')]$ and $\mathbb{E}[Z_{w}(s, a)]$ on the on-policy rollouts; \textbf{Right}: mean absolute error (MAE) between the true value $Q^{\pi}$ and the fitted value $Q_{w}$ on the test data.}
    \label{fig:dist}
  \end{minipage}
  \label{fig-tab}
  \hspace{0.5cm}
\begin{minipage}{0.45\textwidth}
    \centering
    \begin{table}[H]
    \caption{Ablation study on key components, policy gradient, and data usage.}
    \vskip 0.15in      
    \label{tab:ablation}
    \begin{small}
    \begin{sc}
    \scalebox{0.8}{\begin{tabular}{lccccr}
    \toprule
    Method  & Walker2d & Hopper & Ant \\
    \midrule 
    no-UAE  & 3710 $\pm$ 562  & 2436 $\pm$ 1282 & 1747 $\pm$ 227 \\
    no-LB &  4128 $\pm$ 807 & 1090 $\pm$ 213 & 3866 $\pm$ 1251\\
    no-KL &  4408 $\pm$ 388 & 2988 $\pm$ 239 & 3961 $\pm$ 921\\
    \midrule
    no-INT & 3341 $\pm$ 1718 & 2800 $\pm$ 1030 & 3560 $\pm$ 1775 \\
    no-ENT & 4194 $\pm$ 790  & 2830 $\pm$ 910  & 4095 $\pm$ 1175 \\
    \midrule
    only-ON & 4174 $\pm$ 980  & 2601 $\pm$ 1107 & 4053 $\pm$ 814 \\
    only-OFF & 3743 $\pm$ 1186 & \textbf{3223} $\pm$ 329  & 3857 $\pm$ 1083 \\
    \midrule 
    DPO & \textbf{4860}$\pm$ 680  & 3187$\pm$ 351 & \textbf{5278}$\pm$ 173 \\
    \bottomrule
    \end{tabular}}
    \end{sc}
  \end{small}
  \vskip 0.2in  
\end{table}
\end{minipage}
\end{figure}

\paragraph{Ablation Study}
\label{sec:ablation-study}
We analyze the contributions of UAE, the residual baseline, and KL divergence loss to the performance improvement achieved by DPO. The results are presented in the first group of Table \ref{tab:ablation}. Removing these components leads to varying degrees of performance decrease, underscoring the effectiveness of each component. In the second group, when we eliminate interpolation, both entropy regularization and positive advantage are removed, resulting in inferior performance. Retaining the positive advantage but removing the entropy regularization leads to improved results but still falls short of comparable performance. The results in the third group demonstrate that combining on- and off-policy evaluation is essential for realizing the full benefits of distributional learning. These findings highlight how DPO enhances sample efficiency, promotes exploration, and enables stable learning.

\section{Related Work}
\paragraph{Variance reduction techniques} State-dependent baseline has been widely studied in \cite{DBLP:conf/nips/GreensmithBB01} \cite{DBLP:conf/uai/WeaverT01}, whose application can be found in \cite{DBLP:journals/nn/PetersS08}. Although the optimal baseline sounds from a theoretical perspective, its practical use is rare. The conventionalized alternative -- the value function that can be estimated directly from the interaction, prevails in on-policy algorithms, such as REINFORCE \cite{DBLP:journals/ml/Williams92} and A2C \cite{Sutton1998}. However, it is often brittle to the quality of the function approximation \cite{DBLP:conf/iclr/IlyasESTJRM20} and has a gap between the optimal one. There are also fruitful works that focus on the action-dependent baseline, with stein identity \cite{DBLP:conf/iclr/LiuFMZ0018}, or factorization \cite{DBLP:conf/iclr/WuRDKBKMA18}. But \cite{DBLP:conf/icml/TuckerBGTGL18} points out that the gain of the action-dependent baseline is often insignificant due to the function approximation and overweighed by other variance components such as the trajectory variance. GAE \cite{DBLP:journals/corr/SchulmanMLJA15} accounts for this by exponentially interpolating different advantage estimators to reduce the temporal spread, while introducing some bias. In Monte Carlo theory, the baseline methods are different kinds of control variate. Beyond RL, to reduce variance, gradient-based optimization can combine both control variate and the reparameterization estimator \cite{DBLP:conf/iclr/GrathwohlCWRD18}, and inference problems \cite{DBLP:conf/icml/MnihG14} \cite{DBLP:conf/icml/PaisleyBJ12} utilize the control variate for score function estimator.


\paragraph{Distributional Learning} The distributional perspective in reinforcement learning (RL) extends scalar value functions to distributions, a concept first systematically explored by \cite{DBLP:conf/icml/BellemareDM17}. In line with this approach, QR-DQN \cite{DBLP:conf/aaai/DabneyRBM18} and IQN \cite{DBLP:conf/icml/DabneyOSM18} have been developed for discrete control tasks. For continuous control, D4PG \cite{DBLP:conf/iclr/Barth-MaronHBDH18}, which builds upon DDPG and incorporates a distributional critic, has achieved state-of-the-art performance. In D4PG, various distributional forms, such as categorical and mixtures of Gaussians, were explored, with the latter minimizing empirical cross-entropy. \cite{DBLP:journals/corr/abs-2204-10256} conducted empirical investigations into the effects of Mahalanobis reweighting and representation learning in a Gaussian critic, concluding that both are beneficial. While our work primarily focuses on the mean value of the value distribution, its analytical and computationally efficient nature makes it applicable to a wide range of settings, distinguishing it from previous methods.

\paragraph{Policy gradient interpolation}
Q-prop \cite{DBLP:conf/iclr/GuLGTL17} employs an additional off-policy critic, which serves two purposes: 1) reducing variance in on-policy gradient through control variate; and 2) combining DDPG-style policy updates with on-policy gradient solely on on-policy data. To fully exploit two sources of data, IPG \cite{DBLP:conf/nips/GuLTGSL17} interpolates an on-policy gradient with an off-policy gradient \cite{degris2012off}, but separately approximates a value function and a state-action value critic, which risks accumulating the compounding error. In practice, it is hard to determine the interpolating parameter. P3O \cite{DBLP:conf/uai/FakoorCS19} adaptively adjusts the hyperparameter and uses KL divergence to control the off-policyness. PGQL \cite{DBLP:conf/iclr/ODonoghueMKM17} combines an entropy-regularized policy gradient with a Q-learning style policy gradient.


\paragraph{Positive advantage} Though the topic is not widely studied, it serves straightforward purposes -- selecting advantageous actions or avoiding bad updates. \cite{DBLP:conf/nips/TesslerTM19} fits an autoregressive actor network to the action that has a positive advantage. \cite{DBLP:books/sp/12/Hasselt12} remains the actor unchanged if the TD error is negative to avoid bad updates.

\section{Conclusion}
In this paper, we proposed a novel learning framework that incorporates several key components. Our algorithm is the first to seamlessly integrate on-policy algorithms with off-policy data, striking a balance between stable learning and sample efficiency. We provide theoretical insights and experimental justifications, offering a comprehensive understanding of each algorithmic component. This kind of mixture would be inspiring for future algorithm design.

\bibliography{main}
\bibliographystyle{tmlr}

\appendix
\section{Optimal Baseline}
\label{sec:opt-base}
We define gradient components as follows:
\begin{equation}
  \begin{split}
    g & = u_{\theta}(s, a) (Q^{\pi}(s, a) - b(s))\\
    g_{1} & = u_{\theta}(s, a) Q^{\pi}(s, a)\\
    g_{2} & = u_{\theta}(s, a) b(s)
  \end{split}
\end{equation}
Note that $g = g_{1} - g_{2}$, therefore,

\begin{equation}
  \begin{aligned}
    \Var[g] & = \mathbb{E}_{\rho_{\pi}, \pi}[(g - \mathbb{E}_{\rho_{\pi}, \pi}[g])^{T}(g - \mathbb{E}_{\rho_{\pi}, \pi}[g])] \\
            & = \mathbb{E}_{\rho_{\pi}, \pi}\Bigl[\bigl((g_{1} - \mathbb{E}_{\rho_{\pi}, \pi}[g_{1}]) - (g_{2} - \mathbb{E}_{\rho_{\pi}, \pi}[g_{2}])\bigr)^{T} \bigl((g_{1} - \mathbb{E}_{\rho_{\pi}, \pi}[g_{1}]) - (g_{2} - \mathbb{E}_{\rho_{\pi}, \pi}[g_{2}])\bigr) \Bigr] \\
            & = \Var[g_{1}] + \Var[g_{2}] - 2 \mathbb{E}_{\rho_{\pi}, \pi} [(g_{1} - \mathbb{E}_{\rho_{\pi}, \pi}[g_{1}])^{T} (g_{2} - \cancelto{0}{\mathbb{E}_{\rho_{\pi}, \pi}[g_{2}]})] \\
            & = \Var[g_{1}] + \Var[g_{2}] - 2 \mathbb{E}_{\rho_{\pi}, \pi} [g_{1}^{T}g_{2}] - 2 \cancelto{0}{(\mathbb{E}_{\rho_{\pi}, \pi}[g_{1}])^{T} \mathbb{E}_{\rho_{\pi}, \pi}[g_{2}]} \\
            & = \Var[g_{1}] + \Var[g_{2}] - 2 \mathbb{E}_{\rho_{\pi}, \pi} [g_{1}^{T}g_{2}]
  \end{aligned}
\end{equation}

Given a state $s$, we can omit the expectation over $\rho_{\pi}$, which is taken on the state space. It turns out to be:
\begin{equation}
  \begin{split}
    \Var[g|s] & = \Var[g_{1}|s] + \Var[g_{2}|s] - 2 \mathbb{E}_{\pi} [g_{1}^{T}g_{2}|s] \\
              & = \mathbb{E}_{\pi}[u_{\theta}(s, a)^{T}u_{\theta}(s, a) Q^{\pi}(s, a)^{2} | s] - \\
              & \quad\ (\mathbb{E}_{\pi}[u_{\theta}(s, a) Q^{\pi}(s, a) | s])^{2} + \\
              & \quad\ \mathbb{E}_{\pi}[u_{\theta}(s, a)^{T}u_{\theta}(s, a) b(s)^{2} | s] - \\
              & \quad\ 2 \mathbb{E}_{\pi}[u_{\theta}(s, a)^{T}u_{\theta}(s, a) Q^{\pi}(s, a) b(s) | s]
  \end{split}
\end{equation}

In an attempt to minimize this variance, using the fact that $\Var[g_{1}|s]$ doesn't dependent on $b(s)$, we can differentiate it w.r.t. $b$, immediately, we get:
\begin{equation}
  b^{\star}(s) = \frac{\mathbb{E}_{\pi}[u_{\theta}(s, a)^{T}u_{\theta}(s, a) Q^{\pi}(s, a) | s]}{\mathbb{E}_{\pi}[u_{\theta}(s, a)^{T}u_{\theta}(s, a) | s]}
\end{equation}
Define
\begin{equation}
  \begin{aligned}[c]
    \tilde{\pi}(a | s) = \frac{\pi(a | s) u_{\theta}^{\top}u_{\theta}}{\mathbb{E}_{\pi}[u_{\theta}^{\top}u_{\theta} | s]}
  \end{aligned}
  \qquad
  \begin{aligned}[c]
    l(s, a) = \frac{u_{\theta}^{\top}u_{\theta}}{\mathbb{E}_{\pi}[u_{\theta}^{\top}u_{\theta} | s]}
  \end{aligned}
\end{equation}
then
\begin{equation}
  b^{\star}(s) = \mathbb{E}_{\tilde{\pi}}[Q^{\pi}(s, a) | s]
\end{equation}

\paragraph{Control Variate}
We are interested in computing $\mathbb{E}_{p(x)}[f(x)]$. However, it may have a high variance. If we introduce another quantity $g(x)$ with a known expectation $\mathbb{E}_{p(x)}[g(x)]$ or whose approximation can be easy. Then we can let $f' = f - a (g - \mathbb{E}[g])$, which will have a same expectation as $\mathbb{E}_{p(x)}[f(x)]$.

We can write out the variance of the new quantity as
\begin{equation}
  \mathbb{V}(f') = \mathbb{V}(f) - 2 a \Cov(f, g) + a^{2} \mathbb{V}(g)
\end{equation}

Appropriately adjusting the scalar $a$, we can achieve a lower variance but not change the expectation. It can be analytically solved by minimizing the above quantity w.r.t. $a$
\begin{equation}
  a^{\star} = \frac{\Cov(f, g)}{\mathbb{V}(g)}
\end{equation}
The reduction in ratio can be expressed as
\begin{equation}
  \frac{\mathbb{V}(f')}{\mathbb{V}(f)} = 1 - \mathrm{corr}(f, g)
\end{equation}
The greater correlation between $f$ and $g$ is, the greater variance reduction would attain.

In terms of policy gradient, we can consider $f = u_{\theta}Q^{\pi}$ and $g = u_{\theta}b$, where b is a state-dependent baseline. Since $\mathbb{E}[u_{\theta}b] = 0$, the new estimator will be $f' = f - a (u_{\theta}b - 0) = u_{\theta}(Q^{\pi} - a b)$, whose optimal value of $a$ is
\begin{equation}
  a^{\star} = \frac{\Cov(u_{\theta}Q^{\pi}, u_{\theta}b)}{\mathbb{V}(u_{\theta}b)} = \frac{\mathbb{E}[u_{\theta}^{\top} u_{\theta} Q^{\pi} b]}{\mathbb{E}[u_{\theta}^{\top} u_{\theta} b^{2}]} = \frac{\mathbb{E}[u_{\theta}^{\top} u_{\theta} Q^{\pi}]}{\mathbb{E}[u_{\theta}^{\top} u_{\theta} b]} \cdot \frac{1}{b}
\end{equation}
Then $b^{\star} = a^{\star} b = \frac{\mathbb{E}[u_{\theta}^{\top} u_{\theta} Q^{\pi}]}{\mathbb{E}[u_{\theta}^{\top} u_{\theta} b]}$, as what optimal baseline is.
\section{Unified Advantage Estimator}
\label{sec:uae}
\subsection{Proof of Proposition \ref{prop:unbiased}}
For any $n \in \mathbb{N}^{+}$, we telescope over residual terms
\begin{equation}
  \label{eq:40}
  \sum\limits_{l=0}^{n-1}\gamma^{l}\delta_{t+l} = \biggl(\sum\limits_{l=0}^{n-1}\gamma^{l}r_{t+l} + \gamma^{n}\Psi_{t + n} - b_{t}\biggr) + \sum\limits_{l=1}^{n-1}\gamma^{l} (\Psi_{t+l} - b_{t+l})
\end{equation}
Since the $\Psi$ is the true quantity we care about, being either $Q^{\pi}$ or $V^{\pi}$, thus the Bellman expectation equation is naturally agreed. Denote $z_{t} = \Psi_{t} - b_{t}$, and move the $z_{t}$ terms from the lefthand side to the righthand size, by taking the expectation of the both sides, it can be shown that
\begin{equation}
  \label{eq:41}
  \begin{aligned}
  \mathbb{E}_{\pi}[A_{t}^{(n)}] & = \mathbb{E}_{\pi}\biggl[\sum\limits_{l=0}^{n-1}\gamma^{l}r_{t+l} + \gamma^{n}\Psi_{t + n} - b_{t}\biggr] \\
  & = \Psi_{t} - b_{t} \\
  & = A_{t}^{\pi, b}
  \end{aligned}
\end{equation}
\subsection{Proof of Equation \ref{eq:6}}
\begin{equation}
  \begin{split}
  \hat{A}_{t}^{\text{UAE}(\gamma, \lambda)} & = (1 - \lambda) (\hat{A}_{t}^{(1)} + \lambda \hat{A}_{t}^{(2)} + \lambda^{2}\hat{A}_{t}^{(3)} + \dots) \\
  & = (1 - \lambda) \bigl( \delta_{t} + \lambda (\delta_{t} + \gamma \delta_{t + 1} - \gamma z_{t + 1} ) + \lambda^{2} (\delta_{t} + \gamma \delta_{t + 1} - \gamma z_{t + 1} + \gamma^{2} \delta_{t + 2} - \gamma^{2} z_{t + 2}) + \dots \bigr) \\
  & = (1 - \lambda) \Bigl(\bigl(\delta_{t} (1 + \lambda + \dots) + \gamma \delta_{t + 1} (\lambda + \lambda^{2} + \dots) + \dots \bigr) \\
  &\qquad\qquad\ \quad - \bigl(\gamma z_{t + 1} (\lambda + \lambda^{2} + \dots) + \gamma^{2} z_{t + 2} (\lambda^{2} + \lambda^{3} + \dots) + \dots \bigr)\Bigr) \\
  & = (1 - \lambda) \Bigl( \bigl(\frac{\delta_{t}}{1 - \lambda} + (\gamma \lambda) \frac{\delta_{t + 1}}{1 - \lambda} + \dots \bigr) - \bigl((\gamma \lambda) \frac{z_{t + 1}}{1 - \lambda} + (\gamma \lambda)^{2} \frac{z_{t + 2}}{1 - \lambda} + \dots \bigr)\Bigr) \\
  & = \sum\limits_{l = 0}^{\infty}(\gamma \lambda)^{l} \delta_{t + l} - \sum\limits_{l = 1}^{\infty}(\gamma \lambda)^{l} z_{t + l} \\
  & = \delta_{t} + \sum\limits_{l = 1}^{\infty}(\gamma \lambda)^{l} (\delta_{t + l} - z_{t + l})
  \end{split}
\end{equation}
\subsection{Connection between UAE and SARSA($\lambda$)}
\label{uae-sarsa}
We will first show the identity also holds for SARSA($\lambda$), and then connects it with UAE.
\begin{equation}
  \begin{aligned}
  G_{t}^{\lambda, Q} & = (1 - \lambda) \sum\limits_{n = 1}^{\infty}G_{t}^{(n), Q} \\
    & = (1 - \lambda) (r_{t} + \gamma Q_{t + 1}) + \\
    &\quad\ \  (1 - \lambda) (r_{t} + \gamma r_{t + 1} + \gamma^{2}Q_{t + 2}) + \\
    &\quad\ \  \dots
  \end{aligned}
\end{equation}
By iteratively merging every same reward term of each expansion, we have
\begin{equation}
  \label{eq:sarsa}
\begin{aligned}
    G_{t}^{\lambda, Q} & = r_{t} + (1 - \lambda)\gamma\lambda^{0} Q_{t + 1} \\
& \quad\quad\ + (1 - \lambda)\gamma\lambda^{1}(r_{t + 1} + \gamma Q_{t + 2}) \\
& \quad\quad\ + (1 - \lambda)\gamma\lambda^{2}(r_{t + 1} + \gamma r_{t + 2} + \gamma^{2} Q_{t + 3}) \\
& \quad\quad\ + \dots \\
& = r_{t} + \gamma\lambda^{0} (1 - \lambda) Q_{t + 1} + \\
& \quad\quad\ \gamma \lambda r_{t + 2} + \gamma^{2}\lambda^{1} (1 - \lambda) Q_{t + 2} \\
& \quad\quad\ \dots
\end{aligned}
\end{equation}
Add ($Q_{t} - Q_{t}$) to Equation \ref{eq:sarsa} without changing the value
\begin{equation}
\begin{aligned}
    G_{t}^{\lambda, Q} & = Q_{t} - Q_{t} + r_{t} + \gamma Q_{t + 1} \\
  &\quad - \gamma \lambda Q_{t + 1} + \gamma \lambda r_{t + 1} + \gamma^{2} \lambda Q_{t + 2} \\
  &\quad - \gamma^{2} \lambda^{2} Q_{t + 2} + \dots \\
  & = Q_{t} + \sum\limits_{n=0}^{\infty}(\gamma\lambda)^{n} \delta_{t + n}^{Q}
\end{aligned}
\end{equation}
Since we can pull out the first residual term
\begin{equation}
\begin{aligned}
  G_{t}^{\lambda, Q} & = Q_{t} + \delta_{t}^{Q} + \sum\limits_{n=1}^{\infty}(\gamma\lambda)^{n} \delta_{t + n}^{Q} \\
  & = Q_{t} + r_{t} + \gamma Q_{t + 1} - Q_{t} + \sum\limits_{n=1}^{\infty}(\gamma\lambda)^{n} \delta_{t + n}^{Q}  \\
  & = r_{t} + \gamma Q_{t + 1} + \sum\limits_{n=1}^{\infty}(\gamma\lambda)^{n} \delta_{t + n}^{Q}  \\
\end{aligned}
\end{equation}
then it follows that
\begin{equation}
\begin{aligned}
  G_{t}^{\lambda, Q} - b_{t} & = r_{t} + \gamma Q_{t + 1} - b_{t} + \sum\limits_{n=1}^{\infty}(\gamma\lambda)^{n} \delta_{t + n}^{Q}  \\
  & = \delta_{t} + \sum\limits_{l = 1}^{\infty} (\gamma\lambda)^{l} (\delta_{t + l} - z_{t + l}) \\
  & = \hat{A}_{t}^{\text{UAE}(\gamma, \lambda)}  
\end{aligned}
\end{equation}
\subsection{Proof of Theorem \ref{thm:diff-uae-gae}}
\begin{lemma}
  \label{lm:coveq}
  For any $0 \leq i < j$, it holds that
  \begin{equation}
    \begin{aligned}
    & \Cov(r_{t + i} + \gamma V^{\pi}_{t + i + 1} - V^{\pi}_{t + i}, r_{t + j} + \gamma V^{\pi}_{t + j + 1} - V^{\pi}_{t + j}) \\
    & = \Cov(r_{t + i} + \gamma V^{\pi}_{t + i + 1} - V^{\pi}_{t + i}, r_{t + j} + \gamma Q^{\pi}_{t + j + 1} - Q^{\pi}_{t + j})
    \end{aligned}
  \end{equation}
\end{lemma}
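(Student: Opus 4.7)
The plan is to show that both sides of the claimed equality are in fact zero, exploiting that TD residuals are martingale differences with respect to the natural filtration of the trajectory. The equality then reduces to $0 = 0$, but writing the argument out explicitly clarifies where the condition $i < j$ enters.

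First, I will introduce the shorthand $X := r_{t+i} + \gamma V^{\pi}(s_{t+i+1}) - V^{\pi}(s_{t+i})$, $Y_V := r_{t+j} + \gamma V^{\pi}(s_{t+j+1}) - V^{\pi}(s_{t+j})$, and $Y_Q := r_{t+j} + \gamma Q^{\pi}(s_{t+j+1}, a_{t+j+1}) - Q^{\pi}(s_{t+j}, a_{t+j})$, together with the filtration $\mathcal{F}_k := \sigma(s_{t}, a_{t}, r_{t}, \dots, s_{t+k})$. The key structural observation is that since $i < j$, we have $i + 1 \leq j$, so $X$ is $\mathcal{F}_j$-measurable and \emph{a fortiori} $\sigma(\mathcal{F}_j, a_{t+j})$-measurable.

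Second, I will apply the Bellman expectation equations at time step $t+j$. Averaging over $a_{t+j} \sim \pi(\cdot \mid s_{t+j})$, $r_{t+j}$, and $s_{t+j+1}$ yields $\mathbb{E}[Y_V \mid \mathcal{F}_j] = 0$, and conditioning further on $a_{t+j}$ and then averaging over $r_{t+j}$, $s_{t+j+1}$, and $a_{t+j+1} \sim \pi(\cdot \mid s_{t+j+1})$ yields $\mathbb{E}[Y_Q \mid \mathcal{F}_j, a_{t+j}] = 0$. In both cases, $\mathbb{E}[Y_V] = \mathbb{E}[Y_Q] = 0$ by the tower property.

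Third, since $X$ sits inside the relevant conditioning sigma-algebra in each case, the tower property again gives $\mathbb{E}[X Y_V] = \mathbb{E}\bigl[X \cdot \mathbb{E}[Y_V \mid \mathcal{F}_j]\bigr] = 0$ and likewise $\mathbb{E}[X Y_Q] = \mathbb{E}\bigl[X \cdot \mathbb{E}[Y_Q \mid \mathcal{F}_j, a_{t+j}]\bigr] = 0$. Combining these with $\mathbb{E}[Y_V] = \mathbb{E}[Y_Q] = 0$ gives $\Cov(X, Y_V) = \Cov(X, Y_Q) = 0$, which is the claimed identity. No real obstacle is anticipated; the only care required is the index bookkeeping that places $X$ inside the appropriate filtration, which is precisely what the hypothesis $i < j$ provides.
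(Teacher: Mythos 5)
Your proof is correct, but it reaches the conclusion by a different route than the paper. You prove the stronger statement that \emph{both} covariances vanish: since $i<j$ implies $i+1\le j$, the residual $\delta^{V^\pi}_i$ is measurable with respect to $\mathcal{F}_j$ (and a fortiori with respect to $\sigma(\mathcal{F}_j,a_{t+j})$), while the Bellman expectation equations give $\mathbb{E}[\delta^{V^\pi}_j\mid\mathcal{F}_j]=0$ and $\mathbb{E}[\delta^{Q^\pi}_j\mid\mathcal{F}_j,a_{t+j}]=0$, so the tower property kills both cross moments and the identity reduces to $0=0$. The paper instead establishes the equality directly, without asserting that either side is zero: it writes $\Cov(\delta^{V^\pi}_i,\delta^{Q^\pi}_j)=\mathbb{E}_{\tau|s_t,a_t}[\delta^{V^\pi}_i\delta^{Q^\pi}_j]$ and then integrates out the actions $a_{t+j},a_{t+j+1}$ in the inner expectation to replace $Q^\pi$ by $V^\pi$, landing on $\mathbb{E}_{\tau|s_t,a_t}[\delta^{V^\pi}_i\delta^{V^\pi}_j]$. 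Your version buys two things: it yields more information (the covariance terms themselves vanish, which is what the surrounding Theorem~\ref{thm:diff-uae-gae} ultimately exploits anyway when it shows the $\chi$ terms are zero), and it sidesteps a delicate point in the paper's marginalization step, namely that conditioning on both $s_{t+j}$ and $s_{t+j+1}$ changes the conditional law of $a_{t+j}$ away from $\pi(\cdot\mid s_{t+j})$; your choice of conditioning sets ($\mathcal{F}_j$ for $Y_V$, $\sigma(\mathcal{F}_j,a_{t+j})$ for $Y_Q$) avoids this entirely. The paper's approach, on the other hand, makes the ``replace $Q^\pi$ by $V^\pi$'' mechanism explicit, which is the form in which the lemma is quoted later. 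Both arguments rest on the same two ingredients — the Bellman expectation equation and the law of total expectation under the Markov property — so the difference is one of organization rather than substance.
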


\begin{proof}
  First note
  \begin{equation}
    \begin{aligned}
    \mathbb{E}_{\tau|s_{t}, a_{t}}[r_{t + j} + \gamma V^{\pi}_{t + j + 1} - V^{\pi}_{t + j}] & = 0\\
    \mathbb{E}_{\tau|s_{t}, a_{t}}[r_{t + j} + \gamma Q^{\pi}_{t + j + 1} - Q^{\pi}_{t + j}] & = 0      
    \end{aligned}
  \end{equation}
  And denote
  \begin{equation}
    \begin{aligned}
      \delta^{V^{\pi}}_{i} & = r_{t + i} + \gamma V^{\pi}_{t + i + 1} - V^{\pi}_{t + i} \\
      \delta^{Q^{\pi}}_{j} & = r_{t + j} + \gamma Q^{\pi}_{t + j + 1} - Q^{\pi}_{t + j} 
    \end{aligned}
  \end{equation}
  then
  \begin{equation}
    \begin{aligned}
    \Cov(\delta^{V^{\pi}}_{i}, \delta^{V^{\pi}}_{j}) & = \mathbb{E}_{\tau|s_{t}, a_{t}}[\delta^{V^{\pi}}_{i}\delta^{V^{\pi}}_{j}] \\
    \Cov(\delta^{V^{\pi}}_{i}, \delta^{Q^{\pi}}_{j}) & = \mathbb{E}_{\tau|s_{t}, a_{t}}[\delta^{V^{\pi}}_{i}\delta^{Q^{\pi}}_{j}]      
    \end{aligned}
  \end{equation}
  By law of total expectation and Markov property
  \begin{equation}
    \begin{aligned}
    \Cov(\delta^{V^{\pi}}_{i}, \delta^{Q^{\pi}}_{j}) & = \mathbb{E}_{\tau|s_{t}, a_{t}}[\delta^{V^{\pi}}_{i}\delta^{Q^{\pi}}_{j}] \\
    & = \mathbb{E}_{s_{t + i}, a_{t + i}, s_{t + i + 1}, s_{t + j}, s_{t + j + 1} | s_{t}, a_{t}}\mathbb{E}_{\tau|s_{t + i}, a_{t + i}, s_{t + i + 1}, s_{t + j}, s_{t + j + 1}, s_{t}, a_{t}}[\delta^{V^{\pi}}_{i}\delta^{Q^{\pi}}_{j}] \\
    & = \mathbb{E}_{s_{t + i}, a_{t + i}, s_{t + i + 1}, s_{t + j}, s_{t + j + 1} | s_{t}, a_{t}}[\delta^{V^{\pi}}_{i}\mathbb{E}_{\tau|s_{t + i}, a_{t + i}, s_{t + i + 1}, s_{t + j}, s_{t + j + 1}, s_{t}, a_{t}}[\delta^{Q^{\pi}}_{j}]] \\
    & = \mathbb{E}_{s_{t + i}, a_{t + i}, s_{t + i + 1}, s_{t + j}, s_{t + j + 1} | s_{t}, a_{t}}[\delta^{V^{\pi}}_{i}\mathbb{E}_{a_{t + j + 1}, a_{t + j} | s_{t + j}, s_{t + j + 1}}[r_{t + j} + \gamma Q^{\pi}_{t + j + 1} - Q^{\pi}_{t + j}]] \\
    & = \mathbb{E}_{s_{t + i}, a_{t + i}, s_{t + i + 1}, s_{t + j}, a_{t + j}, s_{t + j + 1} | s_{t}, a_{t}}[\delta^{V^{\pi}}_{i}(r_{t + j} + \gamma V^{\pi}_{t + j + 1} - V^{\pi}_{t + j})] \\
    & = \mathbb{E}_{s_{t + i}, a_{t + i}, s_{t + i + 1}, s_{t + j}, a_{t + j}, s_{t + j + 1} | s_{t}, a_{t}}[\delta^{V^{\pi}}_{i} \delta^{V^{\pi}}_{j}] \\
    & = \mathbb{E}_{\tau | s_{t}, a_{t}}[\delta^{V^{\pi}}_{i} \delta^{V^{\pi}}_{j}]      
    \end{aligned}
  \end{equation}
\end{proof}
By law of total variance, we have
\begin{equation}
\begin{aligned}
    \mathbb{V}_{s_{t}, a_{t}}[u_{\theta} A_{t}^{\text{UAE}(\gamma, \lambda)}] & = \mathbb{E}_{s_{t}, a_{t}}[\mathbb{V}_{\tau | s_{t}, a_{t}}[u_{\theta_{t}} A^{\text{UAE}(\gamma, \lambda)}]] + \mathbb{V}_{s_{t}, a_{t}}[\mathbb{E}_{\tau | s_{t}, a_{t}}[u_{\theta} A^{\text{UAE}(\gamma, \lambda)}]] \\
  & = \mathbb{E}_{s_{t}, a_{t}}[ u^{\top}_{\theta_{t}} u_{\theta_{t}} \mathbb{V}_{\tau | s_{t}, a_{t}}[A^{\text{UAE}(\gamma, \lambda)}]] + \mathbb{V}_{s_{t}, a_{t}}[u_{\theta} \mathbb{E}_{\tau | s_{t}, a_{t}}[ A^{\text{UAE}(\gamma, \lambda)}]] \\  
  & = \mathbb{E}_{s_{t}, a_{t}}[ u^{\top}_{\theta_{t}} u_{\theta_{t}} \mathbb{V}_{\tau | s_{t}, a_{t}}[A^{\text{UAE}(\gamma, \lambda)}]] + \mathbb{V}_{s_{t}, a_{t}}[u_{\theta} (Q^{\pi} - b)]
\end{aligned}
\end{equation}
and similarly for GAE
\begin{equation}
  \mathbb{V}_{s_{t}, a_{t}}[u_{\theta} A_{t}^{\text{GAE}(\gamma, \lambda)}] = \mathbb{E}_{s_{t}, a_{t}}[ u^{\top}_{\theta_{t}} u_{\theta_{t}} \mathbb{V}_{\tau | s_{t}, a_{t}}[A^{\text{GAE}(\gamma, \lambda)}]] + \mathbb{V}_{s_{t}, a_{t}}[u_{\theta} (Q^{\pi} - V^{\pi})]
\end{equation}
The variance due to sampling a trajectory $\tau$ is
\begin{equation}
\begin{aligned}
    \mathbb{V}_{\tau | s_{t}, a_{t}}[A^{\text{UAE}(\gamma, \lambda)}] & = \mathbb{V}(\delta_{t}) + \sum\limits_{l=1}^{\infty} (\gamma\lambda)^{2l} \mathbb{V}(\delta_{t + l} - z_{t + l}) + \\
  &\quad\quad 2 \sum\limits_{1 \leq i < j} (\gamma\lambda)^{i + j} \Cov(\delta_{t + i} - z_{t + i}, \delta_{t + j} - z_{t + j}) + \\
  &\quad\quad 2 \sum\limits_{j > 0} (\gamma\lambda)^{j} \Cov(\delta_{t}, \delta_{t + j} - z_{t + j})
\end{aligned}
\end{equation}
and 
\begin{equation}
\begin{aligned}
    \mathbb{V}_{\tau | s_{t}, a_{t}}[A^{\text{GAE}(\gamma, \lambda)}] & = \sum\limits_{l=0}^{\infty} (\gamma\lambda)^{2l} \mathbb{V}(\delta^{V}_{t + l}) + \\
  &\quad\quad 2 \sum\limits_{0 \leq i < j} (\gamma\lambda)^{i + j} \Cov(\delta^{V}_{t + i}, \delta^{V}_{t + j})
\end{aligned}
\end{equation}
We first compare covariance terms.\\
For $i = 0$
\begin{equation}
\begin{aligned}
  \text{UAE}&:\ \Cov(r_{t} + \gamma \Psi_{t + 1} - b_{t}, r_{t + j} + \gamma \Psi_{t + j + 1} - \Psi_{t + j})\\
  \text{GAE}&:\ \Cov(r_{t} + \gamma V^{\pi}_{t + 1} - V^{\pi}_{t}, r_{t + j} + \gamma V^{\pi}_{t + j + 1} - V^{\pi}_{t + j})
\end{aligned}
\end{equation}
the difference of which is
\begin{equation}
\begin{aligned}
  & \Cov(r_{t} + \gamma \Psi_{t + 1} - b_{t}, r_{t + j} + \gamma \Psi_{t + j + 1} - \Psi_{t + j}) - \Cov(r_{t} + \gamma V^{\pi}_{t + 1} - V^{\pi}_{t}, r_{t + j} + \gamma V^{\pi}_{t + j + 1} - V^{\pi}_{t + j}) \\
  & = \mathbb{E}_{\tau | s_{t}, a_{t}}\left[(r_{t} + \gamma \Psi_{t + 1} - b_{t})(r_{t + j} + \gamma \Psi_{t + j + 1} - \Psi_{t + j})\right] - \mathbb{E}_{\tau | s_{t}, a_{t}}\left[(r_{t} + \gamma V^{\pi}_{t + 1} - V^{\pi}_{t})(r_{t + j} + \gamma V^{\pi}_{t + j + 1} - V^{\pi}_{t + j})\right] \\
  & \triangleq \chi(0, j | \Psi)  
\end{aligned}
\end{equation}
If $\Psi = V^{\pi}$, then it can be merged as
\begin{equation}
\begin{aligned}
    \chi(0, j | V^{\pi}) & = \mathbb{E}_{\tau | s_{t}, a_{t}}\left[(V^{\pi}_{t} - b_{t}) (r_{t + j} + \gamma V^{\pi}_{t + j + 1} - V^{\pi}_{t + j})\right] \\
  & = (V^{\pi}_{t} - b_{t})\mathbb{E}_{\tau | s_{t}, a_{t}}\left[r_{t + j} + \gamma V^{\pi}_{t + j + 1} - V^{\pi}_{t + j}\right] \\
  & = 0
\end{aligned}
\end{equation}
If $\Psi = Q^{\pi}$, by Lemma \ref{lm:coveq}, and denoting $X_{i} = (s_{t + i}, a_{t + i})$ we have
\begin{equation}
\begin{aligned}
  \chi(0, j | Q^{\pi}) & = \mathbb{E}_{\tau | X_{0}}\left[\left(\gamma (Q^{\pi}_{t + 1} - V^{\pi}_{t + 1}) - (b_{t} - V^{\pi}_{t})\right) (r_{t + j} + \gamma Q^{\pi}_{t + j + 1} - Q^{\pi}_{t + j})\right] \\
  & = \mathbb{E}_{X_{1}, X_{j}, X_{j + 1}  | X_{0}}\left[\left(\gamma (Q^{\pi}_{t + 1} - V^{\pi}_{t + 1}) - (b_{t} - V^{\pi}_{t})\right) (r_{t + j} + \gamma Q^{\pi}_{t + j + 1} - Q^{\pi}_{t + j})\right] \\
  & = \mathbb{E}_{X_{1}, X_{j}  | X_{0}}\mathbb{E}_{ X_{j + 1} | X_{1}, X_{j}, X_{0}}\left[\left(\gamma (Q^{\pi}_{t + 1} - V^{\pi}_{t + 1}) - (b_{t} - V^{\pi}_{t})\right) (r_{t + j} + \gamma Q^{\pi}_{t + j + 1} - Q^{\pi}_{t + j})\right] \\
  & = \mathbb{E}_{X_{1}, X_{j}  | X_{0}}\left[\left(\gamma (Q^{\pi}_{t + 1} - V^{\pi}_{t + 1}) - (b_{t} - V^{\pi}_{t})\right) \mathbb{E}_{ X_{j + 1} | X_{1}, X_{j}, X_{0}}\left[ (r_{t + j} + \gamma Q^{\pi}_{t + j + 1} - Q^{\pi}_{t + j})\right] \right]\\
  & = \mathbb{E}_{X_{1}, X_{j}  | X_{0}}\left[\left(\gamma (Q^{\pi}_{t + 1} - V^{\pi}_{t + 1}) - (b_{t} - V^{\pi}_{t})\right) 0 \right]\\
  & = 0  
\end{aligned}
\end{equation}
For $i > 0$
\begin{equation}
\begin{aligned}
  \text{UAE}&:\ \Cov(r_{t + i} + \gamma \Psi_{t + i + 1} - \Psi_{t + i}, r_{t + j} + \gamma \Psi_{t + j + 1} - \Psi_{t + j})\\
  \text{GAE}&:\ \Cov(r_{t + i} + \gamma V^{\pi}_{t + i + 1} - V^{\pi}_{t + i}, r_{t + j} + \gamma V^{\pi}_{t + j + 1} - V^{\pi}_{t + j})
\end{aligned}
\end{equation}
If $\Psi = V^{\pi}$, it is obvious that $\chi(i, j | V^{\pi}) = 0$.
If $\Psi = Q^{\pi}$, similarly we have
\begin{equation}
\begin{aligned}
  \chi(i, j | Q^{\pi}) & = \mathbb{E}_{\tau | X_{0}}\left[\left(\gamma (Q^{\pi}_{t + i + 1} - V^{\pi}_{t + i + 1}) - (Q^{\pi}_{t + i} - V^{\pi}_{t + i})\right) (r_{t + j} + \gamma Q^{\pi}_{t + j + 1} - Q^{\pi}_{t + j})\right] \\
  & = \mathbb{E}_{X_{i + 1}, X_{j}, X_{j + 1}  | X_{0}}\left[\left(\gamma (Q^{\pi}_{t + i + 1} - V^{\pi}_{t + i + 1}) - (Q^{\pi}_{t + i} - V^{\pi}_{t + i})\right) (r_{t + j} + \gamma Q^{\pi}_{t + j + 1} - Q^{\pi}_{t + j})\right] \\
  & = \mathbb{E}_{X_{i + 1}, X_{j}  | X_{0}}\mathbb{E}_{ X_{j + 1} | X_{i + 1}, X_{j}, X_{0}}\left[\left(\gamma (Q^{\pi}_{t + i + 1} - V^{\pi}_{t + i + 1}) - (Q^{\pi}_{t + i} - V^{\pi}_{t + i})\right) (r_{t + j} + \gamma Q^{\pi}_{t + j + 1} - Q^{\pi}_{t + j})\right] \\
  & = \mathbb{E}_{X_{i + 1}, X_{j}  | X_{0}}\left[\left(\gamma (Q^{\pi}_{t + i + 1} - V^{\pi}_{t + i + 1}) - (Q^{\pi}_{t + i} - V^{\pi}_{t + i})\right) \mathbb{E}_{ X_{j + 1} | X_{i + 1}, X_{j}, X_{0}}\left[ (r_{t + j} + \gamma Q^{\pi}_{t + j + 1} - Q^{\pi}_{t + j})\right] \right]\\
  & = \mathbb{E}_{X_{i + 1}, X_{j}  | X_{0}}\left[\left(\gamma (Q^{\pi}_{t + i + 1} - V^{\pi}_{t + i + 1}) - (Q^{\pi}_{t + i} - V^{\pi}_{t + i})\right) 0 \right]\\
  & = 0 
\end{aligned}
\end{equation}
And next we will focus on the variance terms.\\
For $l = 0$
\begin{equation}
  \label{eq:12}
\begin{aligned}
    & \mathbb{V}(\delta_{t}) - \mathbb{V}(\delta^{V^{\pi}}_{t}) \\
  & = \mathbb{E}_{\tau | s_{t}, a_{t}}[(r_{t} + \gamma \Psi_{t + 1} - Q^{\pi}_{t})^{2}] - \mathbb{E}_{\tau | s_{t}, a_{t}}[(r_{t} + \gamma V^{\pi}_{t + 1} - Q^{\pi}_{t})^{2}] \\
  & = \mathbb{E}_{\tau | s_{t}, a_{t}}[(r_{t} + \gamma \Psi_{t + 1})^{2}] - (Q^{\pi}_{t})^{2} - (\mathbb{E}_{\tau | s_{t}, a_{t}}[(r_{t} + \gamma V^{\pi}_{t + 1})^{2}] - (Q^{\pi}_{t})^{2}) \\
  & = \mathbb{E}_{\tau | s_{t}, a_{t}}[(r_{t} + \gamma \Psi_{t + 1})^{2}] - \mathbb{E}_{\tau | s_{t}, a_{t}}[(r_{t} + \gamma V^{\pi}_{t + 1})^{2}] \\
  & = \mathbb{E}_{\tau | s_{t}, a_{t}}[\gamma^{2}( \Psi_{t + 1}^{2} - (V^{\pi}_{t + 1})^{2}) + 2 \gamma (\Psi_{t + 1} - V^{\pi}_{t + 1})] \\
  & = \mathbb{E}_{\tau | s_{t}, a_{t}}[\gamma^{2}( \Psi_{t + 1}^{2} - (V^{\pi}_{t + 1})^{2})] + 2 \gamma \mathbb{E}_{\tau | s_{t}, a_{t}}[\Psi_{t + 1} - V^{\pi}_{t + 1}] \\
  & = \gamma^{2} \mathbb{E}_{\tau | s_{t}, a_{t}}[( \Psi_{t + 1}^{2} - (V^{\pi}_{t + 1})^{2})] \\
  & = \gamma^{2} \mathbb{E}_{s_{t + 1}, a_{t + 1} | s_{t}, a_{t}}[\Psi_{t + 1}^{2}] - \mathbb{E}_{s_{t + 1} | s_{t}, a_{t}} [(\mathbb{E}_{a_{ t + 1 | s_{t + 1}, s_{t}, a_{t}}}[Q^{\pi}_{t + 1}])^{2}] \\
  & = \begin{cases}
        \gamma^{2} \mathbb{E}_{s_{t + 1} | s_{t}, a_{t}}[\mathbb{V}_{a_{t + 1} |  s_{t + 1}, s_{t}, a_{t}} (Q^{\pi}_{t + 1})] & \text{if $\Psi = Q^{\pi}$} \\
        0 & \text{if $\Psi = V^{\pi}$}
      \end{cases} \\
  & = \gamma^{2} \mathbb{E}_{\tau | s_{t}, a_{t}}[(\Psi_{t + 1} - V_{t + 1}^{\pi})^{2}]
\end{aligned}
\end{equation}
For $l > 0$
\begin{equation}
  \label{eq:12}
\begin{aligned}
    & \mathbb{V}(\delta_{t + l}) - \mathbb{V}(\delta^{V^{\pi}}_{t + l}) \\
  & = \mathbb{E}_{\tau | s_{t}, a_{t}}[(r_{t + l} + \gamma \Psi_{t + l + 1} - \Psi_{t + l})^{2}] - \mathbb{E}_{\tau | s_{t}, a_{t}}[(r_{t + l} + \gamma V^{\pi}_{t + l + 1} - V^{\pi}_{t + l})^{2}] \\
  & = \mathbb{E}_{\tau | s_{t}, a_{t}}[(r_{t + l} + \gamma \Psi_{t + l + 1})^{2}] - \mathbb{E}_{\tau | s_{t}, a_{t}}[(r_{t + l} + \gamma V^{\pi}_{t + l + 1})^{2}] - \mathbb{E}_{s_{t + l}, a_{t + l} | s_{t}, a_{t}}[(\Psi_{t + l} - V^{\pi}_{t + l})^{2}] \\
  & = \mathbb{E}_{\tau | s_{t}, a_{t}}[\gamma^{2}( \Psi_{t + l + 1}^{2} - (V^{\pi}_{t + l + 1})^{2}) + 2 \gamma r_{t + l} (\Psi_{t + l + 1} - V^{\pi}_{t + l + 1})] - \mathbb{E}_{s_{t + l}, a_{t + l} | s_{t}, a_{t}}[(\Psi_{t + l} - V^{\pi}_{t + l})^{2}] \\ 
  & = \mathbb{E}_{\tau | s_{t}, a_{t}}[\gamma^{2}( \Psi_{t + l + 1}^{2} - (V^{\pi}_{t + l + 1})^{2})] + 2 \gamma r_{t + l} \mathbb{E}_{\tau | s_{t}, a_{t}}[\Psi_{t + l + 1} - V^{\pi}_{t + l + 1}] - \mathbb{E}_{s_{t + l}, a_{t + l} | s_{t}, a_{t}}[(\Psi_{t + l} - V^{\pi}_{t + l})^{2}] \\
  & = \gamma^{2} \mathbb{E}_{\tau | s_{t}, a_{t}}[( \Psi_{t + l + 1}^{2} - (V^{\pi}_{t + l + 1})^{2})] - \mathbb{E}_{s_{t + l}, a_{t + l} | s_{t}, a_{t}}[(\Psi_{t + l} - V^{\pi}_{t + l})^{2}] \\
  & = \gamma^{2} \mathbb{E}_{s_{t + l + 1}, a_{t + l + 1} | s_{t}, a_{t}}[\Psi_{t + l + 1}^{2}] - \mathbb{E}_{s_{t + l + 1} | s_{t}, a_{t}} [(\mathbb{E}_{a_{ t + l + 1} | s_{t + l + 1}, s_{t}, a_{t}}[Q^{\pi}_{t + l + 1}])^{2}] - \mathbb{E}_{s_{t + l}, a_{t + l} | s_{t}, a_{t}}[(\Psi_{t + l} - V^{\pi}_{t + l})^{2}] \\
    & = \gamma^{2} \mathbb{E}_{\tau | s_{t}, a_{t}}[(\Psi_{t + l + 1} - V_{t + l + 1}^{\pi})^{2}] - \mathbb{E}_{\tau | s_{t}, a_{t}}[(\Psi_{t + l} - V^{\pi}_{t + l})^{2}]
\end{aligned}
\end{equation}
Combining all the results above, we have
\begin{equation}
  \label{eq:final}
\begin{aligned}
  & \mathbb{V}_{s_{t}, a_{t}}[u_{\theta} A_{t}^{\text{UAE}(\gamma, \lambda)}] - \mathbb{V}_{s_{t}, a_{t}}[u_{\theta} A_{t}^{\text{GAE}(\gamma, \lambda)}] \\
  & = \mathbb{E}_{s_{t}, a_{t}}\Bigl[u_{\theta}^{\top} u_{\theta} \biggl(\sum\limits_{l=0}^{\infty} (\gamma\lambda)^{2l} \cdot \gamma^{2} (1 - \lambda^{2}) \mathbb{E}_{s_{t + l + 1}, a_{t + l + 1}}\bigl[(\Psi - V^{\pi})^{2}\bigr]\biggr) \Bigr] + \\
  &\quad\quad \mathbb{E}_{s_{t}, a_{t}}[u_{\theta}^{\top}u_{\theta} (b^{2} - {V^{\pi}}^{2} - 2 Q^{\pi}(b - V^{\pi}))]  
\end{aligned}
\end{equation}
\subsection{Proof of corollary \ref{core:uae-gae}}
If $\Psi = V^{\pi}$, then the first term of Equation \ref{eq:final} vanishes, it reduces to the difference between the variance of policy gradient w.r.t. different baselines. Since $b$ reduces variance no less than $V^{\pi}$, it follows that
\begin{equation}
\begin{aligned}
  \mathbb{V}_{s_{t}, a_{t}}[u_{\theta} A_{t}^{\text{UAE}(\gamma, \lambda)}] - \mathbb{V}_{s_{t}, a_{t}}[u_{\theta} A_{t}^{\text{GAE}(\gamma, \lambda)}] & = \mathbb{E}_{s_{t}, a_{t}}[u_{\theta}^{\top}u_{\theta} (b^{2} - {V^{\pi}}^{2} - 2 Q^{\pi}(b - V^{\pi}))]  \\
  & = \mathbb{E}_{s_{t}, a_{t}}[u_{\theta}^{\top}u_{\theta} (Q^{\pi} - b)^{2}] - \mathbb{E}_{s_{t}, a_{t}} [u_{\theta}^{\top}u_{\theta} (Q^{\pi} - V^{\pi})^{2}] \\
  & \leq 0  
\end{aligned}
\end{equation}
\section{Residual Baseline}
\subsection{Proof of Theorem \ref{thm:stationary}} 
Our analysis relies on the fact that for a sufficient large segment length $T$, the empirical distribution $p_{\mathcal{D}}$ is an approximation of a mixture of the joint distributions of the past policy sequences $d_{\beta}(s, a) = \frac{1}{k}\sum\limits_{i=0}^{k - 1} d_{\pi_{n - i}}(s, a)$, whose marginal state distribion is $d_{\beta}(s) = \int_{\mathcal{A}} d_{\beta}(s, a) da$, and conditional action distribution $\beta(a | s) = \frac{d_{\beta}(s, a)}{d_{\beta}(s)}$. For any past policy, the joint distribution and marginal state distribution are abbreviated as $d_{n - i}(s, a)$ and $d_{n - i}(s)$ respectively for simplicity's sake. And $\pi_{n}$ is recurrently referenced as $\pi$ whenever noticed.

Since $\sup_{i}D_{\text{TV}}(\frac{d_{n - i}(s, a)}{d_{\beta}(s)} || \pi) < \frac{\epsilon}{4}$ and $\mathbb{E}_{\pi}[|\mathrm{r}_{\phi^{\star}}(s, a)|] < \frac{\epsilon}{2}$ for any $s \in \mathcal{S}$, it follows that
\begin{equation}
  \label{eq:23}
  \begin{aligned}
    \Biggl|\sum\limits_{i=0}^{k - 1}\int_{\mathcal{A}}\bigl(\frac{d_{n - i}(s)}{d_{\beta}(s)}\pi_{n - i} - (1 + \mathrm{r}_{\phi^{\star}}) \pi_{n}\bigr)Q_{w}da\Biggr| & = \Biggl|\sum\limits_{i=0}^{k - 1}\int_{\mathcal{A}}\bigl(\frac{d_{n - i}(s, a)}{d_{\beta}(s)} - (1 + \mathrm{r}_{\phi^{\star}}) \pi_{n}\bigr)Q_{w}da\Biggr| \\
  & \leq \sum\limits_{i=0}^{k - 1}\Bigl|\int_{\mathcal{A}}\bigl(\frac{d_{n - i}(s, a)}{d_{\beta}(s)} - (1 + \mathrm{r}_{\phi^{\star}}) \pi_{n}\bigr)Q_{w}da\Bigr| \\
  & \leq \sum\limits_{i=0}^{k - 1}\int_{\mathcal{A}}\Bigl|\frac{d_{n - i}(s, a)}{d_{\beta}(s)} - (1 + \mathrm{r}_{\phi^{\star}}) \pi_{n}\Bigr|\Bigl|Q_{w}\Bigr|da \\
  & \leq M \sum\limits_{i=0}^{k - 1}\int_{\mathcal{A}}\Bigl|\frac{d_{n - i}(s, a)}{d_{\beta}(s)} - (1 + \mathrm{r}_{\phi^{\star}}) \pi_{n}\Bigr|da \\
  & \leq M \sum\limits_{i=0}^{k - 1}\Biggl(2 \cdot \Bigl(\frac{1}{2}\int_{\mathcal{A}}\Bigl|\frac{d_{n - i}(s, a)}{d_{\beta}(s)} - \pi_{n}\Bigr|da\Bigr) + \int_{\mathcal{A}}\pi_{n}|\mathrm{r}_{\phi^{\star}}|da\Biggr) \\
  & < M \sum\limits_{i=0}^{k - 1}\Bigl(2 \frac{\epsilon}{4} + \frac{\epsilon}{2}\Bigr) \\
  & = M k \epsilon
  \end{aligned}
\end{equation}
Henceforth
\begin{equation}
  \label{eq:36}
  \begin{aligned}
    & \bigl|\mathbb{E}_{\beta}[Q_{w}]- \mathbb{E}_{\pi}[(1 + \mathrm{r}_{\phi^{\star}})Q_{w}]\bigr| \\
    & = \Biggl|\frac{1}{k}\sum\limits_{i=1}^{k - 1}\left(\mathbb{E}_{\pi_{n - i}}\left[\frac{d_{n - i}(s)}{d_{\beta}(s)} Q_{w}\right] + \mathbb{E}_{\pi}\left[\frac{d_{\pi}(s)}{d_{\beta}(s)} Q_{w}\right]\right)- \mathbb{E}_{\pi}[(1 + \mathrm{r}_{\phi^{\star}})Q_{w}]\Biggr| \\
    & = \Biggl|\frac{1}{k}\sum\limits_{i=1}^{k - 1}\left(\mathbb{E}_{\pi_{n - i}}\left[\frac{d_{n - i}(s)}{d_{\beta}(s)} Q_{w}\right] - \mathbb{E}_{\pi}\left[\left(k + k\mathrm{r}_{\phi^{\star}} - \frac{d_{\pi}(s)}{d_{\beta}(s)}\right)Q_{w}\right]\right)\Biggr| \\
    & \stackrel{\text{(a)}}{=} \Biggl|\frac{1}{k}\sum\limits_{i=1}^{k - 1}\left(\mathbb{E}_{\pi_{n - i}}\left[\frac{d_{n - i}(s)}{d_{\beta}(s)} Q_{w}\right] - \mathbb{E}_{\pi}\left[\left((k - 1)\frac{d_{\pi}(s)}{d_{\beta}(s)} + k (1 - \frac{d_{\pi}(s)}{d_{\beta}(s)})  + k\mathrm{r}_{\phi^{\star}}\right)Q_{w}\right]\right)\Biggr| \\    
    & = \Biggl|\frac{1}{k}\sum\limits_{i=1}^{k - 1}\int_{\mathcal{A}}\left(\frac{d_{n - i}(s)}{d_{\beta}(s)} \pi_{n - i} - \frac{d_{\pi}(s)}{d_{\beta}(s)} \pi \right) Q_{w} da - k \mathbb{E}_{\pi}\left[(1 - \frac{d_{\pi}(s)}{d_{\beta}(s)} + \mathrm{r}_{\phi^{\star}})Q_{w}\right]\Biggr| \\
    & \stackrel{\text{(b)}}{=} \Biggl|\frac{1}{k}\sum\limits_{i=0}^{k - 1}\int_{\mathcal{A}}\left(\frac{d_{n - i}(s)}{d_{\beta}(s)} \pi_{n - i} - \frac{d_{\pi}(s)}{d_{\beta}(s)} \pi \right) Q_{w} da - k \mathbb{E}_{\pi}\left[(1 - \frac{d_{\pi}(s)}{d_{\beta}(s)} + \mathrm{r}_{\phi^{\star}})Q_{w}\right]\Biggr| \\
    & = \Biggl|\frac{1}{k}\sum\limits_{i=0}^{k - 1}\int_{\mathcal{A}}\left(\frac{d_{n - i}(s)}{d_{\beta}(s)} \pi_{n - i} - \frac{d_{\pi}(s)}{d_{\beta}(s)} \pi - (1 - \frac{d_{\pi}(s)}{d_{\beta}(s)} + \mathrm{r}_{\phi^{\star}}) \pi \right) Q_{w} da \Biggr| \\
    & = \Biggl|\frac{1}{k}\sum\limits_{i=0}^{k - 1}\int_{\mathcal{A}}\left(\frac{d_{n - i}(s)}{d_{\beta}(s)} \pi_{n - i} - (1 + \mathrm{r}_{\phi^{\star}}) \pi \right)Q_{w}da\Biggr| \\
    & = \frac{1}{k} \Biggl|\sum\limits_{i=0}^{k - 1}\int_{\mathcal{A}}\left(\frac{d_{n - i}(s)}{d_{\beta}(s)} \pi_{n - i} - (1 + \mathrm{r}_{\phi^{\star}}) \pi_{n} \right)Q_{w}da\Biggr| \\
    & \leq \frac{1}{k} M k \epsilon \\
    & = M \epsilon    
  \end{aligned}
\end{equation}
where (a) holds by adding $k \frac{d_{\pi}(s)}{d_{\beta}(s)} - k \frac{d_{\pi}(s)}{d_{\beta}(s)}$ without changing the quantity, and (b) by noting $\frac{d_{n - i}(s)}{d_{\beta}(s)} \pi_{n - i} - \frac{d_{\pi}(s)}{d_{\beta}(s)} \pi = 0$ when $i = 0$.
Let $g_{\mathrm{r_{\phi^{\star}}}}(s) = \mathbb{E}_{\pi}[\nabla_{\phi}\mathrm{r}_{\phi^{\star}} Q_{w}]$, by assumption it follows that
\begin{equation}
  \| g_{\mathrm{r_{\phi^{\star}}}}(s) \| \leq GM
\end{equation}
Then
\begin{equation}
  \label{eq:37}
  \begin{aligned}
  |\nabla_{\phi}\mathcal{L}(\phi^{\star})| & = |\mathbb{E}_{d_{\beta}}\{2 \cdot (\mathbb{E}_{\pi}[(1 + \mathrm{r}_{\phi^{\star}})Q_{w}] - Q_{w}) g_{\mathrm{r_{\phi^{\star}}}}(s)\}| \\
  & \leq \mathbb{E}_{(s, a) \sim d_{\beta}(s, a)}\{2 \cdot |(\mathbb{E}_{\pi}[(1 + \mathrm{r}_{\phi^{\star}})Q_{w}] - Q_{w})| \|g_{\mathrm{r_{\phi^{\star}}}}(s)\| \} \\
  & \leq \mathbb{E}_{s \sim d_{\beta}(s)}\{2 \cdot |(\mathbb{E}_{\pi}[(1 + \mathrm{r}_{\phi^{\star}})Q_{w}] - \mathbb{E}_{a \sim \beta}[Q_{w}])||g_{\mathrm{r_{\phi^{\star}}}}(s)|\} \\
  & \leq GM^{2} \epsilon
  \end{aligned}
\end{equation}
\subsection{Proof of Corollary \ref{core:1}}
By $D_{\text{TV}}^{2}(p || q) \leq \frac{1}{2} D_{\text{KL}}(p || q)$ (Pinsker's inequality), we have $\sup_{s}D_{\text{TV}}(\tilde{\pi} || \pi) < \frac{\sqrt{\epsilon}}{2}$
\begin{equation}
  \label{eq:38}
  \begin{aligned}
  \mathbb{E}_{d_{\beta}}\bigl[(b_{\phi^{\star}}^{\tilde{\pi}} - b_{\phi^{\star}}^{\pi})^{2}\bigr] & = \mathbb{E}_{d_{\beta}}\Bigl[\Bigl(\int_{\mathcal{A}}(\tilde{\pi} - \pi) (1 + \mathrm{r}_{\phi^{\star}})Q_{w}da\Bigr)^{2}\Bigr] \\
  & = \mathbb{E}_{d_{\beta}}\Bigl[\Bigl|\int_{\mathcal{A}}(\tilde{\pi} - \pi) (1 + \mathrm{r}_{\phi^{\star}})Q_{w}da\Bigr|^{2}\Bigr] \\
  & \leq \mathbb{E}_{d_{\beta}}\Bigl[\Bigl(\int_{\mathcal{A}}|\tilde{\pi} - \pi| (1 + |\mathrm{r}_{\phi^{\star}}|)|Q_{w}|da\Bigr)^{2}\Bigr] \\
  & < ((K + 1)M)^{2} \mathbb{E}_{d_{\beta}}\Bigl[\bigl(2 \cdot \frac{1}{2}\int_{\mathcal{A}}|\tilde{\pi} - \pi|da\bigr)^{2}\Bigr] \\
  & < ((K + 1)M)^{2} \mathbb{E}_{d_{\beta}}\Bigl[\bigl(2 \cdot \frac{\sqrt{\epsilon}}{2}\bigr)^{2}\Bigr] \\
  & = ((K + 1)M)^{2} \epsilon
  \end{aligned}
\end{equation}
And with the fact that $(a + b)^{2} \leq 2(a^{2} + b^{2})$, we have that
\begin{equation}
  \label{eq:39}
  \begin{aligned}
  \mathbb{E}_{d_{\beta}}[(Q_{w} - b_{\phi^{\star}}^{\tilde{\pi}})^2] & = \mathbb{E}_{d_{\beta}}[(Q_{w} - b_{\phi^{\star}}^{\pi} + b_{\phi^{\star}}^{\pi} - b_{\phi^{\star}}^{\tilde{\pi}})^2] \\
  & \leq 2 \cdot \biggl(\mathbb{E}_{d_{\beta}(s, a)}[(Q_{w} - b_{\phi^{\star}}^{\pi})^2] + \mathbb{E}_{d_{\beta}(s)}\bigl[(b_{\phi^{\star}}^{\tilde{\pi}} - b_{\phi^{\star}}^{\pi})^{2}\bigr]\biggr) \\
  & < 2 \mathcal{L}(\phi^{\star}) + 2 ((K + 1)M)^{2} \epsilon
  \end{aligned}
\end{equation}
\section{Practical Algorithm}
\subsection{Contraction properties in mean and variance}
\begin{equation}
  \label{eq:33}
  \begin{aligned}
  \|\mathbb{E}\mathcal{T}^{\pi}Z_{1} - \mathbb{E}\mathcal{T}^{\pi}Z_{2}\|_{\infty} & = \gamma \|\mathbb{E}_{s', a'}[\mathbb{E}Z_{1} - \mathbb{E}Z_{2}]\|_{\infty}\\
  & \leq \gamma \|\mathbb{E}Z_{1} - \mathbb{E}Z_{2} \|_{\infty}
  \end{aligned}
\end{equation}
\begin{equation}
  \label{eq:34}
  \begin{aligned}
    \|\Var\mathcal{T}^{\pi}Z_{1} - \Var\mathcal{T}^{\pi}Z_{2}\|_{\infty} & =  \sup_{s, a} | \Var \mathcal{T}^{\pi}Z_{1}(s, a) - \Var \mathcal{T}^{\pi}Z_{2}(s, a)| \\
  & =  \sup_{s, a} | \mathbb{E}[\Var \mathcal{T}^{\pi}Z_{1}(s, a) - \Var \mathcal{T}^{\pi}Z_{2}(s, a)]| \\
  & =  \sup_{s, a} \gamma^{2} |\mathbb{E}[\Var Z_{1}(S', A') - \Var Z_{2}(S', A')]| \\
  & \leq \sup_{s', a'} \gamma^{2} |\Var Z_{1}(s', a') - \Var Z_{2}(s', a')| \\
  & = \gamma^{2} \|\Var Z_{1} - \Var Z_{2}\|_{\infty}
  \end{aligned}
\end{equation}
\subsection{Equivalence between KL divergence and MSE}
Suppose $(\mu_{i}, \sigma_{i}), i = 1,2$ indexed by $1$ is the parameters of the target distribution, and $2$ of the learnable distribution. Since $\sigma_{1}$ and $\sigma_{2}$ are constantly equal as $\sigma$, it means that both of them are not parameterized, then it immediately follows with the definition of the KL divergence between two normal distributions
\begin{equation}
  \label{eq:25}
  \begin{aligned}
    D_{\text{KL}}(\mathcal{N}(\mu_{1}, \sigma_{1}), \mathcal{N}(\mu_{2}(\phi), \sigma_{2})) & = \log\Bigl(\frac{\sigma_{2}}{\sigma_{1}}\Bigr) + \frac{\sigma_{1}^{2} + (\mu_{1} - \mu_{2}(\phi))^{2}}{2 \sigma_{2}^{2}} - \frac{1}{2} \\
                                                                        & = \log\Bigl(\frac{\sigma}{\sigma}\Bigr) + \frac{\sigma^{2}}{2 \sigma^{2}} - \frac{1}{2} + \frac{(\mu_{1} - \mu_{2}(\phi))^{2}}{2 \sigma^{2}} \\
    & = \frac{1}{\sigma^{2}} \text{MSE}(\mu_{1}, \mu_{2}(\phi))
  \end{aligned}
\end{equation}

\subsection{Distributionl critic update}
\label{dist-up}
Suppose the learnable distribution is parameterized as $\mathcal{N}(\mu(\phi), \sigma(\zeta))$. Taking gradient of the KL divergence loss, it can be attained
\begin{equation}
  \label{eq:35}
  \begin{aligned}
  \nabla_{(\phi, \sigma)}D_{\text{KL}}(\mathcal{N}(\mu_{1}, \sigma_{1}), \mathcal{N}(\mu_{2}(\phi), \sigma_{2}(\zeta))) & = - \frac{\bigl((\sigma_{1}^{2} - \sigma_{2}^{2}(\zeta)) + (\mu_{1} - \mu_{2}(\phi))^{2}\bigr)\nabla_{\zeta}\sigma_{2}(\zeta)}{\sigma_{2}^{3}(\zeta)} - \frac{(\mu_{1} - \mu_{2}(\phi)) \nabla_{\phi}\mu_{2}(\phi)}{\sigma_{2}^{2}(\zeta)} \\
  & = \Delta \sigma_{2} + \Delta \mu_{2}    
  \end{aligned}
\end{equation}
In practice, the sampling is involved as transition dynamics evolves, thus the $\mu_{1} = r + \gamma \cdot \text{mean}(Z_{\bar{w}}(s', a'))$, and $\sigma_{1} = \gamma \cdot \text{stddev}(Z_{\bar{w}}(s', a'))$. And $\mu_{2} = \text{mean}(Z_{w}(s, a))$, and $\sigma_{2} = \text{stddev}(Z_{w}(s, a))$. It is clear that the parameters of the learnable distribution is always chasing for discounted ones of the target distribution.
\subsection{Connection between DPO and SAC}
Our off-policy gradient can be viewed as an optimistic likelihood ratio gradient estimator of SAC, that is
\begin{equation}
  \nabla_{\theta}\mathcal{J}_{\text{off-policy}}(\theta) = \nabla_{\theta} \mathbb{E}_{s \sim \mathcal{D}}\Bigl[-D_{\text{KL}}(\pi_{\theta}(\cdot | s) \Bigl|\Bigr| \frac{\frac{1}{\alpha}\exp{A^{+}(s, \cdot)}}{N(s)}) \Bigr] \triangleq \nabla_{\theta} \mathcal{J}_{\text{OptSAC}}(\theta)
\end{equation}
where $N(s)$ is the partition function.

Since that
\begin{equation}
  \label{eq:31}
  \begin{aligned}
    \mathbb{E}_{s \sim D, a \sim \pi_{\theta}}  [\nabla_{\theta} \log{\pi_{\theta}(a | s)} \log{N(s)}] & = \int d_{\mathcal{D}}(s) \int \nabla_{\theta} \pi_{\theta}(a | s) \log{N(s)} da ds \\
                                                                                                           & = \int d_{\mathcal{D}}(s) \log{N(s)} (\nabla_{\theta} \int \pi_{\theta}(a | s)  da) ds \\
                                                                                                           & = 0
  \end{aligned}
\end{equation}
and
\begin{equation}
  \label{eq:32}
  \begin{aligned}
    \mathbb{E}_{s \sim D, a \sim \pi_{\theta}}[\nabla_{\theta}\log{\pi_{\theta}(a | s)}] & = \int d_{\mathcal{D}}(s) \int \pi_{\theta}(a | s) \frac{\nabla_{\theta} \pi_{\theta}(a | s)}{\pi_{\theta}(a | s)} da ds \\
                                                                                         & = \int d_{\mathcal{D}}(s) \int \nabla_{\theta} \pi_{\theta}(a | s)da ds \\
                                                                                         & = \int d_{\mathcal{D}}(s)  (\nabla_{\theta} \int \pi_{\theta}(a | s)da)ds \\
    & = 0
  \end{aligned}
\end{equation}

Then
\begin{equation}
  \label{eq:30}
  \begin{aligned}
    \nabla_{\theta} \mathcal{J}_{\text{OptSAC}}(\theta) & = \nabla_{\theta} \mathbb{E}_{s \sim D, a \sim \pi_{\theta}}  [\log{\pi_{\theta}(a | s)} (A^{+}(s, a)  - \alpha \log{\pi_{\theta}(a | s)} - \log{N(s)})] \\
                                                   & = \mathbb{E}_{s \sim D, a \sim \pi_{\theta}}  [\nabla_{\theta} \log{\pi_{\theta}(a | s)} (A^{+}(s, a) \\
                                                   & - \alpha \log{\pi_{\theta}(a | s)} - \log{N(s)})]  - \mathbb{E}_{s \sim D, a \sim \pi_{\theta}}  [\nabla_{\theta} \log{\pi_{\theta}(a | s)} \log{N(s)}] \\
                                                   & + \mathbb{E}_{s \sim D, a \sim \pi_{\theta}}[\nabla_{\theta}\log{\pi_{\theta}(a | s)}] \\
                                                   & = \mathbb{E}_{s \sim D, a \sim \pi_{\theta}} [\nabla_{\theta}\log{\pi_{\theta}(a | s)} (A^{+}(s, a) - \alpha \log{\pi_{\theta}(a | s)})] \\  
                                                   & = \nabla_{\theta}\mathcal{J}_{\text{off-policy}}(\theta)
  \end{aligned}
\end{equation}
\subsection{Proof of Theorem \ref{thm:ppa}}
Since
\begin{equation}
  |\pi_{k}\mathrm{r}_{\phi_{k}}| \leq K
\end{equation}

and
\begin{equation}
  \lim_{k \rightarrow \infty} \pi_{k}\mathrm{r}_{\phi_{k}} = \pi^{\star} \mathrm{r}_{\phi^{\star}}
\end{equation}

by Lebesgue bounded convergence theorem, we have
\begin{equation}
  \lim_{k \rightarrow \infty} \mathbb{E}_{\pi_{k}}[\mathrm{r}_{\phi_{k}}] = \mathbb{E}_{\pi^{\star}}[\mathrm{r}_{\phi^{\star}}]
\end{equation}

Then
\begin{equation}
\begin{aligned}
  & \int_{\mathcal{A}}\pi^{\star}(a | s)\mathrm{r}_{\phi^{\star}}(s, a)Q^{\pi^{\star}}(s, a)da \\
  & = \int_{\mathcal{A} \setminus \mathcal{A}^{0}}\pi^{\star}(a | s)\mathrm{r}_{\phi^{\star}}(s, a)Q^{\pi^{\star}}(s, a)da \\
  & = Q^{\star}\int_{\mathcal{A} \setminus \mathcal{A}^{0}}\pi^{\star}(a | s)\mathrm{r}_{\phi^{\star}}(s, a)da \\
  & = Q^{\star}\int_{\mathcal{A}}\pi^{\star}(a | s)\mathrm{r}_{\phi^{\star}}(s, a)da \\
  & = Q^{\star} \mathbb{E}_{\pi^{\star}}[\mathrm{r}_{\phi^{\star}}] \\
  & = 0
\end{aligned}
\end{equation}

where $Q^{\star}$ stands for the maximum action-value. As the $\pi^{\star}$ is deterministic, then $Q^{\pi^{\star}}(s, a) = Q^{\star}$ for any $a \in \mathcal{A} \setminus \mathcal{A}^{0}$.
Thus it follows
\begin{equation}
  \label{eq:29}
  \begin{aligned}
    \limsup_{k \rightarrow \infty}{A_{k}^{+}} & = \limsup_{k \rightarrow \infty}(Q^{\pi_{k}}(s, a) - b_{\phi_{k}}(s))^{+} \\
                                              & = \limsup_{k \rightarrow \infty}(Q^{\pi_{k}}(s, a) - b_{\phi_{k}}(s))^{+} \\
                                              & = (Q^{\pi^{\star}}(s, a) - \mathbb{E}[(1 + \mathrm{r}_{\phi^{\star}}(s, a)) Q^{\pi^{\star}}(s, a)])^{+} \\
                                              & = (Q^{\pi^{\star}}(s, a) - V^{\pi^{\star}}(s) - Q^{\star}\int_{\mathcal{A} \setminus \mathcal{A}^{0}}\pi^{\star}(a | s)\mathrm{r}_{\phi^{\star}}(s, a)da)^{+} \\
    & = (Q^{\pi^{\star}}(s, a) - V^{\pi^{\star}}(s))^{+} = 0
  \end{aligned}
\end{equation}
Last equality holds for that $V^{\pi^{\star}}(s) = \max_{a \in \mathcal{A}}Q^{\pi^{\star}}(s, a)$.

\subsection{Proof of Theorem \ref{thm:bound}}



\begin{lemma}
  \label{lm:state}
  \cite{DBLP:conf/nips/GuLTGSL17}
  \begin{equation}
    \|\rho^{\pi} - \rho^{\beta}\|_{1} \leq 2 t D_{\text{TV}}^{\text{max}}(\pi || \beta) \leq 2 t \sqrt{ D_{\text{KL}}^{\text{max}}(\pi || \beta)}
  \end{equation}
\end{lemma}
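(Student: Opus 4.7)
The plan is to prove the two inequalities separately: first relate the $L^1$ deviation of the state occupancies to the maximum policy total-variation distance via an inductive telescoping on the state transition operator, and then upgrade TV to KL using Pinsker's inequality pointwise in the state. Throughout I would treat $\rho^\pi$ as being generated by a Markov kernel $P_\pi$ on $\mathcal{S}$ defined by $(P_\pi \mu)(s') = \int\int P(s'|s,a)\pi(a|s)\mu(s)\,da\,ds$, so that the $k$-step state distribution is $P_\pi^k\rho_0$ and the quantity $\rho^\pi$ is either a $t$-step marginal or a sum of such marginals up to horizon $t$.

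The crux is a single-step lemma: $\|P_\pi \mu - P_\beta \mu\|_1 \leq 2 D_{\text{TV}}^{\max}(\pi\|\beta)$ for any state distribution $\mu$. I would prove this by writing the difference as $\int P(s'|s,a)[\pi(a|s)-\beta(a|s)]\mu(s)\,da\,ds$, applying the triangle inequality and Fubini, then using $\int P(s'|s,a)\,ds' = 1$ to cancel the transition kernel. What remains is $\int \mu(s)\int|\pi(a|s)-\beta(a|s)|\,da\,ds = 2\mathbb{E}_\mu[D_{\text{TV}}(\pi(\cdot|s)\|\beta(\cdot|s))]$, which is bounded by $2 D_{\text{TV}}^{\max}(\pi\|\beta)$.

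Next I would establish the $t$-step bound $\|P_\pi^t \rho_0 - P_\beta^t \rho_0\|_1 \leq 2t D_{\text{TV}}^{\max}(\pi\|\beta)$ by induction on $t$. At the inductive step, insert $\pm P_\pi(P_\beta^{t-1}\rho_0)$ to obtain the decomposition $P_\pi(P_\pi^{t-1}\rho_0 - P_\beta^{t-1}\rho_0) + (P_\pi - P_\beta)(P_\beta^{t-1}\rho_0)$. The first summand is bounded by the inductive hypothesis $2(t-1)D_{\text{TV}}^{\max}$ because $P_\pi$ is a non-expansion in $L^1$ (it is a Markov operator), and the second by $2 D_{\text{TV}}^{\max}$ via the single-step lemma. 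Summing and taking the $L^1$ norm of the sum that defines $\rho^\pi - \rho^\beta$ yields the first claimed inequality. For the second inequality, apply Pinsker's inequality pointwise, $D_{\text{TV}}(\pi(\cdot|s)\|\beta(\cdot|s)) \leq \sqrt{\tfrac{1}{2}D_{\text{KL}}(\pi(\cdot|s)\|\beta(\cdot|s))} \leq \sqrt{D_{\text{KL}}^{\max}(\pi\|\beta)}$, then take the supremum over $s$ and chain with the TV bound.

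The main obstacle is interpretive rather than technical: the bound $2t D_{\text{TV}}^{\max}$ is cleanest when $\rho^\pi$ is the $t$-step marginal $P_\pi^t \rho_0$, whereas the paper's own definition $\rho_\pi = \sum_{k=0}^\infty \gamma^k P(s_k = s|\rho_0,\pi)$ would instead give $\|\rho^\pi - \rho^\beta\|_1 \leq 2 D_{\text{TV}}^{\max}\sum_k k\gamma^k = \frac{2\gamma}{(1-\gamma)^2} D_{\text{TV}}^{\max}$. I would therefore state explicitly which convention I use and, if necessary, reinterpret the factor $t$ as the effective horizon constant $\gamma/(1-\gamma)^2$. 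Once that bookkeeping is settled, the proof is routine: single-step TV contraction, induction, Pinsker.
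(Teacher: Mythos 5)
Your proof is correct, and it supplies something the paper itself does not: the paper states this lemma as an imported result from the IPG paper (Gu et al.) and merely lists references from which it "can be adopted," offering no proof of its own. Your argument --- the single-step bound $\|P_\pi\mu - P_\beta\mu\|_1 \leq 2D_{\text{TV}}^{\max}(\pi\|\beta)$ via marginalizing out the transition kernel, the telescoping induction using the $L^1$ non-expansiveness of Markov operators, and the pointwise Pinsker step --- is exactly the standard derivation found in those cited references, so there is no methodological divergence to report. Your interpretive remark is also well taken and worth keeping: the statement only makes sense if $\rho^\pi$ denotes the $t$-step marginal $P_\pi^t\rho_0$ rather than the discounted occupancy $\rho_\pi = \sum_k \gamma^k P(s_k = s)$ defined in the paper's preliminaries, and the paper's own use of the lemma inside the proof of Theorem \ref{thm:bound} (bounding $\|\rho_t^{\tilde\pi} - \rho_t^\pi\|_1$ term by term and then summing $\sum_t \gamma^t t = \gamma/(1-\gamma)^2$) confirms that the $t$-step-marginal reading is the intended one. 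The only cosmetic point is that Pinsker actually gives the sharper constant $\sqrt{D_{\text{KL}}^{\max}/2}$; the stated bound discards the factor $1/\sqrt{2}$, which your chain of inequalities already accounts for.
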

which can be adopted from \cite{DBLP:journals/jmlr/RossGB11} \cite{DBLP:conf/icra/KahnZLA17} \cite{DBLP:conf/icml/SchulmanLAJM15} \cite{DBLP:conf/nips/JannerFZL19}.

Denote
\begin{equation}
  \bar{L}_{\pi}(\tilde{\pi}) = \eta(\pi) + \mathbb{E}_{\rho^{\pi}, \tilde{\pi}}[Q^{\pi} - b_{\phi}^{\pi}]
\end{equation}

then
\begin{equation}
\begin{aligned}
    |\eta(\tilde{\pi}) - \bar{L}_{\pi}(\tilde{\pi})| & = |\mathbb{E}_{\rho^{\tilde{\pi}}, \tilde{\pi}}[Q^{\pi} - b_{\phi}^{\pi}] - \mathbb{E}_{\rho^{\pi}, \tilde{\pi}}[Q^{\pi} - b_{\phi}^{\pi}]| \\
  & = \sum\limits_{t=0}^{\infty} \gamma^{t} |\mathbb{E}_{\rho^{\tilde{\pi}}_{t}} \left[\mathbb{E}_{\tilde{\pi}}[Q^{\pi} - b_{\phi}^{\pi}]\right] - \mathbb{E}_{\rho^{\pi}_{t}} \left[\mathbb{E}_{\tilde{\pi}}[Q^{\pi} - b_{\phi}^{\pi}]\right]| \\
  & \leq \Upsilon \sum\limits_{t = 0}^{\infty} \gamma^{t} \| \rho^{\tilde{\pi}}_{t} - \rho^{\pi}_{t}\|_{1} \\
  & \leq 2 \Upsilon \sum\limits_{t = 0}^{\infty} \gamma^{t} t \sqrt{D_{\text{KL}}^{\text{max}}(\pi || \tilde{\pi})} \\
  & = \frac{2 \gamma \Upsilon}{(1 - \gamma)^{2}} \sqrt{D_{\text{KL}}^{\text{max}}(\pi || \tilde{\pi})}
\end{aligned}
\end{equation}

And we relate $L_{\pi}(\tilde{\pi})$ to $\bar{L}_{\pi}(\tilde{\pi})$
\begin{equation} 
\begin{aligned}
    |\bar{L}_{\pi}(\tilde{\pi}) - L_{\pi}(\tilde{\pi})| & = |\eta(\pi) + \mathbb{E}_{\rho^{\pi}, \tilde{\pi}}[Q^{\pi} - b_{\phi}^{\pi}] - \eta(\pi) - \omega \mathbb{E}_{\rho^{\pi}, \tilde{\pi}}[Q^{\pi} - b_{\phi}^{\pi}] - (1 - \omega) \mathbb{E}_{\rho^{\beta}, \tilde{\pi}}[(Q_{w} - b_{\phi}^{\tilde{\pi}})^{+} + \alpha \log \tilde{\pi}] | \\
  & = (1 - \omega) | \mathbb{E}_{\rho^{\pi}, \tilde{\pi}}[Q^{\pi} - b_{\phi}^{\pi}] - \mathbb{E}_{\rho^{\beta}, \tilde{\pi}}[(Q_{w} - b_{\phi}^{\tilde{\pi}})^{+} + \alpha \log \tilde{\pi}] | \\
                                                        & \leq (1 - \omega) \Biggl( \underbrace{| \mathbb{E}_{\rho^{\pi}, \tilde{\pi}}[Q^{\pi} - b_{\phi}^{\pi}] - \mathbb{E}_{\rho^{\pi}, \tilde{\pi}}[Q_{w} - b_{\phi}^{\tilde{\pi}}] |}_{\mathbf{(1)}} + \\
                                                        &\qquad\qquad\quad \underbrace{| \mathbb{E}_{\rho^{\pi}, \tilde{\pi}}[Q_{w} - b_{\phi}^{\tilde{\pi}}] - \mathbb{E}_{\rho^{\beta}, \tilde{\pi}}[Q_{w} - b_{\phi}^{\tilde{\pi}}]|}_{\mathbf{(2)}} + \\
                                                        &\qquad\qquad\quad \underbrace{| \mathbb{E}_{\rho^{\beta}, \tilde{\pi}}[(Q_{w} - b_{\phi}^{\tilde{\pi}})^{-} + \alpha \log \tilde{\pi}] |}_{\mathbf{(3)}} \Biggr)
\end{aligned}
\end{equation}

where

\begin{equation}
  \begin{aligned}
      \mathbf{(1)} & \leq \mathbb{E}_{\rho^{\pi}, \tilde{\pi}}|Q^{\pi} - Q_{w} - \left(\mathbb{E}_{\pi}[(1 + \mathrm{r}_{\phi}) Q_{w}] - \mathbb{E}_{\tilde{\pi}}[(1 + \mathrm{r}_{\phi}) Q_{w}]\right)|  \\
  & \leq \Delta + 2 D_{\text{TV}}^{\text{max}}(\pi || \beta) (1 + K) M \\
  & \leq \Delta + \sqrt{2 D_{\text{KL}}^{\text{max}}(\pi || \beta)} (1 + K) M \\
  & \triangleq \Delta + \sqrt{2 D_{\text{KL}}^{\text{max}}(\pi || \beta)} C_{1} \\
  \end{aligned}
\end{equation}

\begin{equation}
\begin{aligned}
  \mathbf{(2)} & \leq \sum\limits_{t=0}^{\infty} \gamma^{t} \| \rho^{\pi}_{t} - \rho^{\beta}_{t} \|_{1} | \mathbb{E}_{\tilde{\pi}}[Q_{w} - b_{\phi}^{\tilde{\pi}}] | \\
  & \leq 2 \Omega \sum\limits_{t=0}^{\infty} \gamma^{t} t \sqrt{D_{\text{KL}}^{\text{max}}(\pi || \beta)} \\
  & \leq \frac{2 \gamma \Omega}{(1 - \gamma)^{2}} \sqrt{D_{\text{KL}}^{\text{max}}(\pi || \beta)}  
\end{aligned}
\end{equation}

\begin{equation}
  \mathbf{(3)} \leq \alpha C_{\mathcal{H}} + C_{-} \triangleq C_{2}
\end{equation}

Combining all parts, we have
\begin{equation}
\begin{aligned}
  |\eta(\tilde{\pi} - L_{\pi}(\tilde{\pi}))| & \leq |\eta(\tilde{\pi} - \bar{L}_{\pi}(\tilde{\pi}))| + |\bar{L}_{\pi}(\tilde{\pi}) - L_{\pi}(\tilde{\pi})| \\
  & = \frac{2 \gamma  \Upsilon}{(1 - \gamma)^{2}} \sqrt{D_{\text{KL}}^{\text{max}}(\pi || \tilde{\pi})} +  (1 - \omega) (\Delta + C_{1} \sqrt{2 D_{\text{KL}}^{\text{max}}(\pi || \tilde{\pi})} + \frac{2 \gamma \Omega}{(1 - \gamma)^{2}} \sqrt{D_{\text{KL}}^{\text{max}}(\pi || \beta)} + C_{2})  
\end{aligned}
\end{equation}

\section{Action Bounds Transformation}
\label{sec:acti-bounds-transf}
The support of Beta distribution is $[0, 1]$, for multivariate case with $n$ dimensions which are mutually independent, it would be $n$ products of $[0, 1]$. In practice, the action bounds doesn't have to fit into this domain, we therefore need to apply a linear transformation upon each dimension to coincide with the actual bound $[m_{i}, M_{i}], i = 1, 2, \dots, n$, where $m_{i}$ is the lower bound, and $M_{i}$ upper bound. It would be convenient to vectorize those bounds as $\mathbf{m}$ and $\mathbf{M}$.

Let $\mathbf{x} \in \mathbb{R}^{n}$ be a Beta-distributed random vector whose density is $f(\mathbf{x}|\mathbf{s})$. It is transformed as a new random vector $\mathbf{a} = x \cdot (\mathbf{M} - \mathbf{m}) + \mathbf{m}$ to be the action executed in the environment, at which product is element-wise. Denote $\mathbf{k} = \mathbf{M} - \mathbf{m}$ and $\mathbf{b} = \mathbf{m}$, its log-likelihood is given by
\begin{equation}
  \label{eq:42}
  \log{\pi(\mathbf{a}|\mathbf{s})} = \log{f(\mathbf{x}|\mathbf{s})} - \mathbf{1}^{\top}\log{\mathbf{k}}
\end{equation}
where $\mathbf{1}^{\top}$ is a $n$-dimensional vector whose entry is one. Since $\mathbf{1}^{\top}\log{\mathbf{k}}$ is a constant, the difference of the log-likelihood $\pi$ reduces to the that of the original one $f$, namely
\begin{equation}
  \label{eq:43}
  \log{\tilde{\pi}(\mathbf{a}|\mathbf{s})} - \log{\pi(\mathbf{a}|\mathbf{s})} = \log{\tilde{f}(\mathbf{x}|\mathbf{s})} - \log{f(\mathbf{x}|\mathbf{s})}
\end{equation}
This invariance is useful especially for the case that the log-likelihood policy gradient is involved e.g. A2C, TRPO and PPO. And it should be noted that only the $f(\mathbf{x}|\mathbf{s})$ is parameterized as $f_{\theta}(\mathbf{x}|\mathbf{s})$, thus $\nabla_{\theta}\log{\pi(\mathbf{a}|\mathbf{s})} = \nabla_{\theta}\log{f_{\theta}(\mathbf{x}|\mathbf{s})}$. We store the transformed action $\mathbf{a}$ and the untransformed log-likelihood $\log{f(\mathbf{x} | \mathbf{s})}$, which is used in the policy improvement phase, to either calculate the difference of the log-likelihoods (TRPO, PPO) or the log-likelihood only (A2C). By detransforming the action $\mathbf{a}$ to the original one $\mathbf{x}$, $\tilde{f}(\mathbf{x}|\mathbf{s})$ can be calculated for those aforementioned purposes.

\section{Experimental Details}
\subsection{Figure \ref{fig:7}}
\label{sec:figure-reffig:7}
All the on-policy algorithms are implemented from OpenAI Baselines \cite{baselines}, including A2C, TRPO and PPO. It is implemented with the latest version of SAC with auto-temperature \cite{DBLP:journals/corr/abs-1812-05905}, from the author's repository\footnote{\url{https://github.com/rail-berkeley/softlearning}}, TD3\footnote{\url{https://github.com/sfujim/TD3}} and IPG\footnote{\url{https://github.com/rlbayes/rllabplusplus}} as well.
We perform all algorithms on the MuJoCo tasks with v3 version, except PPO, which is on the v2 version. The reason is that we observe the OpenAI Baselines implementation doesn't learn on the v3 version tasks at all, to make the results comparable, and respect the originality of the specific algorithm implementation, we have to make this compromise.
\label{sec:experimental-details}

\subsection{Figure \ref{fig:SE}}
The time complexity comparison was benchmarked with a single core on an Intel Xeon Platinum 8358 CPU. And the relative policy updates were calculated by dividing the total number of policy gradient updates by the total number of updates.
\subsection{Figure \ref{fig:SL}}
\label{sl:detail}
The fixed interval is comprised of 2048 steps. We log the metric for $25\%$ even spaced data within every interval.
\paragraph{Variance of Policy Updates}
We monitor two consecutive time steps' policy parameters $\theta_{t}$ and $\theta_{t + 1}$, and compute the policy change $\Delta \theta_{t} = \theta_{t + 1} - \theta_{t}$, with which we have
\begin{align}
  \text{VPU} & = \mathbb{V}[\Delta \theta_{t}] \\
  & = \mathbb{E}[\|\Delta \theta_{t} - \mathbb{E}[\Delta \theta_{t}]\|^{2}]
\end{align}
\paragraph{Average Total Variation}
We define the average total variation as the sample average of the absolute differences between adjacent sample points
\begin{equation}
  \text{TV} = \frac{1}{N} \sum\limits_{i = 0}^{N - 1} | \mathcal{L}^{\text{MSE}}_{i + 1} -  \mathcal{L}^{\text{MSE}}_{i} |
\end{equation}
where $\mathcal{L}^{\text{MSE}}$ is the critic MSE loss. For PPO and SAC, it is the same as the critic loss, whereas we re-evaluate for DPO as it employs KL divergence loss.
\subsection{Figure \ref{fig:traj}}
The calculation of trajectory variance is borrowed from \cite{DBLP:conf/icml/TuckerBGTGL18}, but with a tighter statistics.

We first follow the current policy $\pi$ to collect a large batch of data with a size of 25000, from which we uniformly draw a subset of data $\mathcal{D}_{\text{test}}$ with a size of 1000 as an estimation of the outer expectation. To estimate the trajectory variance, for each $(s, a) \in \mathcal{D}_{\text{test}}$, we independently sample two trajectories $\tau, \tau' \sim \tau | s, a$ with a maximum horizon 1000, with which we have a single variance estimator
\begin{equation}
  \|u_{\theta}(s, a)\|^{2} \left(\hat{A}(s, a|\tau)^{2} - \hat{A}(s, a| \tau) \hat{A}(s, a| \tau') \right)
\end{equation}
where $\hat{A}$ can be either $\hat{A}^{\text{UAE}(\gamma, \lambda)}$ or $\hat{A}^{\text{MC}}$. In the end, we average all the estimators and take log scale for better readability.

The reason we don't compare with GAE is that
\begin{itemize}
\item GAE is a strict special case of UAE.
\item The critic of our algorithm is $Q_{w}$, for which GAE is no longer applicable.
\end{itemize}
\subsection{Figure \ref{fig:var}}
We estimate both on- and off-policy gradient variance. In on-policy case, we perform the estimation on the collected rollouts (with a total steps 2048)
\begin{equation}
  \label{eq:46}
  \frac{1}{|\mathcal{B}|} \sum\limits_{(s, a) \sim \mathcal{B}} \|u_{\theta}\|^{2} (Q_{w} - b_{\phi}^{\pi})^{2}
\end{equation}
And for the case of off-policy, we uniformly draw a subset of states $\mathcal{D}_{\text{off}}$ (size of 10000) from the replay buffer, then sample a new action $a \sim \pi_{\theta}$ for each state. The estimator is
\begin{equation}
  \frac{1}{|\mathcal{D}_{\text{off}}|} \sum\limits_{s \sim \mathcal{D}_{\text{off}}, a \sim \pi_{\theta}} \|u_{\theta}\|^{2} (Q_{w} - b_{\phi}^{\pi})^{2}
\end{equation}
\subsection{Figure \ref{fig:effect-N}}
In our setting, the maximum horizon $T = 2048$, and the size of the replay buffer $|\mathcal{D}| = 1e6$, we therefore gradually increase the number of samples training the baseline starting from totally on-policy ($N = 1 \cdot T$) to the full samples ($N = |\mathcal{D}|$).
\subsection{Figure \ref{fig:dist}}
\label{sec:figure-reffig:5}
We borrow the metrics from \cite{DBLP:conf/icml/FujimotoMPNG22} with modifications. The mean absolute error (MAE), namely generalization error, is performed on the subset $\mathcal{D}_{e}$ of the test data $\mathcal{D}_{\text{test}}$ (will be introduced later)
\begin{equation}
  \label{eq:45}
  \frac{1}{|\mathcal{D}_{e}|} \sum\limits_{(s, a) \sim \mathcal{D}_{e}} |Q_{w}(s, a) - Q^{\pi}(s, a)|
\end{equation}
where the test data $\mathcal{D}_{\text{test}}$ is collected by executing current policy $\pi$ in the environment with 50000 steps, and $\mathcal{D}_{e}$ is formed as the uniformly sampled data (size = 1000) from the test data. For each $(s, a)$ in the $\mathcal{D}_{e}$, the true action-value function $Q^{\pi}$ is approximated by the Monte-Carlo estimate with 100 episodes, each of which has a timelimit 1000 (which means each episode cannot exceed that limit).
The root mean squared error (RMSE), namely prediction error, is performed on the data of the batch $\mathcal{B}$ (size = 2048)
\begin{equation}
  \label{eq:46}
  \frac{1}{|\mathcal{B}|} \sum\limits_{(s, a) \sim \mathcal{B}} \sqrt{\bigl(Q_{w}(s, a) - (r + \gamma Q_{w}(s', a'))\bigr)^{2}}
\end{equation}
The $Q_{w}$ all of above is the mean value of the learned distribution $Q_{w}(s, a) = \mathbb{E}[Z_{w}(s, a)]$.
\subsection{Table \ref{tab:head-to-head}}
\label{impl}
\paragraph{DPO(A2C) implementation} A2C is not a trust region method, therefore we only perform the policy update once.
\paragraph{DPO(TRPO) implementation} TRPO finds a search direction $\Delta \theta$ by the conjugate gradient algorithm with a backtracking line search. Though the interpolation is not straightforward at the first glance, we propose to firstly calculate off-policy loss $(1 - w) \mathcal{J}_{\text{off policy}}$ under the current policy $\pi_{\theta}$, then search for an increment of $w \Delta \theta$. By parallelogram law, we can increment the policy parameter with $w \Delta \theta$, and then update the policy based on the pre-calculated off-policy loss.

And at each policy update, we update the distributional critic 10 times upon the minibatch drawn from $\mathcal{B}$.

\subsection{Table \ref{tab:ablation}}
\begin{table}[H]
  \centering
  \label{}
  \begin{tabular}{cc}
    \toprule
    \textbf{Setting} & \textbf{Description}   \\
    \midrule
    \textbf{First group} & {} \\
    \midrule
    no-UAE & replace UAE with Monte-carlo estimator\\
    no-RB & set $\mathrm{r} \equiv 0 $\\
    no-KL & replace distributional critic and KL divergence with a single Q and MSE respectively\\
    \midrule
    \textbf{Second group} & {} \\
    \midrule
    no-INT & set $\omega = 1$ \\
    no-ENT & set $\alpha = 0$ \\
    \midrule
    \textbf{Third group} & {} \\
    \midrule
    only-ON & without off-policy evaluation at the interaction level\\
    only-OFF & without on-policy evaluation at the batch level\\
    \bottomrule
  \end{tabular}
\end{table}
\section{Implementation Details}
\subsection{Network Architecture}
\label{sec:network-architecture}
Majority of the DRL algorithms parameterize the policy as a Gaussian distribution, due to its simplicity. However, the boundary effect of it is observed in several works \cite{DBLP:conf/icml/ChouMS17} \cite{DBLP:conf/icml/FujitaM18}. And in practice, the action space is usually bounded, thus it is beneficial to parameter our policy as a beta policy. Like \cite{DBLP:conf/icml/ChouMS17}, the shape parameters $\alpha, \beta$ is forwarded through a fullly connected neural network, and converted to the range $[1, \infty)$ by a softplus activation added with a constant 1. For the critic, we parameterize it as a Gaussian distribution, output the mean value through the last layer, and the standard deviation operated by a softplus activation similarly. For the baseline, it is simply a fully connected neural network that outputs the residual term, when approximating the baseline, a constant 1 is added (see Equation \ref{eq:13}). All the networks are with 2 hidden layers, and \texttt{tanh} activation function, each of which has 256 neurons. This architecture is robust to the hyperparameter changes, especially when different on-policy learners are involved.
\subsection{Advantage Interpolation}
\label{sec:advant-interp}
We introduce another interpolating parameter $\nu$ for advantage estimate over $\mathcal{B}$ before the policy improvement. Since we evaluate at each environment step, we expect a good quality that
the expected value $Q_{w} = \mathbb{E}[Z_{w}]$ would achieve.
\begin{equation}
  \label{eq:24}
  \hat{A} = (1 - \nu) A^{\text{UAE}}  + \nu (Q_{w} - b_{\phi})
\end{equation}
\subsection{Normalization}
\label{sec:normalization}
Unlike PPO, we minimize the code-level optimization, such as network initialization, learning rate decay, gradient clipping etc. Like PPO, we adopt observation normalization, reward normalization, and advantage normalization (batch level), as it is beneficial to stabilize the behavior of the neural network and enable faster learning.
\section{Hyperparameters}
\begin{table}[H]
  \centering
  \label{}
  \begin{tabular}{cc}
    \toprule
    \textbf{Hyperparameter} & \textbf{Value}   \\
    \midrule
    optimizer & Adam \cite{DBLP:journals/corr/KingmaB14} \\
    learning rate & $3 \times 10^{-4}$ \\
    size of replay buffer $\mathcal{D}$ & $1000000$ \\
    size of mini batch from $\mathcal{B}$ & $256$ \\
    size of mini batch from $\mathcal{D}$ & $256$ \\
    discounted factor $\gamma$ & $0.99$ \\
    UAE $\lambda$ & $0.95$ \\
    target smoothing parameter $\tau$ & $5e-3$ \\
    policy interpolating parameter $\omega$ & $0.7$ \\
    advantage interpolating parameter $\nu$ & $0.3$ \\
    temperature $\alpha$ & $0.03$ \\
    num samples & $30$ \\
    critic samples & $25$ \\
    \midrule
    \textbf{on-policy learner} & \textbf{PPO} \\
    \midrule
    clipping parameter $\varepsilon$ & $0.2$ \\
    size of batch $\mathcal{B}$ & $2048$ \\
    epochs per batch & $10$ \\
    baseline updates & $12$ \\    
    \midrule
    \textbf{on-policy learner} & \textbf{A2C} \\
    \midrule
    size of batch $\mathcal{B}$ & $256$ \\
    epochs per batch & $1$ \\
    baseline updates & $4$ \\
    \midrule
    \textbf{on-policy learner} & \textbf{TRPO} \\
    \midrule
    max kl & $0.1$ \\
    damping & $0.1$ \\
    size of batch $\mathcal{B}$ & $4096$ \\
    epochs per batch & $1$ \\
    baseline updates & $12$ \\
    \bottomrule
  \end{tabular}
\caption{Hyperparameters of DPO}
\end{table}

\section{Additional Experiments}
\subsection{Variants of DPO}
\label{sec:variants-dpo}
\begin{figure}[H]
\vskip 0.2in
\begin{center}
\centerline{
\includegraphics[width=\textwidth]{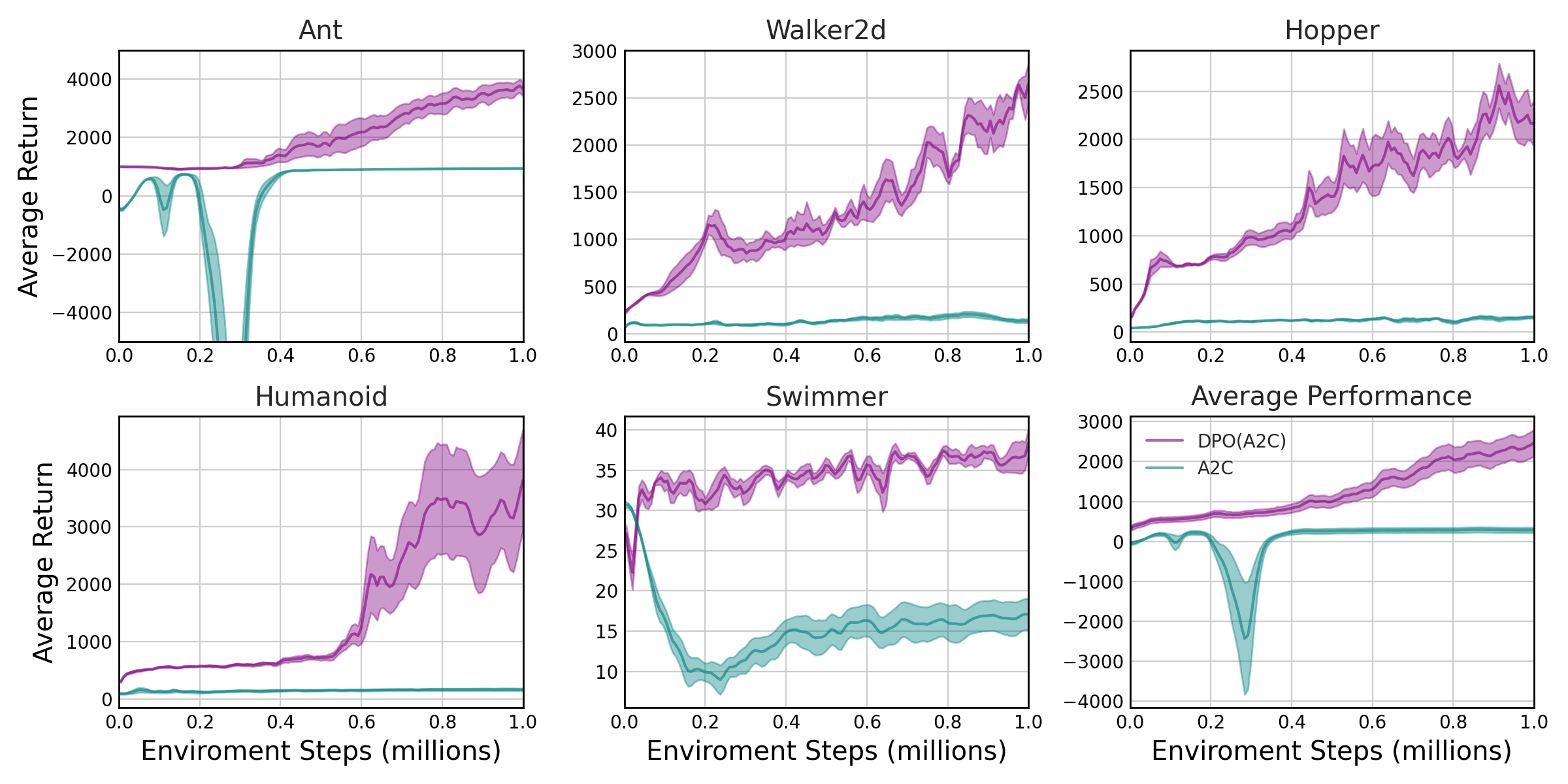}
}
\caption{Comparison between DPO(A2C) and A2C, averaged over 5 random seeds and shaded with standard error.}
\label{fig:8}
\end{center}
\vskip -0.2in
\end{figure}

\begin{figure}[H]
\vskip 0.2in
\begin{center}
\centerline{
\includegraphics[width=\textwidth]{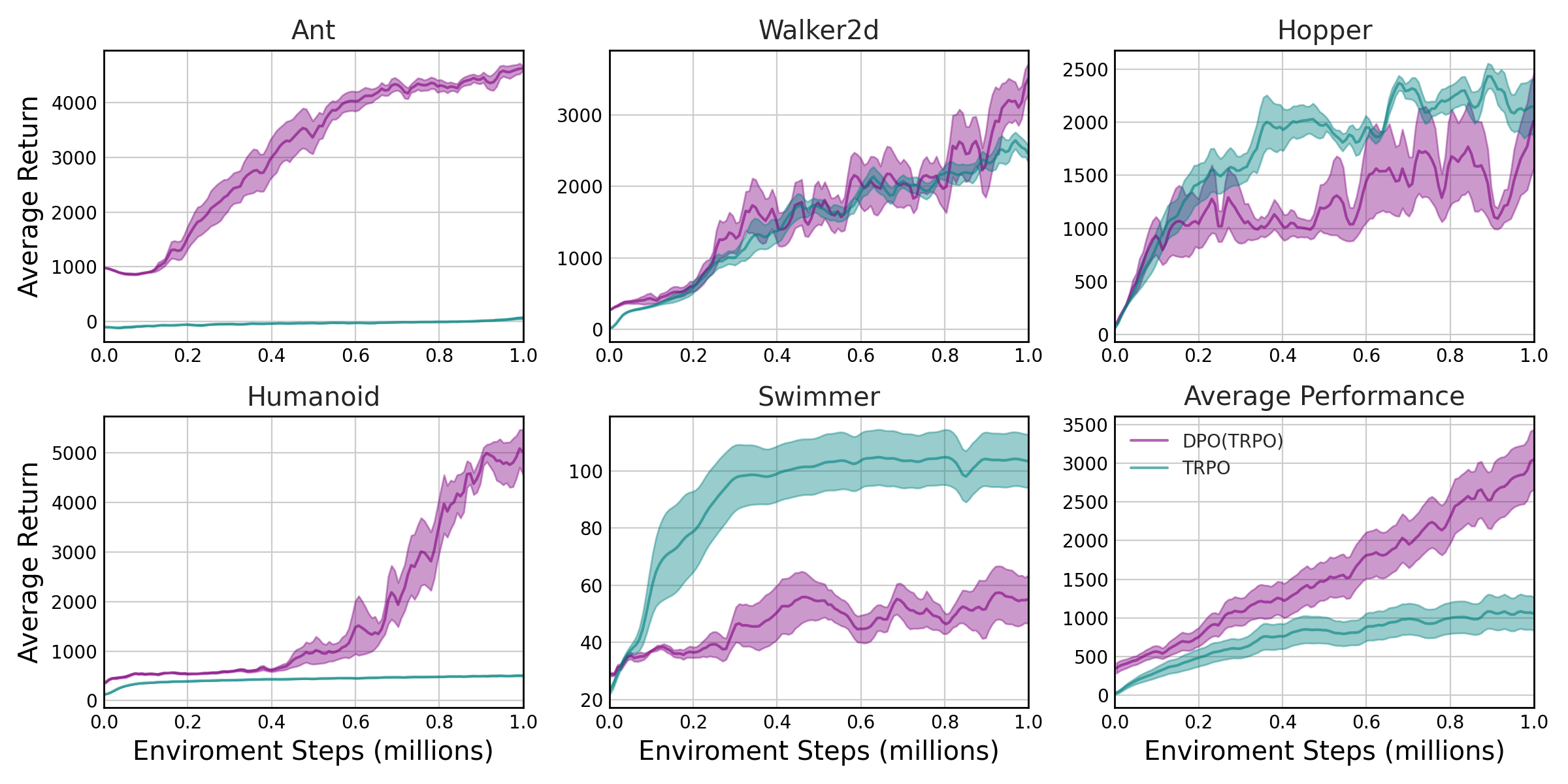}
}
\caption{Comparison between DPO(TRPO) and TRPO, averaged over 5 random seeds and shaded with standard error.}
\label{fig:9}
\end{center}
\vskip -0.2in
\end{figure}
\FloatBarrier
\subsection{Additional Baselines}
\label{additional}
\begin{table}[H]
\caption{Comparison to other algorithms combining on-policy methods with off-policy data.}
\vskip 0.15in
\begin{center}
\begin{small}
\begin{sc}
\scalebox{0.8}{\begin{tabular}{lccccccr}
\toprule
Method  & Walker2d & Hopper & Swimmer & Humanoid & Ant & Avg. \\
\midrule 
DPO (1M)      & \textbf{4860} $\pm$ 680 & \textbf{3187} $\pm$ 351 & \textbf{112} $\pm$ 13 & \textbf{6285} $\pm$ 852 & \textbf{5278} $\pm$ 173 &  \textbf{3944} $\pm$ 414 \\
Q-prop (1M)   & 358 $\pm$ 26 & 1464 $\pm$ 203 & 59 $\pm$ 4 & 355 $\pm$ 3  & -46 $\pm$ 8 & 438 $\pm$ 49 \\
P3O (3M)      & 3771         & 2334         & -         & 2057          & 4727         & 3222 \\
\bottomrule
\end{tabular}}
\end{sc}
\end{small}
\end{center}
\vskip -0.1in
\end{table}
Results for Q-prop were replicated by following the official repository, while P3O's results are based on the paper.
\subsection{Effect of Parameters}
\begin{figure}[H]
\vskip 0.2in
\begin{center}
  \centerline{
    \subfigure[$\alpha$]{\label{fig:10a}\includegraphics[width=0.33\textwidth]{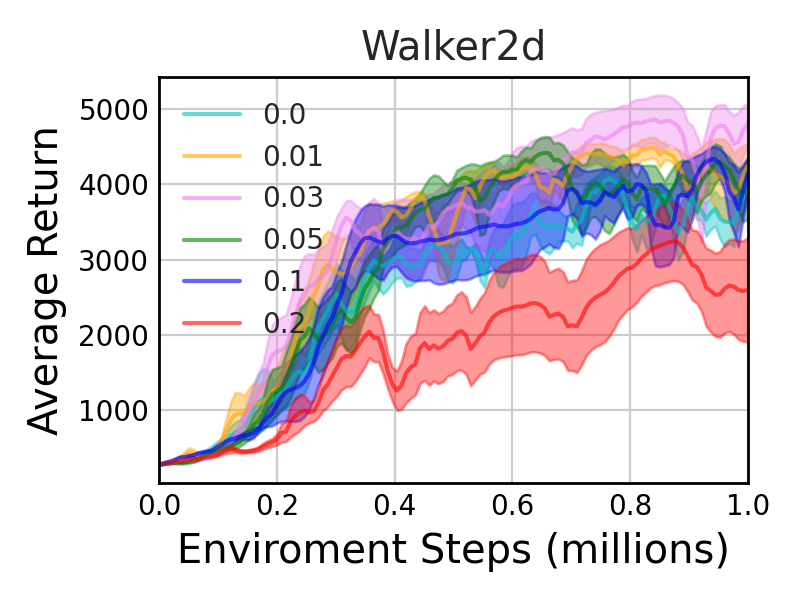}}
    \subfigure[$\omega$]{\label{fig:10b}\includegraphics[width=0.33\textwidth]{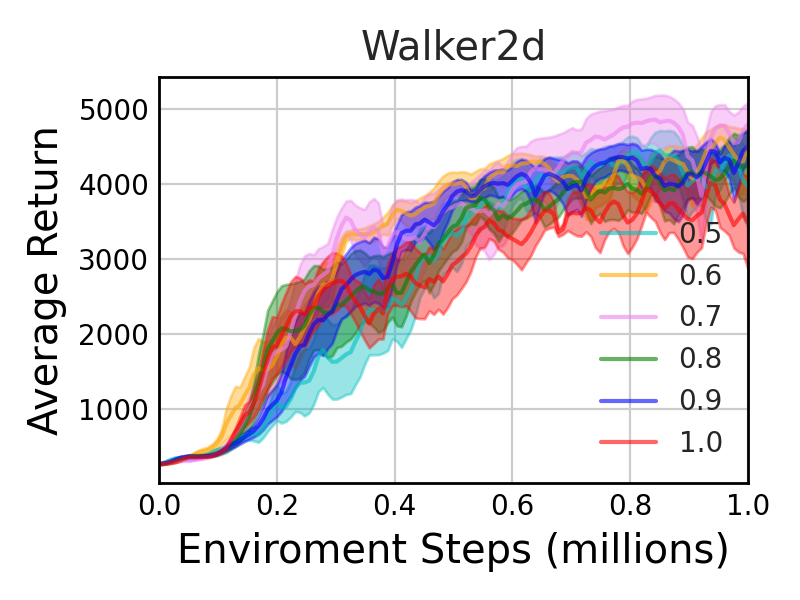}}
    \subfigure[$\nu$]{\label{fig:10c}\includegraphics[width=0.33\textwidth]{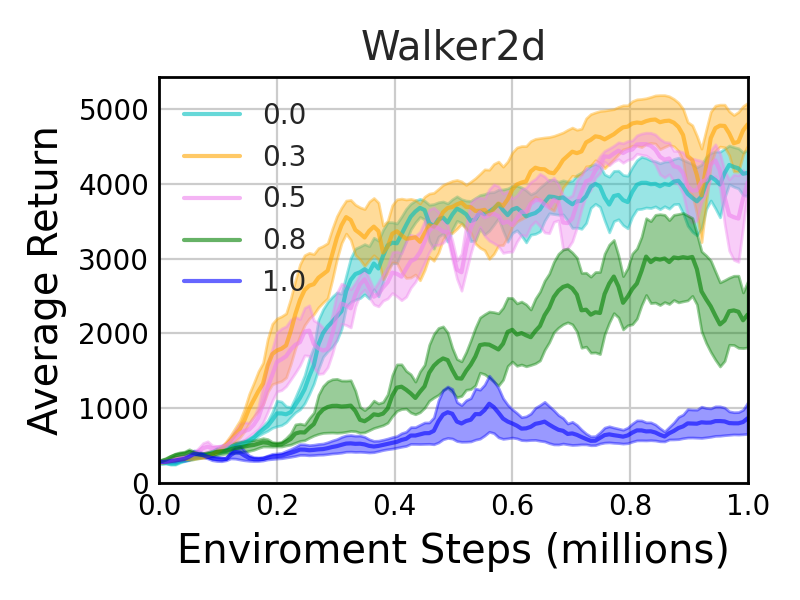}}    
  }\vfill
  \centerline{
     \subfigure[baseline updates]{\label{fig:10d}\includegraphics[width=0.33\textwidth]{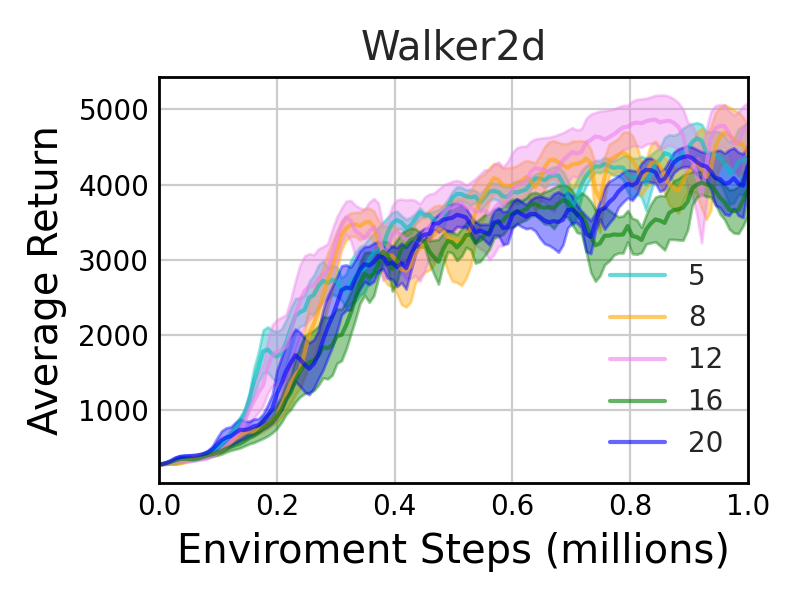}}
     \subfigure[num samples]{\label{fig:10e}\includegraphics[width=0.33\textwidth]{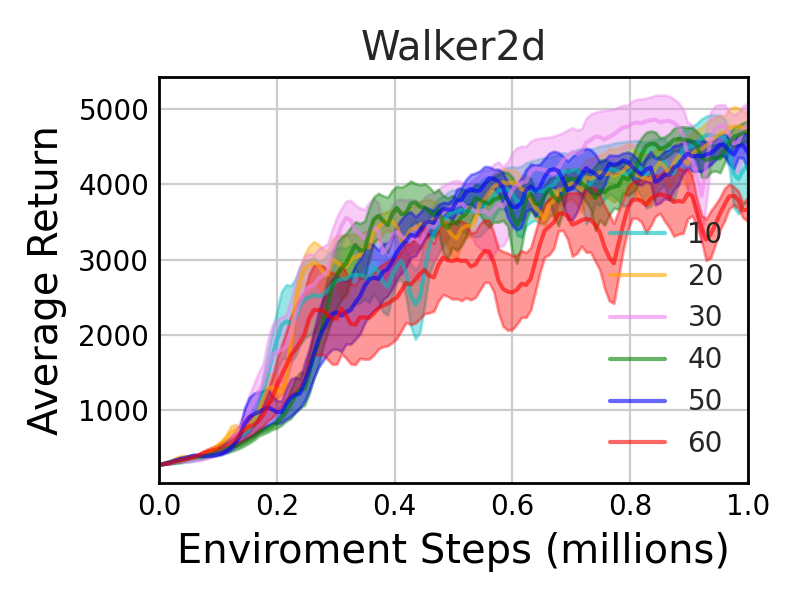}}
     \subfigure[critic samples]{\label{fig:10f}\includegraphics[width=0.33\textwidth]{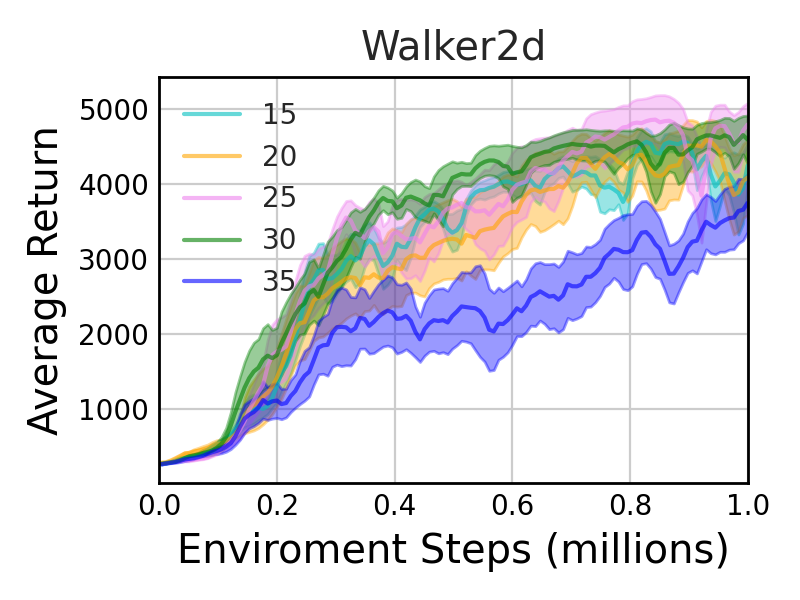}}
  }
  \caption{Varying levels of different parameters of DPO: \textbf{(a)} temperature $\alpha$, \textbf{(b)} interpolating parameter $\omega$, \textbf{(c)} advantage interpolating parameter $\nu$, \textbf{(d)} baseline updates, \textbf{(e)} num samples and \textbf{(f)} critic samples.}
\label{fig:10}
\end{center}
\vskip -0.2in
\end{figure}
\subsection{Validation}
\label{valid}
\begin{figure}[H]
\vskip 0.2in
\begin{center}
  \centerline{
    \subfigure[Absolute Residual]{\label{fig:11a}\includegraphics[width=0.45\textwidth]{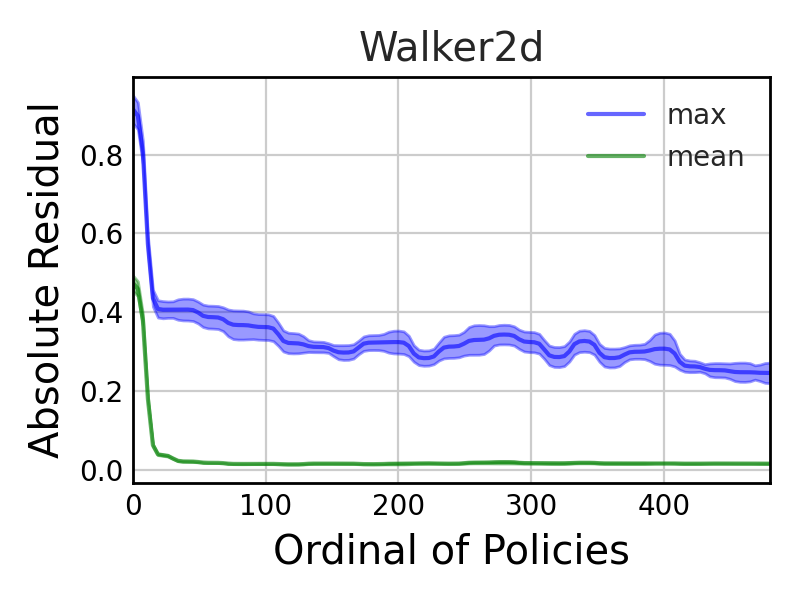}}
    \subfigure[Positive Advantage]{\label{fig:11b}\includegraphics[width=0.45\textwidth]{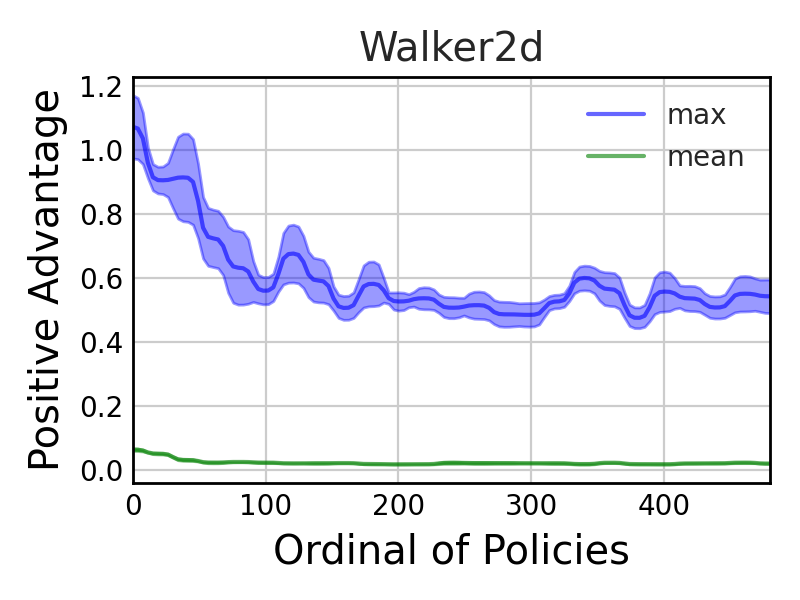}}
  }
\label{fig:11}
\end{center}
\vskip -0.2in
\end{figure}

\subsection{Wall Clock Time}
\begin{figure}[H]
\vskip 0.2in
\centering
\includegraphics[width=0.5\textwidth]{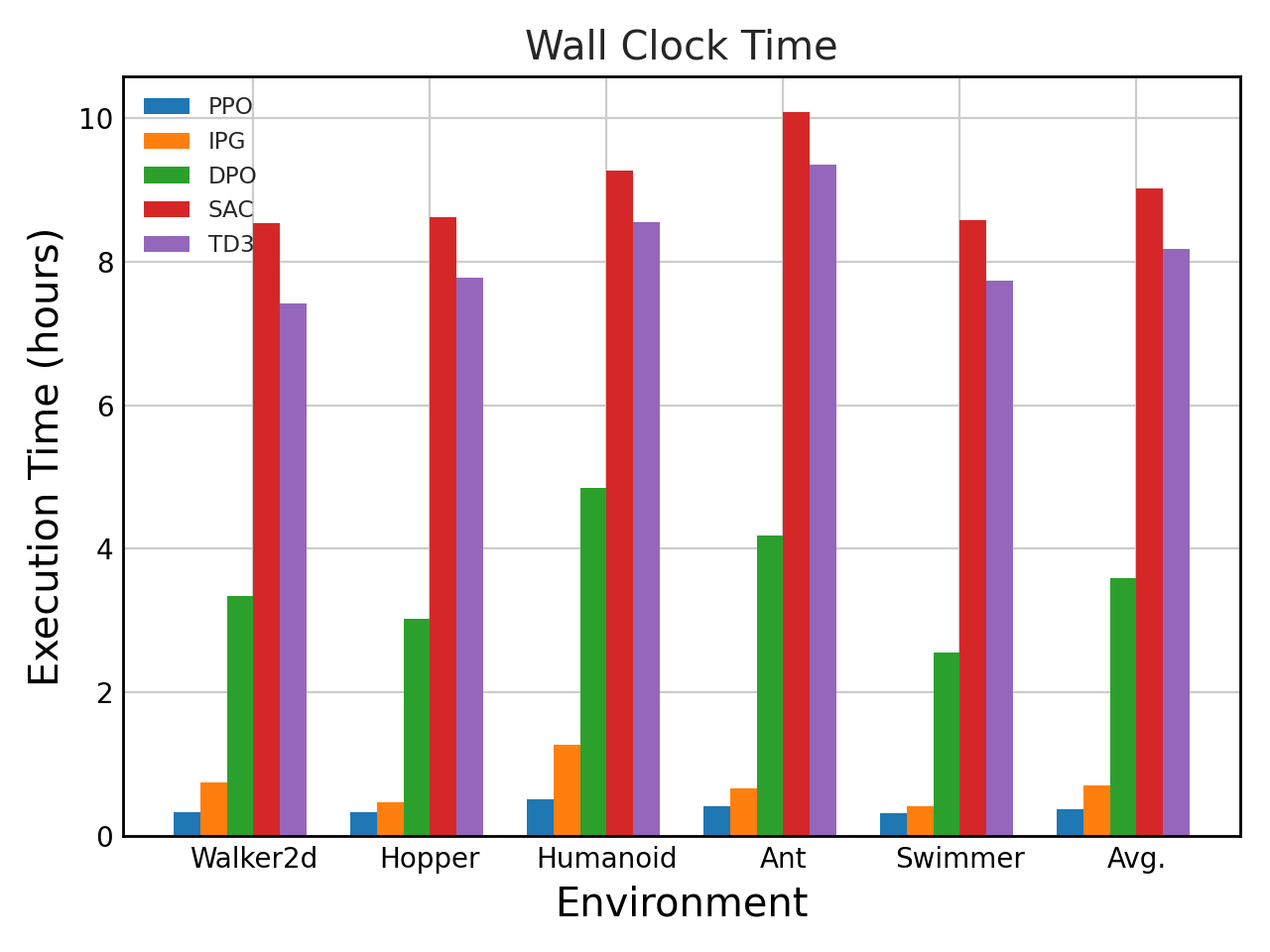}
\vskip -0.2in
\end{figure}

\section{A New Learning Paradigm}
\paragraph{General} DPO demonstrates robustness across various on-policy learners and the majority of hyperparameters.
\paragraph{Lightweight} DPO boasts an impressive reduction in training time, requiring only $4\%$ of the policy gradients when compared to off-policy algorithms like TD3 and SAC.
\paragraph{Parallelable} DPO can efficiently utilize multiple environments to collect samples like PPO, improving training efficiency and reducing training time.
\paragraph{Sample Efficient} DPO fully exploits two sources of data for both evaluation and control, greatly boosting the sample efficiency of on-policy algorithms.
\paragraph{Stable} DPO stands out for its stability, characterized by smoother policy changes and reduced variability in critic MSE loss, ensuring more reliable learning.

\FloatBarrier
\end{document}